\newtheorem{theorem}{Theorem}
\newtheorem{corollary}{Corollary}
\newenvironment{proof}{\par\noindent\textit{Proof.}\ }{\hfill$\square$\par}
\newtheorem{example}{Example}[section]
\newcommand{\R}{\mathbb{R}}
\DeclareMathOperator{\sspan}{span}
\newcommand{\revision}[1]{\textcolor{black}{#1}}
\newcommand{\nextrev}[1]{\textcolor{black}{#1}}
\newcommand{\laura}[1]{\textcolor{magenta}{\bf [{\em LB:} #1]}}
\definecolor{tianjiaogold}{rgb}{0.84, 0.62, 0.00}
\newcommand{\tianjiaogold}[1]{\textcolor{tianjiaogold}{#1}}
\definecolor{tianjiaored}{rgb}{0.71, 0.32, 0.30}
\newcommand{\tianjiaored}[1]{\textcolor{tianjiaored}{#1}}
\definecolor{tianjiaoblue}{rgb}{0.42, 0.56, 0.75}
\def\W{\bm{W}}
\def\z{\bm{z}}
\def\E{\mathbb{E}}
\def\Y{\bm{Y}}
\def\X{\bm{X}}
\def\Diag{\text{Diag}}
\def\Z{\bm{Z}}
\newcounter{Lcount}
\title{An Overview of Low-Rank Structures in the Training and Adaptation of Large Models}
\affiliation{
    \textsuperscript{1}University of Michigan \; $\cdot$ \; \textsuperscript{2}University of Pennsylvania \; $\cdot$ \; \textsuperscript{3}University of Macau  \; $\cdot$ \; \textsuperscript{4}UT Austin
}
\keywords{Foundations of Deep learning, Low-rank Structures, Learning Dynamics, Efficiency}
\date{\today}
\begin{document}

% 1. Generates the Blue Box
\makeDeepthinkHeader 
\begin{figure}[h]
\vspace{-0.6in}
\includegraphics[width=\linewidth]{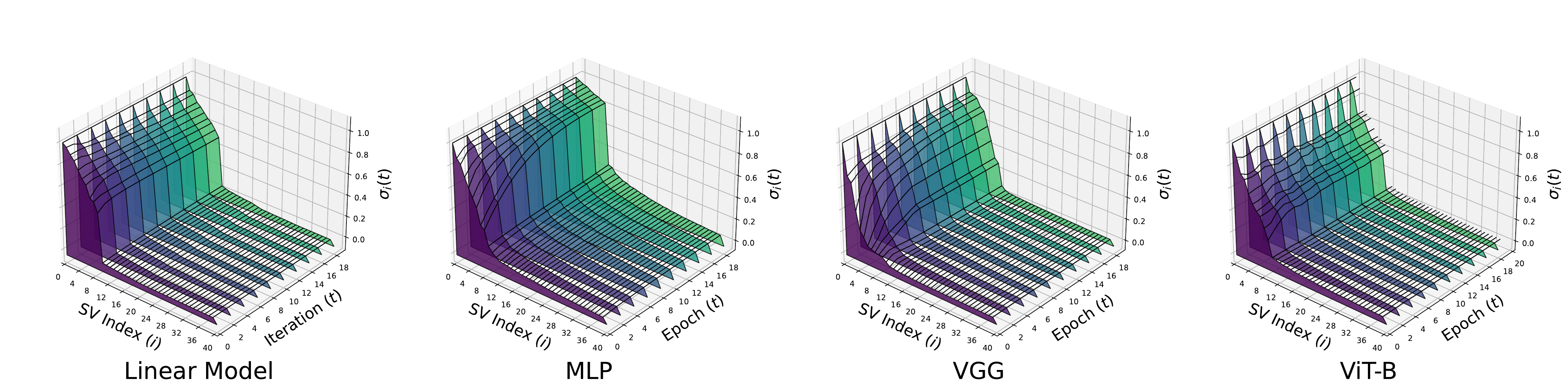}
     \caption{\textbf{Prevalence of low-rank weight updates in various deep networks.} Each plot visualizes the
     singular values of the weight updates from initialization for the penultimate layer weight matrix for different types of network architectures: deep linear network (DLN), multi-layer perception (MLP), VGG, and ViT-B. % The first two networks (i.e., DLN and MLP) are trained on MNIST, while the latter (i.e., VGG and ViT-B) are trained on CIFAR-10. 
     We show them with two views: one in 3D emphasizes the evolution of the singular values over iteration, and the other using color shows the singular values on log-axis giving a clearer picture of the low-rank structure in each setting.
     The linear network is trained on MNIST with mean square error loss. The MLP is trained on MNIST with cross-entropy loss.
     The VGG and ViT-B networks are both trained on CIFAR-10 with cross-entropy loss. 
     The result shows a prevalent phenomenon across linear and nonlinear networks -- gradient descent only updates a small portion of the singular values, while the others remain small and almost unchanged. Courtesy of \cite{kwon2023efficient}. \nextrev{A 2D figure with log y-axis can be found in Supplementary \Cref{app:additional-figures}.} }
    \label{fig:svals}
\end{figure}

\newpage

\tableofcontents

\newpage

\section{Introduction}\label{sec:intro}

The advent of deep learning and large-scale computing has immeasurably changed the ways we process, interpret, and predict data in signal processing and machine learning. However, training and deploying modern deep learning models demand substantial computational resources, raising concerns about exorbitant training costs, GPU shortages, and heightened energy consumption in the coming years. Additionally, our theoretical understanding of how deep learning works is limited compared to that of classical signal processing methods. Decades of research in signal processing and statistical inference have characterized the sample complexity of accurate parameter estimation across over- and under-determined settings, leveraging a variety of data structures and priors 
% removed elad2010sparse, - too many citations
\cite{procIEEE2010compressedsensing, wright2022high} (e.g., low-rank structure and sparsity).
However, for most state-of-the-art deep learning models, the number of parameters often far exceeds the available data samples, \revision{and explicit regularization or signal structure is not necessary to achieve impressive learning performance. This tutorial paper will survey recent work in implicit or emergent low-rank structure in deep learning, and how to leverage it for efficient algorithms while maintaining performance.}

%classical wisdom in signal processing and statistical inference suggests that the number of data samples we use for parameter estimation should be comparable or larger than the number of parameters in a model for the learning to be accurate and sample-efficient. 

% \begin{figure}[t]
% \includegraphics[width=\linewidth]{Figures/svals_all_2d.pdf}
%      \caption{\textbf{Prevalence of low-rank weight updates in various deep networks.} Each plot visualizes the
%      singular values of the weight updates from initialization for the penultimate layer weight matrix for different types of network architectures: deep linear network (DLN), multi-layer perception (MLP), VGG, and ViT-B. % The first two networks (i.e., DLN and MLP) are trained on MNIST, while the latter (i.e., VGG and ViT-B) are trained on CIFAR-10. 
%      The linear network is trained on MNIST with mean square error loss. The MLP is trained on MNIST with cross-entropy loss.
%      The VGG and ViT-B networks are both trained on CIFAR-10 with cross-entropy loss. 
%      The result shows a prevalent phenomenon across linear and nonlinear networks -- gradient descent only updates a small portion of the singular values, while the others remain small and almost unchanged. Figure courtesy of \cite{kwon2023efficient}. }
% \end{figure}

Within the general study of the mathematics of deep learning, several lines of research over the last decade have explored the emergence of low-dimensional structures during the training process, where basic elements such as weight matrices, adapters, gradients, and activations (i.e., outputs of certain layers of a neural network, sometimes also called features) %\pw{This is also referred to as "activations"?} %\footnote{Here, the representations of a neural network usually refer to the outputs of certain layers.} 
tend to be approximately low-rank even without explicit regularization. 
These low-dimensional structures arise in part due to the implicit bias of the methods used to train deep networks, % when training by gradient descent or when using masked training. %, possibly along with low-dimensional structure in data. 
%These results have the 
providing the potential to partially explain why deep models require fewer samples than the number of model parameters. %These prevalent phenomena of implicit low-dimensionality in deep learning models have 
Although implicit bias varies with the training algorithm and model architecture \cite{belkin2019reconciling,huh2023the,soudry_implicit_2018} (e.g., margin maximization, minimum-norm interpolation, and sparsity and spectral bias), our analysis centers on low-rank bias, which enables prominent applications in efficient training and fine-tuning.

This implicit low-dimensionality has inspired the exploration of low-rank structures for more efficient training and fine-tuning of large-scale deep learning models. For example, low-rank adaptation (LoRA) \cite{hu2021lora} adds a low-rank factorized update to pre-trained weight matrices for model fine-tuning, using significantly reduced computation and memory. It has gained significant attention in the past few years due to its impressive performance in fine-tuning large language models (LLMs), vision-language models, and deep generative models for vision. %\td{I feel it could help to be a bit more informative as to what's the gain of LoRA on a high level. This also allows the reader to understand the benefit later on for using low-rank factors in training in DeepSeek. As a side point, LoRA seems to be used for most large models beyond LLMs} \td{Proposed change: For example, low-rank adaptation (LoRA) \cite{hu2021lora} parameterizes update to the weight matrices only in a low-rank factorized form, which allows for finetuning models with a huge number of parameters using much-reduced computation and memory; it has been widely adopted in finetuning {\color{gray} either write} large langauge models, vision-language models, and image generation models {\color{gray} or write} modern large models in computer vision and natural langauge processing.}
%\bdh{The following sentence reads too detailed to me at this point and also lacks the punchline of how LoRA lowers this to $O(\cdot)$} 
%Concretely, if a deep learning model has $L$ layers each with $d \times d$ unstructured weight matrices for large $d$, then the memory requirement is $O(Ld^2)$ and the computation for one pass of gradient descent with $n$ data points is $O(Ld^2n)$.
Inspired by this success, deep learning practitioners further explicitly factorize weight matrices into low-rank factors for post-training compression, as well as train low-rank models from scratch, with promising results. 
% For example, \td{We used for example right above. Maybe for instance?}
For instance, recent advances made by the DeepSeek-V3 model \cite{liu2024deepseek} have achieved impressive language generation results with significantly compressed model sizes, in part by using a low-rank factorization of the query matrices in the multi-head attention during training and inference. 
In this paper, we review recent exciting advances and aim to clarify the mathematical foundations underlying their design. Specifically, we highlight key insights from a rich line of research focused on theoretically understanding and leveraging low-rank structures in deep learning. 

\begin{figure}[t]
    \centering
    % --- Figure ---
    \includegraphics[width=0.88\linewidth]{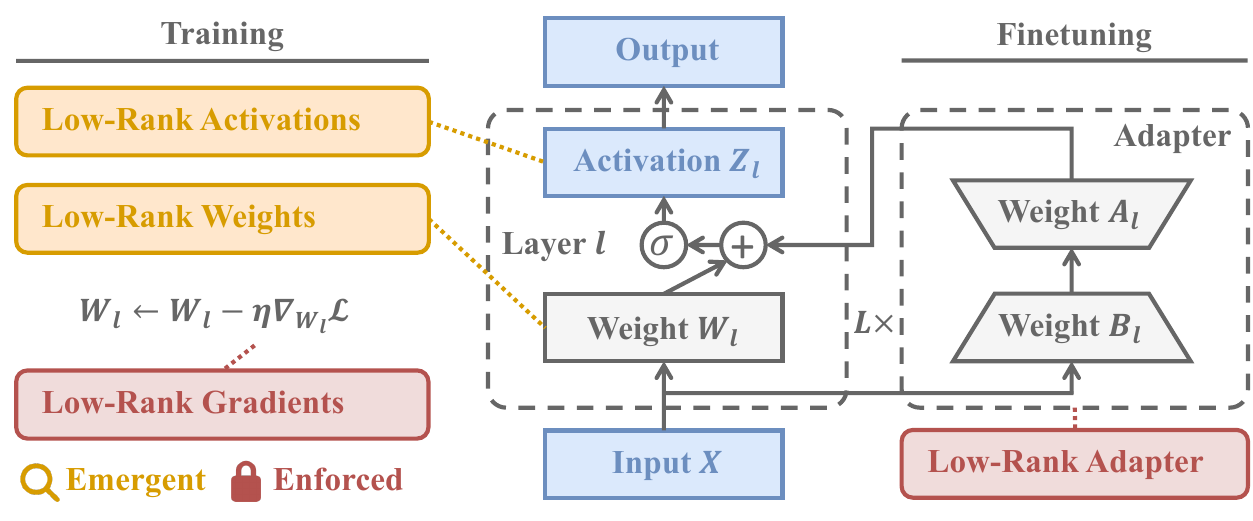}
    \caption{\textbf{Top}: We focus on low-rank structure found in the \textbf{Weights, Adapters, Gradients, and Activations} of deep networks. \textbf{Bottom}: Organization of the paper.}
    \label{fig:nomenclature_organization}

    % --- Table ---
    \vspace{1em} % adjust space between figure and table
    \begin{tabular}{lll}
    \hline
    \S \ref{sec:background} &  & Optimization Basics of Deep Networks and Low-Rank Structures \\
     & \S \ref{subsec:deep_networks}, \ref{subsec:low_rank_bg} & \quad Deep networks; low-rank data matrix and its factorization \\
     & \S \ref{subsec:low_rank_bg} & \quad Example of enforced low-rank structure in matrix factorization \\
     & \S \ref{sec:imp_struct} & \quad Example of emergent structure in gradient dynamics \\ \hline
    \S \ref{sec:LRtrain} &  & Low-Rank Structures at \textit{Every Iteration} of Gradient Methods \\
     & \S \ref{sec:train_dyn} & \quad \tianjiaogold{Emergent low-rank layer weights} during training \\
     & \S \ref{sec:lora} & \quad \tianjiaored{Enforced low-rank adapter weights} for efficient finetuning \\
     & \S \ref{sec:low-rank-training} & \quad \tianjiaored{Enforced low-rank gradients} for efficient training \\ \hline
    \S \ref{sec:LRobj} &  & Low-Rank Structures at \textit{Convergence} of Gradient Methods \\
     & \S \ref{sec:var_form}, \ref{sec:low_rank_imp_reg} & \quad \tianjiaogold{Emergent low-rank activations} from $l_2$ regularization \\
     & \S \ref{sec:mask_train} & \quad \tianjiaogold{Emergent low-rank activations} from masked training \\ \hline
    \end{tabular}
    % \caption{Structure of the paper.}
    \label{tab:org-table}
\end{figure}
\noindent 
\revision{{\bf Different notions of low-rank structure.}  In deep learning, the term “low-rank structure” refers to several distinct phenomena, each arising from different aspects of model design, training dynamics, or adaptation strategies. In this paper, we mainly focus on four places where low-rank structures arise in deep neural networks: 
\begin{itemize}[nosep]
    \item {\em Weights}: Weight matrices in neural networks are low-rank. % either {\em explicitly} enforced through architectural or regularization choices, or {\em implicitly} emerging during training. 
    \item {\em Adapters}: Fine-tuning adaptation methods restrict parameter updates to low-rank matrices. % which are {\em explicitly} enforced. 
    \item {\em Gradients}: The gradients with respect to weight matrices are low-rank during training. % which can either {\em explicitly} enforced or {\em implicitly} emerging during training. 
    \item {\em Activations}\footnote{A neural network’s activation in a certain layer is the output after the nonlinear activation function for a given input. In comparison, representation is the information encoded by patterns of activations—built from activations but describing higher-level structure/content across units or layers.}: The output of certain layers is low-rank.
\end{itemize}
Together, these highlight complementary perspectives on how low-rank structure manifests in practice, from parameterization to optimization and representation. In \Cref{fig:nomenclature_organization}, we show an overview figure identifying each of these four structures in a canonical deep network layer. Below that, we list the outline of the paper, where we cover a variety of topics on emergent and enforced low-rank structure in training deep neural networks. }% highlighting their characteristics, how they arise, and their implications for improving training efficiency.

%\todo{ADD FIGURE OR TABLE}
%\pw{Since we have introduced different low-rank structures, we should point out which low-rank structures are used in the later sections.}
% \begin{table}[!htbp]
% \centering
% \caption{Different low-rank structures in deep neural networks}
% \label{tab:low rank}
% \begin{tabular}{@{}p{3cm} p{2.5cm} p{5cm} p{4.5cm} @{}}
% \toprule
% \textbf{Category} & {\bf Sections} & \textbf{Description}  & \textbf{Notes}  \\ \midrule
% Low-rank weights & \Cref{subsec:low_rank_bg} \& \ref{sec:train_dyn} & Weight matrices in trained neural networks are low-rank. & This can be {\em explicitly} enforced or emerge {\em implicitly} during training. \\ \midrule
% Low-rank adaptation & \Cref{sec:lora}  &  Fine-tuning methods constrain parameter updates to low-rank.  & This is {\em explicitly} enforced. \\ \midrule
% Low-rank gradient/training & \Cref{sec:low-rank-training} &  Gradients during training are low-rank. &  This is {\em explicitly} enforced.  \\ \midrule
% Low-rank activations &  \Cref{sec:LRobj} & The output of certain layers of neural networks is low-rank. & This emerges {\em implicitly} during training. 
% % Other low-rank structures &  \cite{xx} & Low-rank structures in general architectures like Transformers  &  This can be {\em explicitly} enforced. % or emerge {\em implicitly} during training.  
% \\ \midrule
% \end{tabular}
% \end{table}

\smallskip 
\noindent \revision{
{\bf Theoretical study of Deep Linear and Deep Nonlinear Networks.} This tutorial manuscript is for a special issue on the mathematics of deep learning. The mathematical theory of how low-rank structure emerges in the training and optimization of deep networks is novel and intriguing, yet it is almost entirely limited to the setting of deep linear networks. On one hand, the empirical evidence for low-rank structure in deep nonlinear networks is prevalent; \Cref{fig:svals} shows the singular values for a linear network and three commonly used nonlinear networks, illustrating low-rank structure in the penultimate layer. 
%Prior work supports this observation~\cite{wang2023understanding, fang2021exploring}, suggesting that deeper layers tend to have simpler, and thus more low-rank, structures than shallower ones.
The theory for deep linear networks has led to some practical improvements, but translating it to the nonlinear setting can at times require nontrivial adjustment. 
A complete translation of the theory of emergent low-rank structures to deep nonlinear networks has yet to be developed. In our tutorial, we give the necessary background to understand important results on deep linear networks for low-rank weights during gradient training (\Cref{sec:train_dyn}) and low-rank bias with masked training (\Cref{sec:mask_train}). 
Other sections discuss how to leverage that structure algorithmically in adaptation (\Cref{sec:lora}) and training (\Cref{sec:low-rank-training}). 
Throughout the tutorial, we support both the theory and algorithmic developments with several empirical results mostly on nonlinear networks. In \Cref{sec:open} we discuss the challenges and opportunities of extending the mathematical theory to nonlinear networks.}

\smallskip 
\noindent \textbf{Structure in optimization dynamics vs. implicit regularization of the objective.} 
We will examine low-rank structures that emerge during the training of deep networks. While these structures have been explored extensively in the literature, we adopt two distinct perspectives. The first perspective focuses on the low-rank structure present throughout the iterations of the learning dynamics. The second perspective investigates how the optimization objective function itself implicitly induces such structures.
%We will discuss low-rank structures that arise in the training process for deep networks. Although these have been studied in a variety of ways in the literature, we take two main perspectives. The first perspective considers a structure that exists at every step of an iterative optimization algorithm; the second perspective considers an objective function that induces structure implicitly.
%\bdh{I'm tempted to call these something like ``Structure per Iterate'' and ``Structure at the Global Minimum'' or something similar.  Results in the ``Objective'' section depend on optimization dynamics, and results in the ``Optimization Dynamics'' section depend on the choice of objective. Also, just in terms of balance should LoRA be it's own high-level section? Right now section 1 is about 10 pages and section 2 is about 5 pages.}

\begin{itemize}[leftmargin=*]
    \item Structure in Optimization Dynamics (\Cref{sec:LRtrain}): From this perspective, we aim to show that the dynamics of some iterative optimization algorithm applied to a particular problem have a given structure at every iteration. This structure could then be exploited throughout the entire optimization process to reduce the computation and memory cost; see \Cref{fig:svals} for an illustration.
    \item Structure at Convergence (\Cref{sec:LRobj}): From this perspective, we aim to show that while a given objective function may not have directly been designed to impose a particular structure through regularization, still the final solution may be constrained to have low-rank structure. As illustrated in \Cref{fig:masked-training-vary-iterations}, we are able to show the equivalence of the given objective to another regularized objective function, and characterize the structure of the solution at convergence of the algorithm, as opposed to at every iteration. 
\end{itemize}

\begin{figure}[t]
\centering
\begin{subfigure}{0.30\textwidth}
    \centering
    \includegraphics[width=\linewidth, trim={0.3cm, 0.4cm, 0.1cm, 0.3cm}, clip]{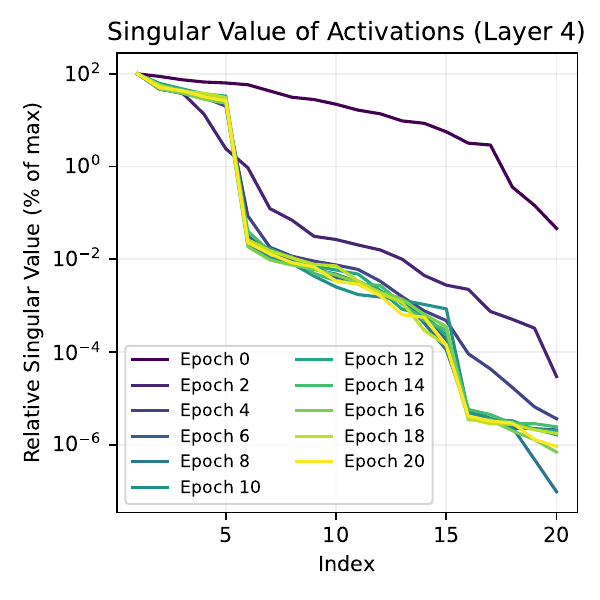}
    \caption{DLN}
    \label{fig:masked-dln-vary-iter}
\end{subfigure}
\hfill
\begin{subfigure}{0.30\textwidth}
    \centering
    \includegraphics[width=\linewidth, trim={0.3cm, 0.4cm, 0.1cm, 0.3cm}, clip]{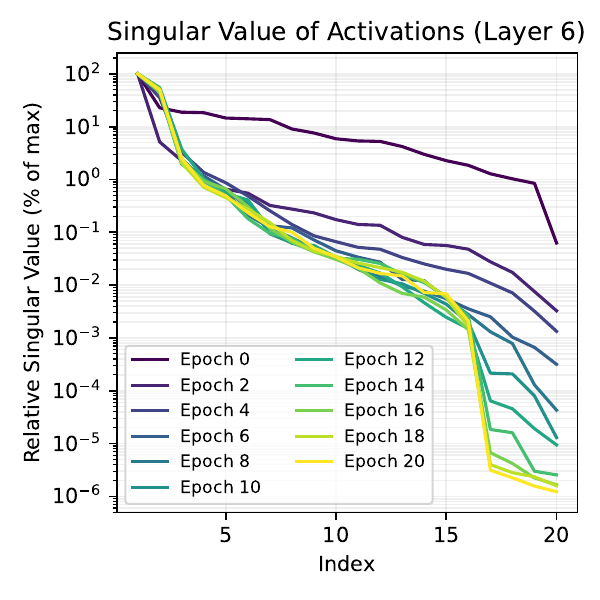}
    \caption{MLP}
    \label{fig:masked-mlp-vary-iter}
\end{subfigure}
\hfill
\begin{subfigure}{0.30\textwidth}
    \centering
    \includegraphics[width=\linewidth, trim={0.3cm, 0.4cm, 0.1cm, 0.3cm}, clip]{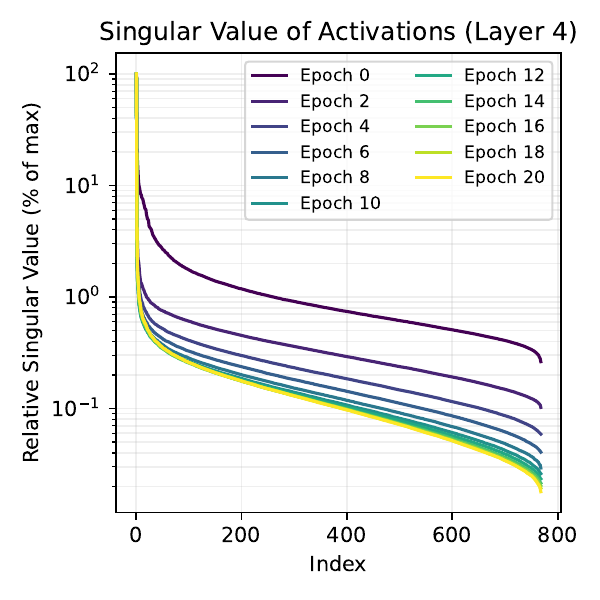}
    \caption{ResNet}
    \label{fig:masked-resnet-vary-iter}
\end{subfigure}
\vspace{-0.3cm}
\caption{\textbf{Singular values of activations of the last layer in deep architectures trained with dropout rate $60\%$, as a function of training epoch. } Deep Linear Network (DLN) and Multi-Layer Perceptron (MLP) are trained on synthetic data with MSE loss, while ResNet is trained on CIFAR-10 dataset of natural images with cross-entropy loss. Notably, the activations of the layer gradually become low-rank as masked training proceeds.} 
\label{fig:masked-training-vary-iterations}
\end{figure}

\section{Background}\label{sec:background}

%\pw{I am working on this section.}
% \td{As per the discussion in the Feb 7 meeting, we can start from learning first, and just define $f_\theta$ along the way, so we are commenting out this paragraph for now.} 
\subsection{Deep Artificial Neural Networks}\label{subsec:deep_networks}

% \bdh{Do we need to be this basic?  I realize it's a tutorial, but this seems to be too basic.  If we run out of space, I think most of this paragraph could go.} \laura{I agree and removed it -- I think I was just warming up}
In this section, we formally introduce the basic setup of deep artificial neural networks. 
%An artificial neural network is a computational model consisting of layers of interconnected computational nodes, called ``neurons'', that learn patterns from data through weighted connections and activation functions. Originally, the design of artificial neural networks was inspired by the human brain, where neurons transmit a signal only once an activation level is reached based on incoming signals from other neurons. Similarly, in artificial neural networks, activation functions determine whether a neural should be activated based on its weighted inputs. Today, they are often used as powerful tools to model complex nonlinear prediction functions that data as input and attempt to predict some output, e.g. an associated label. % Today, they are used as a way to model complex nonlinear prediction functions that take data as input and attempt to predict some output, e.g. an associated label. 
%
Mathematically, an artificial neural network can be represented as a function $f_{\bm \Theta}:\mathbb{R}^d \to \mathbb{R}^k$ that maps input data to an output space, where $\bm \Theta$ denotes the network or function parameters.
Those parameters are set through the learning or training process. 
Suppose that we have training data samples $\{(\bm x_{i}, \bm y_i)\}_{i=1}^n \subseteq \R^d \times \R^k$, where $n$ denotes the number of training samples, $\bm x_{i} \in \R^d$ denotes the $i$-th input, and $\bm y_i \in \R^k$ is an associated label or target output prediction. To learn the network parameters, one can minimize the following empirical risk over the training samples: 
%\laura{Why do we regularize here? this isn't very standard, though there are often heuristic steps or other implicit bias that enforces it; though I guess we can take $\lambda=0$; but how about using an arbitrary reg function then? Will we ever use a non-Frobenius norm regularization in this tutorial?}
\begin{align}
\label{eq:mainformulation}
        \min_{\bm \Theta} F(\bm \Theta) :=  \sum_{i=1}^{n} \mathcal{L}(f_{\bm \Theta}(\bm x_{i}), \bm y_i) + \lambda \mathcal{R}(\bm \Theta), 
\end{align}
where $\mathcal{L}:\R^k\times \R^k \to \R_+$ is the training loss function, and \revision{$\mathcal{R}: \R^{|\bm \Theta|} \to \R$} and $\lambda \ge 0$ denote the regularization function and regularization strength, respectively. Common loss functions for 
$\mathcal L$ include mean-squared error (MSE), cross-entropy (CE), and label smoothing. Much of the deep learning literature focuses on training without explicit regularization (i.e., $\lambda =0$). Nonetheless, the inherent dynamics of training can still induce implicit regularization, often giving rise to low-rank structures in network parameters, as we will discuss later.

%It is important to note that much of the deep learning literature often considers the training loss with no regularization, i.e., $\lambda=0$. Nevertheless, the implicit regularization present in the training dynamics can still encourage the emergence of low-rank structures in the network parameters, as we shall discuss later in this article. 
%\bdh{Mentioning implicit regularization here seems out of place.  We begin the section very very basically `what is a neural network' and then toss out implicit regularization without mentioning what the idea of implicit regularization is.} \laura{I think the author's point was to foreshadow, let me know if the added phrase ``as we will discuss'' helps.}

\begin{example}[\bf Multilayer Perceptron.] A simple yet fundamental type of artificial neural network is the multilayer perceptron (MLP), which is a feedforward neural network composed of multiple layers of neurons and can be mathematically represented as a function $f_{\bm \Theta}:\mathbb{R}^d \to \mathbb{R}^k$ of the following form: 
\begin{equation}
    \label{eq:MLPf}
f_{\bm \Theta}(\bm x) = \bm W_L \sigma\left( \bm W_{L-1} \sigma \left( \cdots \sigma (\bm W_1 \bm x) \right) \right)\;,
\end{equation}
where $\bm W_l \in \mathbb{R}^{d_l\times d_{l-1}}$ denotes the weight matrix of the $l$-th layer for each $l \in [L]$ with $d_0=d$ and $d_L = k$, $\sigma(\cdot)$ is typically\footnote{Some MLP architectures have non-element-wise nonlinearities, like layer normalization and max or average pooling.} an element-wise activation function, such as ReLU, sigmoid, or tanh function \cite{parhi2023deep}, and 
%$\bm \Theta = \{(\bm W_l,\bm b_l)\}$ 
$\bm \Theta = \{\bm W_l\}_{l=1}^L$ denotes the collection of all trainable network parameters. % In this case, those parameters are matrices of weights at each layer of the function. A deep MLP has large $L$, and a shallow one has small $L$; a $L=2$-layer MLP is a very common unit within more complex architectures. 
A deep linear network has this form with $\sigma(\cdot)$ as the identity function, i.e., % \qq{I feel this is a special case of the above with linear activation?} \laura{yes exactly!}
\begin{equation}\label{eq:DLNf}
f_{\bm \Theta}(\bm x) = \bm W_L\bm W_{L-1}\ldots\bm W_1\bm x. 
\end{equation}
%It is worth noting that $f_{\bm \Theta}(\cdot)$ is a \revision{linear mapping end-to-end} from $\bm x \in \R^d$ to some output in $\R^k$. 
\end{example}

One might ask why learn a deep linear network instead of a single-layer linear mapping, given both represent linear transformations. The reason is that research shows deep, overparameterized linear networks offer two key benefits: (i) implicit low-rank bias that emerges during the training process and (ii) improved loss landscape that allows for fast convergence. 
Below in \Cref{sec:train_dyn} we will discuss the first point. Moreover, while deep linear networks are primarily of theoretical interest, they have been recognized as valuable prototypes for investigating nonlinear networks due to their simplicity and the resemblance of certain phenomena in their nonlinear counterparts. 
For example, \cite{saxe2019mathematical} demonstrated that deep linear networks exhibit a striking hierarchical progressive differentiation of structures in the internal hidden representations, resembling patterns observed in their nonlinear counterparts. To shed light on how deep networks transform inputs into outputs, \cite{wang2023understanding} demonstrated that deep linear networks exhibit layerwise feature compression and discrimination, mirroring the behavior of deep nonlinear networks. Building on these insights, recent research is actively developing tools to extend theoretical findings from linear networks to nonlinear ones, advancing our understanding of underlying principles of deep learning. %\qq{I feel this point might be only of theoretical interest? Maybe we could mention something on the link between linear and nonlinear networks} 

\subsection{Low-Rank Structure in Data Matrices}\label{subsec:low_rank_bg}
%In this section, we give background on low-rank structure in data matrices.Low-rank structures in data matrices or tensors has been studied in a vast number of fields of study, including research topics important in signal processing such as signal approximation theory, communications systems, the direction of arrival estimation, structure-from-motion and object representation in computer vision, signal reconstruction in medical imaging and more. 

This section provides background on low-rank structures in data matrices, a concept widely studied across fields such as signal processing, communications, computer vision, and medical imaging, with applications in signal approximation, direction of arrival estimation, structure-from-motion, and signal reconstruction.
Low-rank structure can take many forms, but we focus on the most basic: a linear low-dimensional subspace. When matrix columns lie on (or near) such a subspace, the matrix is exactly (or approximately) low-rank. Other low-dimensional structures, which we do not cover in this tutorial, include unions of low-rank subspaces, low-dimensional manifolds, and algebraic varieties.

%Although the low-rank structure can take many forms, the most basic is simply a linear low-dimensional subspace, which is the focus of this tutorial. If matrix columns are on (or near) such a subspace, the matrix will be exactly (or approximately) low-rank. Other low-dimensional structures, that we do not cover in this tutorial, include union of low-rank subspaces, low-dimensional manifolds, and low-dimensional algebraic varieties. 

A key foundational theorem 
connecting low-rank factorizations to low-rank approximations of matrices in unitarily invariant norms is as follows.
\begin{theorem}\label{thm:EY}
    Suppose that $\bm \Phi \in \R^{k\times d}$ \revision{has} a singular value decomposition (SVD) $\bm \Phi = \bm U \bm \Sigma \bm V^\top$ with $\bm \Sigma$ being a diagonal matrix of singular values $\sigma_1 \geq \sigma_2 \dots \geq \sigma_{\min(k,d)} > 0$. Consider the following optimization problem:
    \begin{equation}
        \label{eq:eckartyoung}
        \min_{\substack{\bm W_1 \in \R^{r \times d}, \bm W_2 \in \R^{k \times r}}} \| \bm W_2 \bm W_1 - \bm \Phi \|_F^2,
    \end{equation}
 %   \bdh{should this be $\bm W_2 \bm W_1$ to be consistent with the numbering convention with the rest of the paper?}
    where $r \le \min\{k,d\}$. Any global minimizer $(\bm W_1, \bm W_2)$ satisfies $\bm W_2 \bm W_1 = \bm U_r \bm \Sigma_r \bm V_r^\top$, where $\bm \Sigma_r$ is a diagonal matrix holding the $r$ largest magnitude singular values of $\bm \Phi$, and $\bm U_r$, $\bm V_r$ hold the corresponding singular vectors\footnote{We note for completeness that if the $r$ and $(r+1)$-st singular values are equal, then these subspaces $\bm U_r, \bm V_r$ are not unique. The singular vectors corresponding to any repeated singular values are unique up to the subspace they span.}. 
    Moreover, the resulting objective value is given by $\|\bm W_2 \bm W_1 - \bm \Phi\|_F^2 = \sum_{i=r+1}^{\min(k,d)} \sigma_i^2$.
\end{theorem}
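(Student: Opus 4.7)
The plan is to recognize this as the classical Eckart--Young--Mirsky theorem and prove it in three steps, plus a uniqueness analysis.

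First, I would reduce \eqref{eq:eckartyoung} to a rank-constrained matrix approximation problem. The set of products $\{\bm W_2 \bm W_1 : \bm W_1 \in \R^{r \times d},\, \bm W_2 \in \R^{k \times r}\}$ coincides with $\{\bm M \in \R^{k \times d} : \rank(\bm M) \le r\}$. The containment ``$\subseteq$'' follows since $\rank(\bm W_2 \bm W_1) \le \min\{\rank(\bm W_1), \rank(\bm W_2)\} \le r$, while ``$\supseteq$'' follows by factoring any such $\bm M$ through its thin SVD and padding with zeros. Thus \eqref{eq:eckartyoung} becomes $\min_{\rank(\bm M) \le r} \|\bm M - \bm \Phi\|_F^2$.

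Second, I would diagonalize the problem using unitary invariance of the Frobenius norm. Setting $\bm N := \bm U^\top \bm M \bm V$, we have $\|\bm M - \bm \Phi\|_F^2 = \|\bm N - \bm \Sigma\|_F^2$ with $\rank(\bm N) = \rank(\bm M) \le r$. So it suffices to prove that $\min_{\rank(\bm N) \le r} \|\bm N - \bm \Sigma\|_F^2 = \sum_{i=r+1}^{\min(k,d)} \sigma_i^2$ and to identify the minimizers.

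Third, I would establish matching upper and lower bounds. The upper bound comes from the explicit choice $\bm N = \Diag(\sigma_1, \ldots, \sigma_r, 0, \ldots, 0)$, which yields exactly $\|\bm N - \bm \Sigma\|_F^2 = \sum_{i=r+1}^{\min(k,d)} \sigma_i^2$, and, unwinding $\bm M = \bm U \bm N \bm V^\top$, gives $\bm W_2 \bm W_1 = \bm U_r \bm \Sigma_r \bm V_r^\top$ as claimed. The lower bound follows from Weyl's singular value inequality $\sigma_{i+j-1}(\bm A + \bm B) \le \sigma_i(\bm A) + \sigma_j(\bm B)$: with $\bm A = \bm N$, $\bm B = \bm \Sigma - \bm N$, and $i = r+1$ (so that $\sigma_{r+1}(\bm N) = 0$), one obtains $\sigma_{r+j}(\bm \Sigma) \le \sigma_j(\bm \Sigma - \bm N)$; squaring and summing over $j$ yields $\|\bm \Sigma - \bm N\|_F^2 \ge \sum_{i=r+1}^{\min(k,d)} \sigma_i^2$.

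The main obstacle will be characterizing \emph{every} global minimizer (not just exhibiting one), which is what justifies the footnote on non-uniqueness. I would handle this by tracking the equality cases of the Weyl inequalities above: equality in all of them forces the top-$r$ singular directions of $\bm N$ to align with those of $\bm \Sigma$ and the corresponding singular values to match, so that $\bm N - \bm \Sigma$ retains only the trailing spectrum of $\bm \Sigma$. This pins down $\bm W_2 \bm W_1 = \bm U_r \bm \Sigma_r \bm V_r^\top$ whenever $\sigma_r > \sigma_{r+1}$; when $\sigma_r = \sigma_{r+1}$, the singular subspaces $\bm U_r, \bm V_r$ are only determined up to rotation within the repeated eigenspace, which is precisely the ambiguity flagged in the footnote. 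Note that the factors $(\bm W_1, \bm W_2)$ themselves are of course never unique, since $(\bm W_1, \bm W_2)$ and $(\bm R \bm W_1, \bm W_2 \bm R^{-1})$ yield the same product for any invertible $\bm R \in \R^{r \times r}$; the statement only asserts uniqueness (up to the above caveat) of the product $\bm W_2 \bm W_1$.
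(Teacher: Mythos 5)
Your plan is the standard Eckart--Young--Mirsky argument, and it is essentially sound; note, however, that the paper itself does not prove \Cref{thm:EY} at all --- it simply cites the classical literature (Schmidt, Eckart--Young, Mirsky, via the historical survey in \cite{stewart1993early}) --- so there is no in-paper proof to match against. Your reduction of \Cref{eq:eckartyoung} to $\min_{\rank(\bm M)\le r}\|\bm M-\bm\Phi\|_F^2$, the unitary-invariance diagonalization, the upper bound via the truncated SVD, and the lower bound via Weyl's inequality $\sigma_{r+j}(\bm\Phi)\le\sigma_{r+1}(\bm M)+\sigma_j(\bm\Phi-\bm M)$ are all correct and give the optimal value $\sum_{i=r+1}^{\min(k,d)}\sigma_i^2$. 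The one place where your sketch is thinner than you acknowledge is the characterization of \emph{every} minimizer: equality in the Weyl bounds only pins down the singular \emph{values} of the residual $\bm\Phi-\bm W_2\bm W_1$, and by itself this does not force the residual's singular \emph{subspaces} to align with the trailing ones of $\bm\Phi$; you must combine it with the constraint $\rank(\bm W_2\bm W_1)\le r$ to rule out rotated residuals, and that step needs an explicit argument. A cleaner route to the full equality case is von Neumann's trace inequality, $\langle \bm M,\bm\Phi\rangle\le\sum_i\sigma_i(\bm M)\sigma_i(\bm\Phi)$ with equality iff the two matrices admit a simultaneous ordered SVD: expanding $\|\bm M-\bm\Phi\|_F^2$ and optimizing over the singular values of $\bm M$ immediately yields both the optimal value and the fact that any minimizer shares the top-$r$ singular pairs of $\bm\Phi$, which also delivers the footnote's caveat about repeated singular values (and your remark that only the product $\bm W_2\bm W_1$, not the factors, is determined is correct).
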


This result was proved multiple times and is attributed to Schmidt (1907) or Eckart and Young (1936), followed by Mirsky (1960) proving that this minimizer holds for any unitarily invariant norm \cite{stewart1993early}. A key insight from decades of research is that, despite nonconvexity, many iterative optimization algorithms \revision{for solving \Cref{eq:eckartyoung} converge to its minimizer} under mild conditions. While SVD provides a solution, it is less adaptable to related but distinct problem formulations.

%The key insight of recent decades of work studying this optimization problem is that despite being nonconvex in the optimization variables, many iterative optimization algorithms will converge to this minimizer under mild conditions. Of course, one may also compute a solution with the SVD, but the SVD cannot as easily be applied to problems with related but distinct formulations. 

To connect low-rank structures to deep networks, suppose that we stack our data $\{(\bm x_i, \bm y_i) \}_{i=1}^n \subseteq \R^d \times \R^k$ into matrices: 
    $$\bm X = \begin{bmatrix} \bm x_1 & \bm x_2 & \cdots & \bm x_n \end{bmatrix} \in \R^{d\times n}, \quad \bm Y = \begin{bmatrix} \bm y_1 & \bm y_2 & \cdots & \bm y_n \end{bmatrix} \in \R^{k\times n}\;.$$ 
Using this data matrix and plugging the form of a deep linear network in \Cref{eq:DLNf} into \Cref{eq:mainformulation}, together with the $l_2$ norm as the loss function and no regularizer, yields 
\begin{equation}
\left\|\bm W_L \cdots \bm W_1 \bm X - \bm Y \right\|_F^2\;.
\label{eq:deepfactorXY}
\end{equation}
If we whiten the data by right multiplying $\bm X, \bm Y$ with $ \bm X^\top(\bm X \bm X^\top)^{-1}$, we have 
\begin{equation}
    \label{eq:deepfactor1}
\left\|\bm W_L \cdots \bm W_1 - \bm Y \bm X^\top(\bm X \bm X^\top)^{-1}\right\|_F^2 \;.
\end{equation}
Many papers in this area then consider \revision{
\begin{equation}
\left\|\bm W_L \cdots \bm W_1 - \bm \Phi \right\|_F^2\;,
%\left\|\bm W_L \cdots \bm W_1 \bm X - \bm \Phi \right\|_F^2\;,
\label{eq:deepfactorPhi}
\end{equation}
where \Cref{eq:deepfactor1} is a special case with} $\bm \Phi := \bm Y \bm X^\top(\bm X \bm X^\top)^{-1}$, giving the unconstrained version of \Cref{eq:eckartyoung} for $L=2$. This assumes that $\bm X \bm X^\top$ is invertible, which is usually true when $n>d$, and this is sometimes described as having an identity input matrix $\bm X$. Since the solution for $\bm W$ when minimizing $\|\bm W \bm X - \bm Y\|_F$ without constraints is simply $\bm W = \bm Y  \bm X^\top(\bm X \bm X^\top)^{-1}$, these two problems have a close connection.

%\laura{This next paragraph is here in order to connect the previous theorem and Eq 5, 6 to the rest of the paper. However, if instead there's a result/discussion of the four structures (weights, adapters, gradients, activations) and how they relate in a DLN matrix factorization type problem with MSE loss, then that will do the same job. We should then also comment that in a DNN the relationships will be significantly more complex. Peng is going to work on this.}

\revision{Based on the above setup, we now discuss two approaches to inducing different low-rank structures, as introduced in \Cref{sec:intro}, within deep linear networks. The first approach, in the context of \Cref{eq:deepfactorXY}, explicitly enforces a low-rank constraint on the weight matrices by restricting their rank: $\mathrm{rank}(\bm W_l) \le r$ with $r < \min\{d_0,\dots,d_L\}$ for each $l \in [L]$. This, in turn, implies that both the gradients and the activations are also low-rank in deep linear networks. Specifically, taking the gradient of \Cref{eq:deepfactorXY} with respect to the weight matrix at the $l$-th layer, we get
\begin{align}
    \bm W_{l+1}^{\nextrev \top}\cdots\bm W_L^\top\left(\bm W_L\cdots\bm W_1 \bm X  - \bm Y\right) \bm W_1^\top\cdots\bm W_{l-1}^\top, 
\end{align}
which inherits a low-rank structure from the product of low-rank matrices. Likewise, the activation $\bm W_l\cdots\bm W_1 $ for each $l$ is also low-rank.
The second approach leverages an assumption on low-rank structure of the target matrix $\bm \Phi$ in the context of \Cref{eq:deepfactorPhi}. In this case, prior research has shown that gradient descent for training weights exhibits an implicit bias toward learning low-rank solutions,  even in the absence of explicit rank constraints. We will show in \Cref{thm:lop} that when $\bm \Phi$ is low-rank, each weight matrix is always updated within a low-rank invariant subspace throughout training. Then, the associated gradients and activations (defined here as the weights applied to the target) are naturally low-rank.
Adapters, on the other hand, are low-rank tunable matrices added to full-rank pre-trained weight matrices. Therefore, the overall resulting weight matrix will generally not be low-rank, but their influence on the gradient will be low-rank, and their influence on activations before nonlinear functions are applied will also be low-rank.
}    
Compared to deep linear networks, the theoretical study of low-rank structure in deep nonlinear networks is very nascent. However, there is significant empirical evidence for emergent low-rank structure in a variety of settings, as we will illustrate. 
For example, \Cref{fig:svals} shows that only a subset of the top singular values 
significantly change during weight updates from the initialization of the penultimate layer matrix, i.e., $\bm W_l^{(t)}- \bm W_l^{(0)}$; prior work supports this observation~\cite{wang2023understanding, fang2021exploring}, suggesting that deeper layers tend to have more low-rank structures than shallower ones. Moreover, the relationship among different types of low-rank structures, including weights, gradients, and activations, is more complex in nonlinear networks, as nonlinear activation functions break the simple linear propagation of low-rank structures.

% \qq{It is not super clear how this is connected with low-rank structures}
% \laura{Two types of low-rank structures come out, one due to explicit rank constraint but also data can be low rank. Make that clear. But we don't want reader to think that the data have to be linear low-rank for the results to hold -- at least empirically we see DNNs also recover low-dim nonlinear structure. Also foreshadow to Can's section.}
%\laura{I just realized I think these problems are identical. Since the LS solution for $\bm W$ in $\|\bm W \bm X - \bm Y\|_F$ is simply $\bm W = \bm Y  \bm X^\top(\bm X \bm X^\top)^{-1}$, so then then the min would find the best low-rank fit to that matrix. Does everyone agree? Maybe you already knew this?} \td{When there is no rank constraint on $\W$, the two problems are identical. When there is a rank constraint, I think the solution of $\|\bm W \bm X - \bm Y\|_F$ is more complicated} \laura{Yes of course, thank you for reminding me that the projection of the unconstrained minimizer onto the constraint set is not the constrained minimizer :P}

%\laura{laura is writing more here}

  % \pw{Peng will add a pointer to Fig 1, low-rank updates of the weights; other defn has low-rank weights directly; or low-rank gradients; or low-rank features/activations.} 

\subsection{\revision{Implicit Structure in Overparameterized Optimization Dynamics}}

\label{sec:imp_struct}

%Ideas from classical statistics suggest that as a model complexity increases beyond a certain point, model performance will degrade as the model overfits the training data. The idea is that some of the patterns in the finite training dataset may not be representative of the true underlying data distribution. From this perspective, one might aim to keep the complexity of a statistical model small compared to the amount of data one has to train. This would result in an overdetermined problem, where we have more equations than unknown variables to estimate. \laura{I feel like the last couple sentences should be revised. It's hand wavy.}

Modern deep learning models are often highly underdetermined — containing far more parameters than the number of training samples — yet they demonstrate remarkable generalization to novel data. This phenomenon, known as overparameterization, raises a fundamental question: why do such models generalize well despite their capacity to fit the training data in many different ways? One prominent explanation is that the optimization dynamics used to train overparameterized models inherently favor parsimonious solutions that not only fit the data but also generalize well. This tendency, referred to as \emph{implicit bias} or \emph{implicit regularization} \cite{soudry2018implicit}, suggests that common optimization algorithms, such as gradient descent, navigate toward a subset of solutions that avoid overfitting, despite the existence of infinitely many parameter choices that could perfectly fit the training data.

To illustrate this phenomenon, in this section, we will show how the gradient dynamics of simple least-squares linear regression constrain the solution (in fact, every iterate of the optimization variable) to lie in a low-dimensional subspace. Suppose we want to predict $n$ scalar outputs $\bm y = [y_1, \ldots, y_n]^\top \in \R^{n \times 1}$ from $n$, $d$-dimensional observations $\bm X = [\bm x_1, \ldots, \bm x_n]^\top \in \R^{n \times d}$ by finding a coefficient vector $\bm w \in \R^{d}$ to solve: 
\begin{equation} \label{eq:LSbackground}
\min_{\bm w \in \R^d} \mathcal{L}(\bm w) = \frac{1}{2} \| \bm y - \bm X \bm w \|_2^2\;.
\end{equation}
Typically, one desires $n\geq d$ to ensure a unique solution to this problem. If instead we have fewer measurements than parameters $n < d$, this problem is underdetermined, and there are infinitely many solutions for $\bm w$ that will result in exactly zero error in this objective function. However, if we optimize with gradient descent, the following equation defines the dynamics of the iterative algorithm:
\begin{align}\bm w^{(t)} &= \bm w^{(t-1)} - \eta \nabla \mathcal{L}(\bm w^{(t-1)}) = \bm w^{(t-1)} - \eta \bm X^\top (\bm X \bm w^{(t-1)} - \bm y)\nonumber \\
& = \bm w^{(t-2)} - \eta \bm X^\top (\bm X \bm w^{(t-2)} - \bm y) - \eta \bm X^\top \left(\bm X \bm w^{(t-1)} - \bm y\right) \nonumber \\
& = \dots = \bm w^{(0)} - \eta \bm X^\top \sum_{i=0}^{t-1}(\bm X \bm w^{(i)} - \bm y)
\label{eq:lsgdbias}
\end{align}
Defining $\bm z^{(t-1)} = \eta \sum_{i=0}^{t-1}(\bm X \bm w^{(i)} - \bm y)$, we see that $\bm w^{(t)} = \bm w^{(0)} - \bm X^\top \bm z^{(t-1)}$. Recall that since $n<d$, $\bm X^\top$ is a tall matrix; $\bm X^\top \bm z$ is in the $n$-dimensional subspace of $\R^d$ defined by the span of the columns of $\bm X^\top$ (i.e., the row space of $\bm X$, also the orthogonal complement of the nullspace of $\bm X$). 

This observation leads us to two key points:
\begin{enumerate}
    \item \emph{Every iterate} of gradient descent must have the form
$\bm w^{(t)} = \bm w^{(0)} + \bm X^\top \bm z$ for some $\bm z \in \R^n$.
\nextrev{Equivalently, all update directions lie in $\operatorname{rowspan}(\bm X)$,
so once $\bm w^{(0)}$ is fixed, the trajectory is confined to the affine set
$\bm w^{(0)} + \operatorname{rowspan}(\bm X)$, which has intrinsic dimension
$n$ (and is contained in a linear subspace of dimension at most $n{+}1$).}
If we initialize with $\bm w^{(0)} \in \operatorname{rowspan}(\bm X)$
(e.g., $\bm w^{(0)} = \bm 0$), then every iterate lies in the rank-$n$
linear subspace $\operatorname{rowspan}(\bm X)$, which can be exploited to
significantly reduce computation when $n \ll d$ by optimizing directly in
this subspace.

 %   \emph{Every iterate} of gradient descent must have the form $\bm w^{(0)} + \bm X^\top \bm z$ for some $\bm z \in \R^n$. If we initialize with $\bm w^{(0)} \in \text{rowspan}(\bm X)$ (e.g., $\bm w^{(0)} = 0$), then every iterate is in the row span of $\bm X$. This can be exploited to significantly reduce computation if $n \ll d$ by only optimizing in this subspace.
    \item Gradient descent is solving an implicitly constrained problem in place of \Cref{eq:LSbackground}: 
    \begin{equation}
\min_{\bm u \in \R^d} \frac{1}{2} \| (\bm y - \bm X\bm w^{(0)}) + \bm X \bm u \|_F^2 \quad \mathrm{s.t.}\ \bm u \in \text{rowspan}(\bm X) \subset \R^d
\end{equation} or equivalently 
\begin{equation}
\min_{\bm z \in \R^n} \frac{1}{2} \| (\bm y - \bm X\bm w^{(0)}) + \bm X \bm X^\top \bm z \|_F^2 \;.
\end{equation}
This problem is no longer underdetermined, since $\bm y - \bm X\bm w^{(0)} \in \R^n$ and $\bm X \bm X^\top$ is a square $n \times n$ matrix. Again assuming that $\bm X$ has rank $n$, the unique minimizer to this convex problem is $\bm z^* = (\bm X \bm X^\top)^{-1}(\bm y - \bm X \bm w^{(0)})$, corresponding to $\bm w^* = \bm X \bm z^*$ in the original problem, and this is the solution at convergence for gradient descent. If we initialize with $\bm w^{(0)}=0$, we see that the solution is $\bm w^* = \bm X (\bm X \bm X^\top)^{-1}\bm y$, which is the Moore-Penrose pseudoinverse solution and is well-known as the minimum $l_2$ norm solution to \Cref{eq:LSbackground}.

\end{enumerate}

Although the discussion has considered the very simple problem of linear regression, we will see throughout this tutorial that many of these themes have parallels when training deep neural networks.  Specifically, in \Cref{sec:train_dyn} we detail how the dynamics of gradient descent are restricted to a low-rank subspace at every iteration when training deep, linear neural networks; that structure is leveraged in \Cref{sec:lora} and \Cref{sec:low-rank-training} to reduce computational requirements for fine-tuning and training deep networks. In \Cref{sec:var_form} we discuss how adding $l_2$ regularization on the parameters of multi-layer models is often equivalent to adding low-rank regularization on the model predictions, then we describe settings where model training implicitly adds $l_2$ regularization (and hence promotes low-rank structure) through either the implicit bias of gradient descent (\Cref{sec:low_rank_imp_reg}) or via heuristics used to train deep networks which stochastically mask the data or latent representations of the data such as Masked Autoencoders or Dropout (\Cref{sec:mask_train}). %; in Section \ref{sec:var_form} we discuss how adding $l_2$ regularization on model parameters is often equivalent to adding low-rank regularization on model predictions; in Section \ref{sec:mask_train} we show how other heuristics used to regularize deep networks like Dropout and Data Masking also results in low-rank regularization. 

\section{Low-Rank Structure at Every Iteration of Gradient Dynamics}\label{sec:LRtrain}

\revision{In this section, we look at how low-rank structure emerges through the training dynamics of deep networks. Specifically, in \Cref{sec:train_dyn}, we develop the mathematical foundation based on deep linear networks for learning a low-rank target, %, rigorously showing that the gradient and weights exhibit invariant low-rank subspace structures 
with a theorem capturing the low-rank structure of weight matrices throughout the learning dynamics. Building on this, we demonstrate how the insights can inform low-rank fine-tuning and low-rank training in \Cref{sec:lora} and \Cref{sec:low-rank-training}, respectively. }

\subsection{Analysis of \revision{Weight} Training Dynamics}\label{sec:train_dyn}
 Many works study deep linear networks because they are much easier to analyze compared to their nonlinear counterparts, while still maintaining a notion of intermediate feature representations. Such works are able to demonstrate a particular implicit bias towards low-rank structure in the network weights, yielding insight into the geometry of feature representations, yet often only identify low-rank structure at the end of optimization. Instead, the result we present here discusses low-rank structure \emph{throughout} the learning process, making it amenable to exploitation by a structure-aware optimization algorithm.
% \cy{Many works study deep linear networks due to being much easier to analyze compared to their nonlinear counterparts, while still maintaining a notion of intermediate feature representations. Many such works are able to demonstrate a particular implicit bias towards low-rank structure in the network weights, yielding insight into the geometry of feature representations, yet often only identify low-rank structure at the end of optimization. Instead, the result we present here discusses low-rank structure \emph{throughout} the learning process,} \laura{making it amenable to exploitation by a structure-aware optimization algorithm.}
To illustrate this concept, we consider a simplified deep matrix factorization setting. We train a fully connected deep linear network, as formulated in \Cref{eq:DLNf}, without nonlinear activations or biases. Additionally, we assume that the input $\bm X$ is either the identity matrix or whitened data, as described in \Cref{eq:deepfactor1}.
%The result is for the deep matrix factorization problem, with the architecture of a fully connected deep linear network in \cref{eq:DLNf} with no nonlinear activations nor biases, and we assume input $\bm X$ is identity or whitened data, as described in \cref{eq:deepfactor1}. 
%\cy{some reference to \eqref{eq:deepfactor1} here, do I need to restate?} 
Specifically, we consider 
\begin{align}
\label{eq:deep_mat_factor}
    \min_{\bm \Theta}\, \underbrace{\frac{1}{2}\| \bm W_L \cdots \bm W_1 - \bm \Phi\|_F^2}_{\mathcal{L}(\bm \Theta)} + \frac{\lambda}{2} \sum_{l=1}^L \|\bm \W_l\|_F^2,  
\end{align}
where $\bm \Phi \in \mathbb{R}^{d \times d}$ is the target matrix, which can be taken as $\bm \Phi = \bm Y \bm X^\top (\bm X \bm X^\top)^{-1}$ as in \Cref{eq:deepfactor1} with $k=d$, and $\bm \Theta = \left\{\bm W_l\right\}_{l=1}^L$ are the network parameters with $\bm W_l \in \R^{d\times d}$ for each $l \in [L]$. For each weight matrix $\bm W_l$, the gradient descent updates can be written as follows: 
\begin{align}
\label{eq:deep_mat_grad}
    \bm W_l^{(t+1)} = (1-\eta\lambda)\bm W_l^{(t)} - \eta \nabla_{\bm W_l} \mathcal{L}(\bm \Theta^{(t)}),\ \forall t = 0,1,2,\dots,
\end{align}
where $\eta > 0$ is the learning rate. When we have $\lambda>0$, this is called weight decay due to the down-weighting of the weights from the previous iteration. 

We now present the main result from \cite{yaras2024compressible}, which demonstrates that the implicit bias of gradient descent in minimizing \Cref{eq:deep_mat_factor} with low-rank data arises because the entire trajectory of weight updates remains confined to a specific low-dimensional subspace. To show this, we make the following assumptions:
\begin{itemize}
    \item[\textbf{A1.}] $\bm \Phi$ is a rank $r$ matrix with $r \ll d$ (i.e., low-rank); 
    \item[\textbf{A2.}] Each initial weight matrix $\bm W_l^{(0)}$ is an $\epsilon_l$-scaled orthogonal matrix for some $\epsilon_l > 0$, i.e.,
    \begin{equation*}
        \bm W_l^{(0)} \bm W_l^{(0)\top} = \bm W_l^{(0)\top} \bm W_l^{(0)} = \epsilon_l^2 \bm I_d.
    \end{equation*}
\end{itemize}
Based on this, the following result states that the gradient updates only occur in $2r$-dimensional subspaces of the left and right singular spaces of each weight matrix $\bm W_l$.
\begin{figure}
    \centering
    \includegraphics[width=\linewidth]{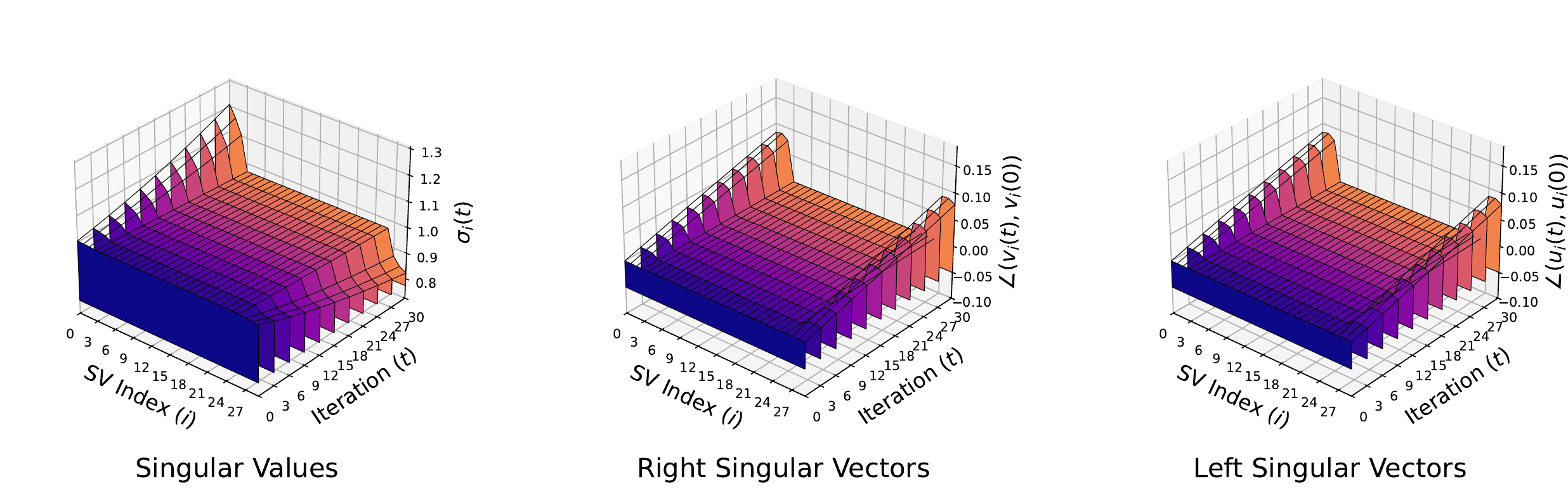}
    \caption{\textbf{Evolution of SVD of weight matrices for deep matrix factorization.}  We visualize the SVD dynamics of the first layer weight matrix of an $L=3$ layer deep matrix factorization for a random matrix with $d = 30$, $r=3$, $\epsilon_l = 1$ throughout GD without weight decay ($\lambda = 0$). \textit{Left}: Magnitude of the $i$-th singular value $\sigma_i(t)$ at iteration $t$. \textit{Middle}: Angle $\angle(\bm v_i(t), \bm v_i(0))$ between the $i$-th right singular vector at iteration $t$ and initialization. \textit{Right}: Angle $\angle(\bm u_i(t), \bm u_i(0))$ between the $i$-th left singular vector at iteration $t$ and initialization. A 2D version of this figure can be found in Supplementary \Cref{app:additional-figures}.}
    \label{fig:svd}
\end{figure}

% \begin{figure}
%     \centering
%     \includegraphics[width=.9\linewidth]{Figures/thm_main_v2.pdf}
%     \caption{\textbf{Evolution of SVD of weight matrices for deep matrix factorization.}  We visualize the SVD dynamics of the first layer weight matrix of an $L=3$ layer deep matrix factorization for a random matrix with $d = 30$, $r=3$, $\epsilon_l = 1$ throughout GD without weight decay ($\lambda = 0$). \textit{Left}: Magnitude of the $i$-th singular value $\sigma_i(t)$ at iteration $t$. \textit{Middle}: Angle $\angle(\bm v_i(t), \bm v_i(0))$ between the $i$-th right singular vector at iteration $t$ and initialization. \textit{Right}: Angle $\angle(\bm u_i(t), \bm u_i(0))$ between the $i$-th left singular vector at iteration $t$ and initialization.}
%     \label{fig:svd_v2}
% \end{figure}

\begin{theorem}\label{thm:lop}
Suppose we perform gradient descent on \Cref{eq:deep_mat_factor} with the iterates given in \Cref{eq:deep_mat_grad}. Under assumptions \textbf{A1} and \textbf{A2}, the SVD of weight matrix $\bm W_l^{(t)} \in \mathbb{R}^d$ can be decomposed as 
    \begin{align*}
        \bm W_l^{(t)} = \begin{bmatrix}
        \bm U_{l,1}^{(t)}  &  \bm U_{l,2}
        \end{bmatrix} \begin{bmatrix}
        \bm \Sigma_l^{(t)} & \bm{0}_{2r\times (d-2r)} \\
        \bm{0}_{(d-2r) \times 2r} & \rho_l^{(t)} \bm{I}_{(d-2r)}
        \end{bmatrix} \begin{bmatrix}
        \bm V_{l,1}^{(t)} &  \bm V_{l,2}
        \end{bmatrix}^\top \forall l \in [L], \; \forall t=0,1,2, \dots,
    \end{align*}
    where $\bm \Sigma_l^{(t)} \in \mathbb{R}^{2r \times 2r}$ is diagonal with $\bm \Sigma_l^{(0)} = \epsilon_l \bm I_{2r}$, $d-2r$ singular values are identical with 
    \begin{equation} \label{eq:rho}
        \rho_l^{(t)} = \rho_l^{(t-1)} (1 - \eta\lambda - \eta \prod_{k\neq l} \rho_k^{(t-1)2}) \approx \epsilon_l (1-\eta\lambda)^t
    \end{equation} where the orthonormal matrices $(\bm U_{l, 2})_{l=1}^L, (\bm V_{l, 2})_{l=1}^L \subset \mathcal{O}^{d \times (d-2r)}$ are independent of $t$ and satisfy $\bm V_{l+1, 2} = \bm U_{l, 2}$ for all $l \in [L-1]$, and the approximation in \Cref{eq:rho} uses $\rho_l^{(0)} = \epsilon_l$ and is more accurate as $L$ increases. \\
\end{theorem}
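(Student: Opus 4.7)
The plan is to prove the theorem by induction on $t$, with the main insight being that the orthogonal initialization together with the low-rank target $\bm \Phi$ forces an invariant ``inactive'' subspace that the gradient dynamics cannot rotate, only uniformly contract. To set up, I would take a compact SVD $\bm \Phi = \bm U_\Phi \bm \Sigma_\Phi \bm V_\Phi^\top$ with $\bm U_\Phi, \bm V_\Phi \in \R^{d \times r}$, and pick a single orthonormal matrix $\bm Q \in \R^{d \times (d-2r)}$ whose columns span a $(d-2r)$-dimensional subspace of $\sspan\{\bm U_\Phi, \bm V_\Phi\}^\perp$. I would then set $\bm U_{l,2} = \bm V_{l,2} = \bm Q$ for every $l$, which automatically satisfies the matching condition $\bm V_{l+1,2}=\bm U_{l,2}$. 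The base case at $t=0$ is realized by picking $\bm U_{l,1}^{(0)}=\bm V_{l,1}^{(0)}$ to be any orthonormal basis of the orthogonal complement of $\bm Q$ and $\bm \Sigma_l^{(0)}=\epsilon_l\bm I_{2r}$, so that the block decomposition collapses to $\epsilon_l \bm I_d$ as required by \textbf{A2}, with $\rho_l^{(0)}=\epsilon_l$. The condition $\bm U_{l,2}=\bm V_{l,2}$ at the base case is essentially forced, since $\epsilon_l\bm I$ has all equal singular values, but it is consistent with the theorem statement which only asserts the existence of such a decomposition.

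For the inductive step, assuming the decomposition at iteration $t$, I would reduce the claim at $t+1$ to verifying the two identities $\bm W_l^{(t+1)} \bm V_{l,2} = \rho_l^{(t+1)} \bm U_{l,2}$ and $\bm W_l^{(t+1)\top} \bm U_{l,2} = \rho_l^{(t+1)} \bm V_{l,2}$. Together these identities pin down the $(1,2)$, $(2,1)$, and $(2,2)$ blocks of $\bm W_l^{(t+1)}$ in the basis $([\bm U_{l,1}, \bm U_{l,2}], [\bm V_{l,1}, \bm V_{l,2}])$ to be $\bm 0, \bm 0,$ and $\rho_l^{(t+1)} \bm I_{d-2r}$, respectively; the remaining $(1,1)$ active block can then be diagonalized by its SVD to produce $\bm U_{l,1}^{(t+1)}, \bm \Sigma_l^{(t+1)}, \bm V_{l,1}^{(t+1)}$ in the desired form.

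The key calculation is evaluating $\nabla_{\bm W_l}\mathcal{L}(\bm \Theta^{(t)}) \bm V_{l,2}$, using the gradient formula $\nabla_{\bm W_l}\mathcal{L} = (\bm W_L\cdots\bm W_{l+1})^\top (\bm W_L\cdots\bm W_1 - \bm \Phi)(\bm W_{l-1}\cdots\bm W_1)^\top$. The matching condition $\bm V_{l+1,2}=\bm U_{l,2}$ causes a telescope across layers: first, $(\bm W_{l-1}\cdots\bm W_1)^\top \bm V_{l,2} = (\prod_{k<l}\rho_k^{(t)})\bm V_{1,2}$; next, $\bm \Phi \bm V_{1,2}=0$ while $(\bm W_L\cdots\bm W_1) \bm V_{1,2} = (\prod_k\rho_k^{(t)})\bm U_{L,2}$; and finally, $(\bm W_L\cdots\bm W_{l+1})^\top \bm U_{L,2} = (\prod_{k>l}\rho_k^{(t)})\bm U_{l,2}$. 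Combining these three telescopes gives $\nabla_{\bm W_l}\mathcal{L}\cdot \bm V_{l,2} = \rho_l^{(t)} \prod_{k\ne l}(\rho_k^{(t)})^2 \bm U_{l,2}$, and adding the weight-decay contraction $(1-\eta\lambda)\bm W_l^{(t)} \bm V_{l,2} = (1-\eta\lambda)\rho_l^{(t)} \bm U_{l,2}$ immediately yields the recurrence in \Cref{eq:rho}. The transposed identity follows by a mirror-image telescope. The approximation $\rho_l^{(t)}\approx \epsilon_l(1-\eta\lambda)^t$ is then a consequence of $\prod_{k\ne l}(\rho_k^{(t)})^2$ being negligible relative to $\eta\lambda$ when the initial scale $\epsilon_l$ is small, and accuracy improves with depth $L$ since the product contains $L-1$ small factors.

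The main obstacle, in my view, is bookkeeping rather than any delicate analytic estimate: verifying that the telescopes on both sides of the gradient product are consistent, ensuring the base case is correctly set up (including the subtle point that $\bm W_l^{(0)}=\epsilon_l\bm I_d$ forces $\bm U_{l,2}=\bm V_{l,2}$), and handling the non-uniqueness of the SVD of the active block when its singular values happen to coincide with $\rho_l^{(t)}$. No delicate perturbation argument is needed; once the matching condition is arranged at initialization, the induction step reduces to a set of algebraic identities propagated through the layers.
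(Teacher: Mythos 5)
Your inductive mechanism (construct an ``inactive'' orthonormal pair $(\bm U_{l,2},\bm V_{l,2})$ per layer, verify $\bm W_l^{(t)}\bm V_{l,2}=\rho_l^{(t)}\bm U_{l,2}$ and $\bm W_l^{(t)\top}\bm U_{l,2}=\rho_l^{(t)}\bm V_{l,2}$, and telescope through the layers using $\bm V_{l+1,2}=\bm U_{l,2}$ to get the recursion for $\rho_l^{(t)}$) is exactly the right one, and it is how the paper argues. The genuine gap is in your base case: you have silently replaced assumption \textbf{A2} by the far stronger assumption $\bm W_l^{(0)}=\epsilon_l\bm I_d$. \textbf{A2} only says $\bm W_l^{(0)}=\epsilon_l\bm O_l$ for an \emph{arbitrary} orthogonal $\bm O_l$, and with such an initialization your choice of a single matrix $\bm Q$ spanning part of $\sspan\{\bm U_\Phi,\bm V_\Phi\}^\perp$ with $\bm U_{l,2}=\bm V_{l,2}=\bm Q$ for every $l$ already fails at $t=0$: the claimed decomposition forces $\bm W_l^{(0)}\bm V_{l,2}=\epsilon_l\bm U_{l,2}$, i.e.\ $\bm O_l\bm Q=\bm Q$, which does not hold for generic $\bm O_l$. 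More fundamentally, the inactive subspaces cannot be chosen independently of the initialization (nor independently across layers), and in general $\bm U_{l,2}\neq\bm V_{l,2}$; the theorem only asserts the cross-layer matching $\bm V_{l+1,2}=\bm U_{l,2}$.

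The repair is precisely the paper's construction (see the $L=2$ argument in the appendix and the discussion of $\bm A=\bm Q^\top\bm\Phi\bm R^\top$ in \Cref{sec:train_dyn}): take the first-layer right inactive subspace to be a $(d-2r)$-dimensional subspace of $\mathcal{N}(\bm\Phi)\cap\mathcal{N}\bigl(\bm\Phi^\top\,\bm W_L^{(0)}\cdots\bm W_1^{(0)}\bigr)$ — nonempty of dimension at least $d-2r$ by \textbf{A1} — and then \emph{propagate} it through the initial weights, $\bm U_{l,2}\propto\bm W_l^{(0)}\cdots\bm W_1^{(0)}\bm V_{1,2}$ and $\bm V_{l+1,2}=\bm U_{l,2}$, so that the inactive subspaces are initialization-dependent and generally distinct on the left and right. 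With that choice, the only facts your induction actually uses — $\bm\Phi\bm V_{1,2}=\bm 0$, $\bm\Phi^\top\bm U_{L,2}=\bm 0$, orthonormality of each $\bm U_{l,2},\bm V_{l,2}$ (inherited from the scaled-orthogonal initialization), and the intra-/inter-layer relations — do hold at $t=0$, and your telescoping step and the block-pinning argument for the $(1,1)$ block go through essentially verbatim, reproducing \Cref{eq:rho}. As written, however, your proof covers only the special case $\bm O_l=\bm I_d$, not the theorem under \textbf{A2}; note also that the correct $\mathcal{V}_1$ is \emph{not} contained in $\sspan\{\bm U_\Phi,\bm V_\Phi\}^\perp$ in general, since the second null-space condition couples $\bm\Phi$ to the end-to-end initial product.
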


% \qq{somehow I feel we need better organization for the paragraphs below}

% \noindent \textbf{Discussion:} %This theorem gives a decomposition of the weight matrices at \emph{every iteration} for gradient descent. It's a simplistic setting, with a deep linear network prediction and $l_2$ loss, but its lessons are useful in linear adaptation of deep networks and may have suggestions for deep nonlinear networks.
This theorem characterizes the decomposition of weight matrices at each iteration of gradient descent. The theory suggests that when $\epsilon_l$ is sufficiently small, the depth $L$ is large enough, and/or weight decay is applied ($\lambda>0$), a significant portion of singular values decay to zero as $t \rightarrow \infty$. The decomposition in the theorem, in the absence of weight decay ($\lambda = 0$), is visualized in \Cref{fig:svd}, where only $2r$ singular values and vectors evolve during gradient descent.

Although it applies to a simplified setting—a deep linear network trained with $l_2$ loss—it provides valuable insights into the low-rank structures throughout the learning dynamics of deep networks. 
\revision{In \Cref{sec:lora}, we will see how matrix factorizations are employed to efficiently fine-tune large language models, where the training objective is significantly more complicated. Yet even in this setting, similar phenomena in the SVD dynamics of the weights emerge when we initialize with the invariant subspace given by the bottom singular values of the gradient; see \Cref{fig:svd_lora}.}

\revision{
To understand why this structured decomposition emerges in the DLN setting, we look at the gradient descent update at initialization. Let $\bm Q = \bm W_L \cdots \bm W_{l+1}$ and $\bm R = \bm W_{l-1} \cdots \bm W_1$, so from $\epsilon_l$-scaled orthogonal initialization, we have the gradient
\begin{align}\label{eq:eps_W_A}
    \nabla_{\bm W_l} \mathcal{L}(\bm \Theta) = \bm Q^\top (\bm Q \bm W_l \bm R - \bm \Phi) \bm R^\top = \left(\prod_{k \neq l} \epsilon_k^2 \right) \bm W_l - \underbrace{\bm Q^\top \bm \Phi \bm R^\top}_{\bm A},
\end{align}
which gives the update
\begin{align}\label{eq:eps_W_A_update}
    \bm W_l^+ &= (1 - \eta \lambda) \bm W_l - \left(\prod_{k \neq l} \epsilon_k^2\right) \bm W_l + \bm A = \left(1 - \eta \lambda - \prod_{k \neq l} \epsilon_k^2 \right) \bm W_l + \bm A.
\end{align}
The $d-2r$ identical singular values that satisfy \Cref{eq:rho} correspond to a construction of $d-2r$ pairs of singular vectors $(\bm u, \bm v)$ that are simultaneously annihilated on the left and right of $\bm A$ respectively. In other words, such a pair must satisfy $\bm v \in \mathcal{N}(\bm A)$ and $\bm u \propto \bm W_l \bm v \in \mathcal{N}(\bm A^\top)$, i.e., $\bm v \in \mathcal{N}(
\bm A) \cap \mathcal{N}(\bm A^\top \bm W_l)$. Since $\bm A$ is rank $r$, each nullspace is $(d-r)$-dimensional, hence their intersection is at least $(d-2r)$-dimensional, so we should be able to find $d-2r$ such singular vector pairs as desired. These are exactly the vectors forming the columns of $\bm U_{l,2}$ and $\bm V_{l,2}$ in \Cref{thm:lop}.
}

%This theorem provides a decomposition of the weight matrices at every iteration of gradient descent. While it applies to a simplified setting—a deep linear network with $l_2$ loss—its insights are valuable for linear adaptation in deep networks and offer implications for deep nonlinear networks.The decomposition in the theorem without weight decay ($\lambda=0$) is visualized in \Cref{fig:svd}, where only $2r$ of the singular values/vectors change throughout gradient descent. This suggests that if $\epsilon_l$ is sufficiently small, the depth $L$ is sufficiently large, and/or weight decay is employed ($\lambda > 0$), a large portion of singular values decay to 0 as $t \rightarrow \infty$. 

% \begin{figure}
%     \centering
%     \includegraphics[width=0.85\linewidth]{}
%     \caption{\textbf{Evolution of SVD of weight matrices for deep low-rank adaptation.} We visualize the SVD dynamics of an $L=3$ layer deep matrix factorization's end-to-end product employed for fine-tuning the 11th layer value matrix in BERT, with $d = 768$, $\epsilon_l = 1$ throughout Adam. \textit{Left}: Magnitude of the $i$-th singular value $\sigma_i(t)$ at iteration $t$. \textit{Middle}: Angle $\angle(\bm v_i(t), \bm v_i(0))$ between the $i$-th right singular vector at iteration $t$ and initialization. \textit{Right}: Angle $\angle(\bm u_i(t), \bm u_i(0))$ between the $i$-th left singular vector at iteration $t$ and initialization.}
%     \label{fig:svd_lora}
% \end{figure}

\begin{figure}
    \centering
    \includegraphics[width=\linewidth]{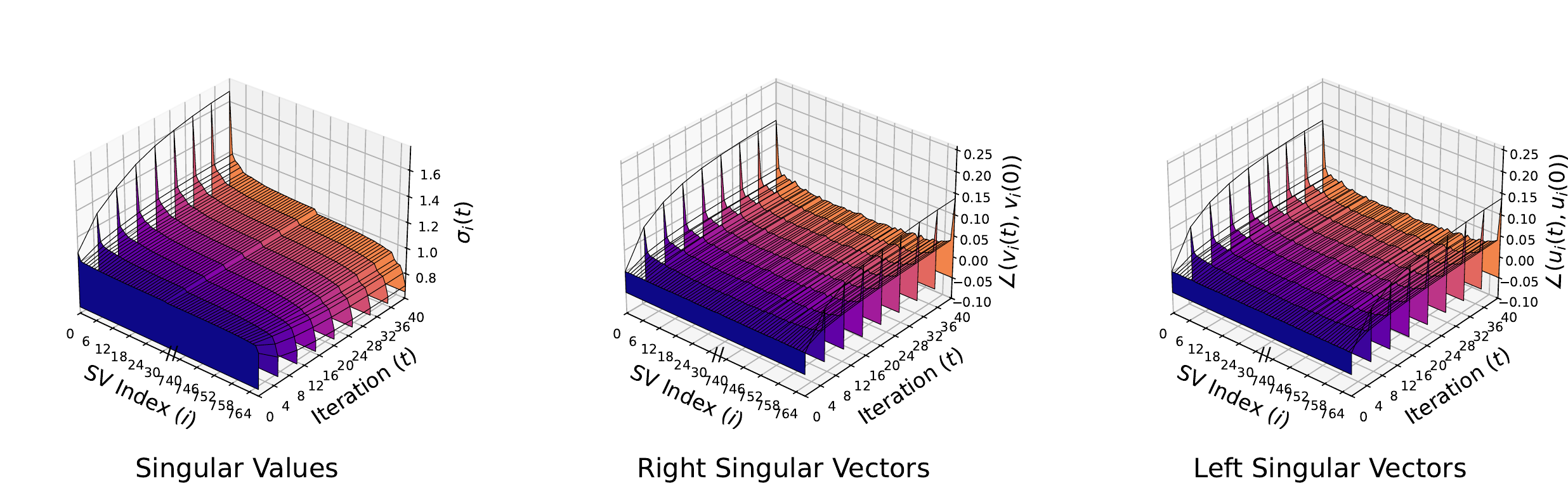}
    \caption{\textbf{Evolution of SVD of weight matrices for deep low-rank adaptation.} We visualize the SVD dynamics of an $L=3$ layer deep matrix factorization's end-to-end product employed for fine-tuning the 11th layer value matrix in BERT, with $d = 768$, $\epsilon_l = 1$ throughout Adam. \textit{Left}: Magnitude of the $i$-th singular value $\sigma_i(t)$ at iteration $t$. \textit{Middle}: Angle $\angle(\bm v_i(t), \bm v_i(0))$ between the $i$-th right singular vector at iteration $t$ and initialization. \textit{Right}: Angle $\angle(\bm u_i(t), \bm u_i(0))$ between the $i$-th left singular vector at iteration $t$ and initialization. A 2D version of this figure can be found in Supplementary \Cref{app:additional-figures}.}
    \label{fig:svd_lora}
\end{figure}

Several works laid the foundation for analyzing the dynamics of gradient descent for deep linear networks, along with its connection to implicit low-rank bias in gradient descent for low-rank factorization. \cite{saxe2014exact} pioneered this analysis, deriving closed-form solutions for singular values under gradient flow in a special initialization setting where weight singular vectors remain stationary. \cite{arora2019implicit} extended this to general initializations, establishing a stronger link between singular vector stationarity and gradient alignment. More recently, \cite{yaras2024compressible} leveraged low-rank gradients to construct orthogonal singular subspaces, distinguishing between learning-relevant and stationary components. Unlike prior work, it considers discrete-time gradient descent, arbitrary orthogonal initialization, and the role of low-rank training in compressing overparameterized deep matrix factorization. In a related but distinct direction, \cite{ji2019gradient} analyzed the alignment of deep network weights in cases where they converge to a rank-one matrix. Additionally, \cite{li2021towards} introduced a different perspective on low-rank structure in training dynamics through greedy low-rank learning, where gradient descent incrementally fits the best rank-one approximation to the residual error. Beyond these works, a broader literature explores the implicit low-rank bias of gradient descent, though not necessarily focusing on the low-dimensional structure of training dynamics—this will be discussed further in \Cref{sec:LRobj}.

\subsection{Low-rank Adapters for Fine-tuning Large-Scale Models}
\label{sec:lora}

Thus far, we have illustrated how low-rank structure emerges implicitly in the learning dynamics of the simple deep linear model. In this section, we discuss how this insight has been explicitly leveraged for parameter-efficient fine-tuning of large-scale pre-trained models. 
\revision{Fine-tuning, or adaptation, is the process of updating a pre-trained model’s parameters to adapt it to a more specific downstream task. However, optimizing each dense layer of the pre-trained model on fine-tuning data is as computationally expensive as training from scratch.
To deal with this challenge, Low-Rank Adaptation (LoRA) \cite{hu2021lora} reduces computational burden by assuming that the weight updates of overparameterized models reside in low-dimensional intrinsic subspaces.}

Concretely, let $\bm{\overline W}_{l} \in \mathbb{R}^{d \times d}$ denote a pre-trained weight matrix for a particular layer $l$, and let us introduce a trainable weight matrix $\Delta \bm{W}_l \in \mathbb{R}^{d \times d}$. During inference, we consider the effective weight $\bm{W}_l$:
\begin{align}
\bm{W}_l \;=\; \bm{\overline W}_l \;+\; \Delta \bm{W}_l,
\label{eq:full-fine-tune}
\end{align}
where $\Delta \bm{W}_l$ is the trained (or adapted) weight matrix and $\bm{\overline W}_l$ remains fixed (or frozen) throughout the entire process.
% However, optimizing $ \Delta \bm{W}_l$ for each dense layer of the pre-trained model can be as computationally expensive as training from scratch.
% To deal with the challenge, Low-Rank Adaptation (LoRA) \cite{hu2021lora} reduces computational burden by assuming that overparameterized models reside in low-dimensional intrinsic subspaces.
The LoRA approach assumes that the change in weights $\Delta \bm{W}_l$ also has low intrinsic rank, and considers the following effective weight instead: %\laura{Should we use $\bm W_2 \bm W_1$ instead of $AB$?}
\begin{align}
\bm{W}_l \;=\; \bm{\overline W}_l \;+\; \bm{B}_{l}\,\bm{A}_{l},
\label{eq:low-rank-update}
\end{align}
where $\bm{B}_l \in  \mathbb{R}^{d \times r}$ and $\bm{A}_l \in \mathbb{R}^{r \times d}$ with $r \ll d$ are the trainable weights constrained to have at most rank $r$. For simplicity, we have limited ourselves to square weight matrices here, but in practice, this can be extended to any size weight matrix as long as each dimension is greater than $r$.

\revision{As opposed to fine-tuning all $d^2$ entries of $\Delta \bm{W}_l$, the parameterization in \Cref{eq:low-rank-update} restricts the number of learnable parameters to $2dr$, allowing us to significantly reduce memory and computation requirements if $r \ll d$. Here the adapter $\Delta \bm{W}_l$ is low-rank, but the weight matrix $\bm{\overline W}_l + \Delta \bm{W}_l$ may still be full rank. }
%Yet, the theory from the last section can inform this approach, as we see with discussion of Deep LoRA in \Cref{sec:lorarank}.}
%updates to a manifold of dimension $2dr$, 
%thereby greatly reducing the degrees of freedom.
%\cy{A reader may ask why this is important? Should there be a discussion of reducing number of optimizable parameters reduces memory requirements?} \smk{I agree, though I kind of believe that this should be mentioned near the introduction -- since after all this is the main scope of this tutorial}
Next, we focus on two key aspects of LoRA which has been heavily investigated recently: how to effectively choose two key hyperparameters, the learning rate and the rank.

%\laura{Ideas: instead discuss it as choice of step or initialization, and choice of rank? Theory will only be less than half a page, does it deserve a section?}
%In the following, we briefly survey a few works from each of these regimes and leave a thorough investigation to .
\subsubsection{Choosing the Learning Rate} Generally, given some learning rate $\eta > 0$, we update the LoRA parameters using some gradient-based method:
\begin{align}
   \label{eqn:lora_gd_updates} \bm{B}_l^{(t+1)} = \bm{B}_l^{(t)}  - \eta \cdot \nabla_{\bm{B}_l} \mathcal{L}(\bm{\Theta}), \quad \quad \bm{A}_l^{(t+1)}  = \bm{A}_l^{(t)} - \eta \cdot \nabla_{\bm{A}_l}\mathcal{L}(\bm{\Theta}),
\end{align}
where $\mathcal{L}(\cdot )$ denotes the loss function and $\bm \Theta$ represents the collection of network parameters. For initialization (at $t=0$), we set one of the factors to the zero matrix so that fine-tuning begins from the pre-trained model. Concretely, let $\bm a_{l, i} \in \mathbb{R}^d$ denote a column of $\bm A_1$ and let $\bm b_{l, i}^\top \in \mathbb{R}^d$ denote a row of $\bm B_l$. For all $i \in [r]$, the factors are typically initialized as follows~\cite{hu2021lora}:
\begin{align}
    &\bm{b}_{l, i}^{(0)\top} = \bm{0}_{d} \quad \text{and} \quad \bm{a}_{l, i}^{(0)}  \sim \mathcal{N}(\bm{0}, \sigma^2\bm{I}_d), \quad \text{where} \quad  \sigma^2 = \Theta\left(d^{-1}\right). %\\
 %&\bm{w}_{2, i} \sim \mathcal{N}(\bm{0}, \sigma^2\bm{I}_d), \quad \text{where} \quad \sigma^2 = \Theta\left(0\right) \quad  \text{and} \quad \bm{w}_{1, i} = \bm 0_d.
\end{align}
While we use a Gaussian distribution for the initialization above, it is important to note that any other distribution with finite variance can be used. The variance of the factor is chosen to prevent any possible numerical instabilities. %(i.e., the values before applying any activation function) as $d$ increases. This is typically done to avoid any numerical instabilities. %\laura{$<-$ why is this important? won't it be corrected in the gradient? is it just unstable? and answering that may also help clarify why it's ok to choose W2 however you want -- certainly when W1=0, W2W1x=0, but same is true for when W2=0. Finally, we haven't used the word ``pre-activation'' before that I know of. }
%For example, given a data sample $\bm x \in \mathbb{R}^d$, ignoring bias terms, we perform inference by computing the following matrix-vector multiplication:
%\begin{align*}
%    \bm{Wx} \;=\; \bm{\overline W}\bm x \;+\; \bm{W}_2\,\bm{W}_1 \bm x.
%\end{align*}
%Then, in the first initialization, we  consider a variance of $d^{-1}$ to ensure that $\bm W_1 \bm x$ does not explode as $d$ increases. For the second initialization, as $\bm W_1$ is set to zero, we can choose a constant variance to initialize $\bm W_2$. 

While this initialization was designed to enable fine-tuning from the pre-trained model, Hayou et al.~\cite{hayou2024lora} found that the asymmetry in the factors leads to ``inefficient feature learning''. They define efficient feature learning as a setup in which the norm of the change in the LoRA updates remains constant at $\Theta(1)$. However, the asymmetry  causes the norm of one factor to change much faster than the other, leading to inefficiencies (i.e., slower convergence).
%\smk{I am not sure how clear this part is. I don't think the LoRA+ paper clearly explains what it means to be ``inefficient'' other than the fact that the change in norms are not on the same scale. They claim that LoRA+ speeds up convergence, so I am going to say that the inefficiency lies in slow convergence.} 
%\laura{Can we show shared step size vs discrepant step size and see that the convergence is faster?}
We illustrate this point on the lefthand side of \Cref{fig:improving_lora}, where we fine-tune a BERT model on the STS-B dataset using a learning rate of $\eta=10^{-4}$ and rank $r=8$. Despite the test loss increasing, measured in terms of the Pearson correlation,  the norm of one factor hardly changes throughout training, while the norm of the other factor changes rapidly. To account for these observations, Hayou et al.~\cite{hayou2024lora} propose an algorithm called LoRA+ using discrepant learning rates. That is, rather than using the same learning rate for both factors as in \Cref{eqn:lora_gd_updates}, we use discrepant learning rates as follows:
\begin{align*}
\bm{B}_l^{(t+1)} = \bm{B}_l^{(t)}  - \gamma\eta \cdot \nabla_{\bm{B}_l} \mathcal{L}(\bm{\Theta}), \quad \quad \bm{A}_l^{(t+1)}  = \bm{A}_l^{(t)} - \eta \cdot \nabla_{\bm{A}_l}\mathcal{L}(\bm{\Theta}),
\end{align*}
for some constant $\gamma > 0$.
This is the main idea behind LoRA+, where they claim that we should use a larger learning rate for $\bm B_l$ such that we can account for the differences in the norms. On the righthand side of \Cref{fig:improving_lora}, we observe that we can indeed obtain faster convergence with LoRA+ using $\lambda = 2$ in the same BERT model setup. The work LoRA-Done-RITE~\cite{yen2025lora} builds upon this observation to propose an adaptive preconditioned learning rate that balances the factors. Both algorithms have been shown to improve convergence speed and achieve superior performance over the vanilla LoRA method, and we can expect future research to further address these inefficiencies and develop more efficient LoRA adapters. %\cy{random question: is it adapters or adapters?} \laura{apparently it can be either! US spelling vs UK spelling. Let's use adapters}

\begin{figure}[t!]
    \centering
     \begin{subfigure}[t!]{0.495\textwidth}
         \centering
         \includegraphics[width=\textwidth]{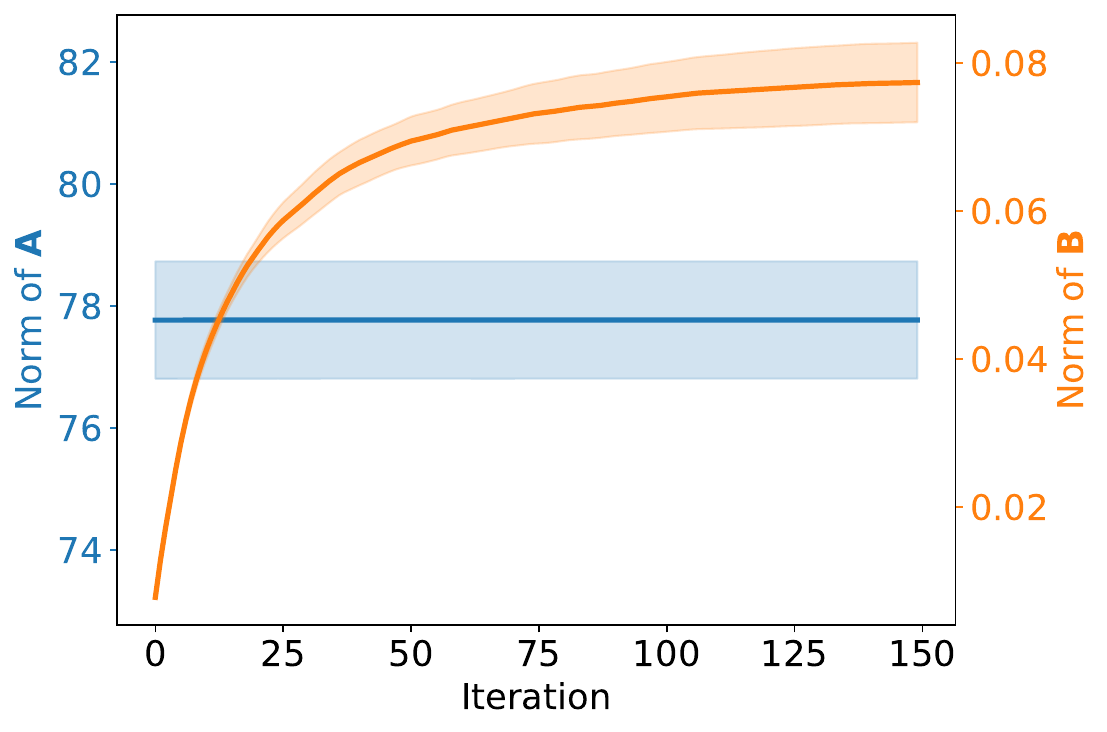}
         \caption*{Norm of LoRA Factors}
     \end{subfigure}
    \begin{subfigure}[t!]{0.495\textwidth}
         \centering
         \includegraphics[width=\textwidth]{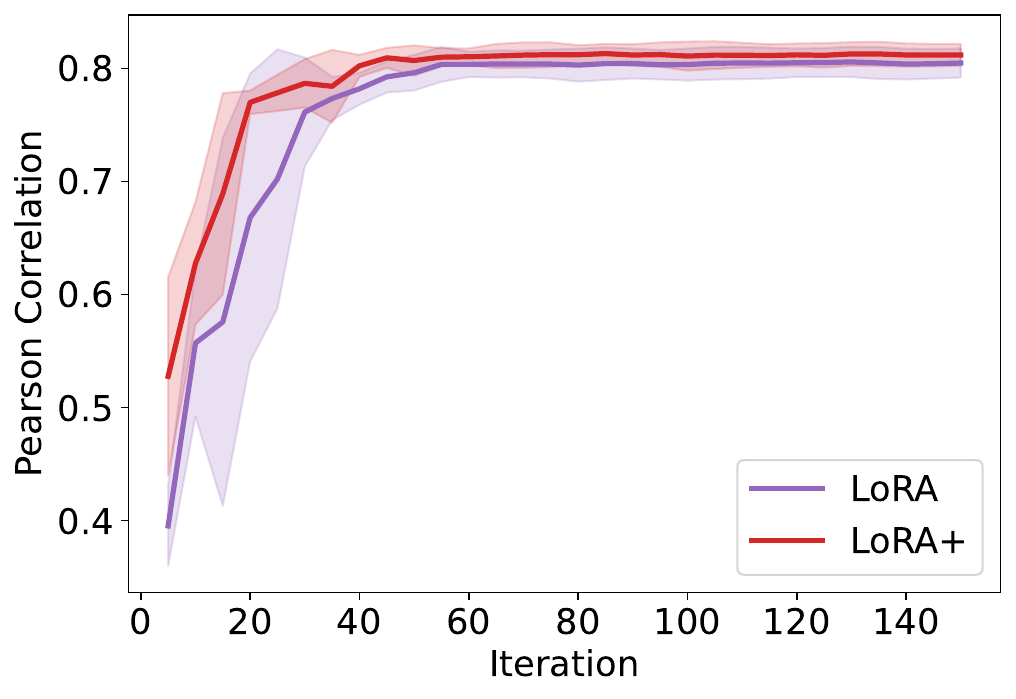}
         \caption*{LoRA+}
     \end{subfigure}\caption{\textbf{Highlighting the inefficiency of LoRA that arises from using asymmetric initializations on BERT.} Existing papers explore ways to balance learning between the two LoRA factors and justify whether one initialization is preferable to another. Left: The norm of the two factors over training iterations. %\laura{Soo Min: is this plot what happens when using the original recommendation in the LoRA paper?}
     %Middle: The Pearson correlation over training iterations. Note that even though the norm of  $\bm A$ remains nearly constant throughout training, the test performance, as measured by the Pearson correlation, still indicates good accuracy.
     Right: Demonstrating that by using discrepant learning rates as in LoRA+~\cite{hayou2024lora}, we obtain faster convergence than vanilla LoRA.}
    
    \label{fig:improving_lora}
\end{figure}

\begin{comment}
    
\begin{figure}[t!]
    \centering
     \begin{subfigure}[t!]{0.32\textwidth}
         \centering
         \includegraphics[width=0.975\textwidth]{}
         \caption*{Norm of $\bm A$}
     \end{subfigure}
     \begin{subfigure}[t!]{0.32\textwidth}
         \centering
         \includegraphics[width=\textwidth]{}
         \caption*{Norm of $\bm B$}
     \end{subfigure}
     \begin{subfigure}[t!]{0.32\textwidth}
         \centering
         \includegraphics[width=\textwidth]{}
         \caption*{Pearson Correlation}
     \end{subfigure}
    \caption{
     lora with balanced factors over 5 random seeds}
    
    \label{fig:balanced_lora}
\end{figure}
\end{comment}

%Another key challenge of using LoRA is the selection of the rank hyperparameter. 
\subsubsection{Choosing the Rank Allocation}\label{sec:lorarank} 
\revision{
Other than the learning rate, another critical hyperparameter is the rank of the adapters.
The choice of rank presents a  tradeoff: an insufficiently small rank can prevent the LoRA adapters from properly capturing
enough information in the training data to allow good fine-tuning performance, whereas an excessively large rank risks overfitting, slows down the fine-tuning process, and reduces the efficiency gains of LoRA.
%the necessary downstream information, hindering model performance. Conversely, an excessively large rank not only risks overfitting to the fine-tuning dataset but also slows down the training and inference processes, thereby degrading the efficiency gains of LoRA. 
Moreover, different layers across the network may not necessarily require the same rank. As we previously observed in \Cref{fig:svals}, deeper layers exhibit simpler structures and may therefore require a smaller rank. However, determining an optimal, layer-specific rank for each of the $L$ layers through traditional cross-validation is computationally prohibitive. %, making such layer-wise optimization impractical. 
To address this challenge, we review two works on rank allocation for LoRA.}

 \revision{\textbf{AdaLoRA} \cite{zhang2023adaptive} addresses rank allocation by dynamically distributing a predefined rank budget across each layer of the network during the fine-tuning process. The method begins with a budget schedule that evolves from an initial maximum rank for all layers (e.g., $\overline{r}L$) towards a target total budget (e.g., $rL$). To achieve this, instead of standard LoRA pairs ($\bm B_l, \bm A_l$) used in \Cref{eq:low-rank-update}, AdaLoRA decomposes each adapter $\Delta \bm W_l$ into triplets: $\Delta \bm W_l= \bm P_l \bm \Lambda_l \bm Q_l$. Here, $\bm \Lambda_l$ is a diagonal matrix whose entries can be pruned to zero for reducing the rank allocated to layer $l$. The decision of which entries to prune is based on a scoring function applied across all layers in each fine-tuning epoch, where different scoring functions have been studied in the paper \cite{zhang2023adaptive}. To ensure that zeroing these entries meaningfully reduces the rank $\Delta \bm W_l$, similar to singular value decomposition, we would require $\bm P_l$ and $\bm Q_l$ to be orthogonal matrices. To achieve this, AdaLoRA adds an orthogonality regularization term to both $\bm P_l$ and $\bm Q_l$ (i.e., $\|\bm P \bm P^\top - \bm I\|+\|\bm Q\bm Q^\top - \bm I\|$) to the objective function during fine-tuning. Despite the added computational overhead from these mechanisms, experiments in \cite{zhang2023adaptive} demonstrate that AdaLoRA achieves improved accuracy compared to fixed-rank allocation when fine-tuning across various benchmark datasets. While effective, AdaLoRA is largely heuristic and lacks a principled explanation of its working mechanism. }

\begin{figure*}[t!]
\begin{center}
\includegraphics[width=0.32\linewidth]{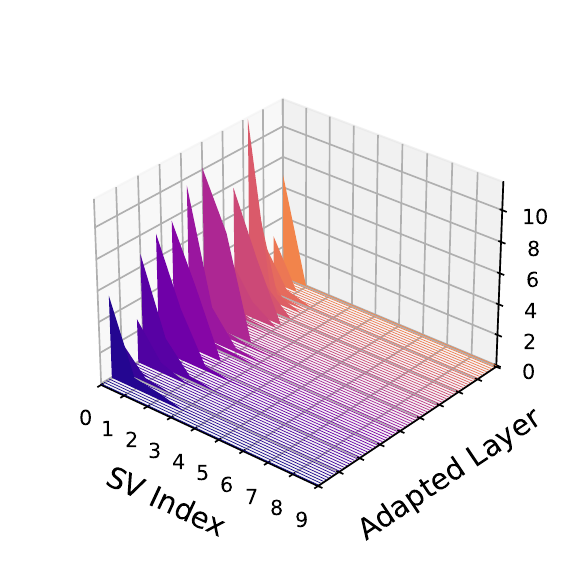}
\includegraphics[width=0.32\linewidth]{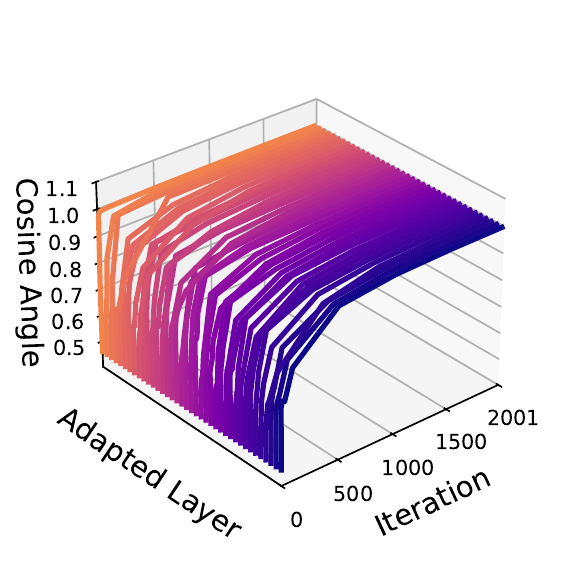}
\includegraphics[width=0.32\linewidth]{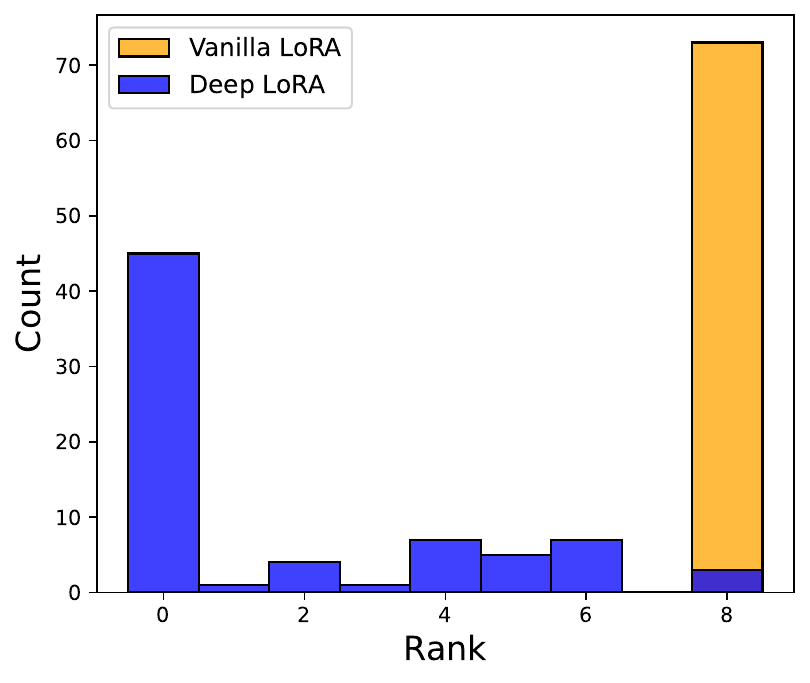}
\end{center}
% \vspace{-0.2in}
\caption{\textbf{Invariant low-dimensional subspaces in deep overparameterized adaptation of language models.} Fine-tuning BERT with deep overparameterized adaptation on the STS-B dataset. \textit{Left}: \textbf{Singular value} spectra across all adapted layers at the end of fine-tuning. \textit{Middle}: \textbf{Alignment of subspaces} formed by the top 8 right singular vectors between the current adapted weights and the final adapted weights throughout training. \textit{Right}: \textbf{Deep LoRA finds lower ranked adapters} as opposed to LoRA, allowing it to have better test loss in the sample-starved regime. \nextrev{A 2D figure with log y-axis can be found in Supplementary \Cref{app:additional-figures}.}}
\label{fig:low_rank_subspace}
\end{figure*}

\revision{In contrast, \textbf{Deep LoRA} \cite{yaras2024compressible} offers a theory-inspired approach, 
where we take what we know about low-rank bias of deep linear factorizations \cite{arora2019implicit} and apply it to LoRA. 
%It is motivated by the implicit bias of (deep) overparameterization, which induces low-rank structures, a phenomenon that we introduced and proved for deep linear networks (see \Cref{sec:train_dyn}). 
%Deeper linear factorizations are well-known to bias towards lower-rank solutions \cite{arora2019implicit}, and this low-rank bias is later demonstrated in deep nonlinear networks \cite{huh2023the}. 
We first show empirical evidence in \Cref{fig:low_rank_subspace} that demonstrates this low-rank bias in fine-tuning with overparameterization: when a deep nonlinear network is fine-tuned with a deep overparameterized LoRA (using three $d \times d$ factors), the converged weights $\{\Delta \bm W_l\}_{l=1}^L$ exhibit very low rank (\Cref{fig:low_rank_subspace}, left panel). Moreover, the learning dynamics for each weight matrix approximately remain within the same invariant subspace throughout training (\Cref{fig:low_rank_subspace}, middle panel), which aligns with our findings for deep linear networks in \Cref{sec:train_dyn}. %While adapting with full $d \times d$ factors would be computationally equivalent to full fine-tuning, t
These observations suggest that full-dimensional adaptation, or full fine-tuning, is highly compressible, which explains why LoRA is effective in practice.}

\revision{Deep LoRA leverages this finding by applying a compression method to (\emph{i}) exploit the benefits of depth in overparameterization for adaptive rank allocation and (\emph{ii}) mitigate computational overhead by reducing the width of intermediate layers. These lead to an update to each weight matrix as follows:
\begin{equation*}
    \bm W_l = \bm{\overline W}_l + \underbrace{\bm{C}_l\, \bm{B}_{l}\,\bm{A}_{l}}_{ =:\Delta \bm W_l },
\end{equation*}
where $\bm A_l \in \mathbb{R}^{r \times d}$, $\bm B_l \in \mathbb{R}^{r \times r}$, and $\bm C_l \in \mathbb{R}^{d \times r}$. 
Using the invariant gradient subspace discussed after \Cref{thm:lop} as an initialization, Deep LoRA trains this compressed depth-3 adapter, whose implicit bias is able to implicitly select the right rank per layer. Deep LoRA is therefore
particularly effective in data-limited fine-tuning regimes. In (\Cref{fig:low_rank_subspace}, right panel), we show a histogram of the numerical rank for each layer of BERT fine-tuned with LoRA and Deep LoRA. While LoRA saturates all layers at $r=8$, Deep LoRA's implicit bias allows the numerical rank of the update matrix to be much smaller -- even zero for the majority of the layers.}

\revision{Although the Deep LoRA approach is inspired by the theory for deep linear networks in \Cref{sec:train_dyn}, a key distinction from the deep matrix factorization setting (in \Cref{sec:train_dyn}) is that Deep LoRA's target is not strictly low-rank, meaning updates are not confined to a fixed invariant subspace. To address this, similar to LoRA+ \cite{hayou2024lora}, Deep LoRA employs a discrepant learning rate: the outer factors $\bm C_l$ and $\bm A_l$ are updated more slowly 
%(scaled by $\gamma$) 
than the inner factor $\bm B_l$. Empirically, as shown in \Cref{fig:low_rank_subspace} (right panel) for a BERT model fine-tuned with Deep LoRA (e.g., rank $r=8$, depth $3$ adapters), Deep LoRA implicitly finds adapters with varying, lower effective ranks than vanilla LoRA. This explains its superior test accuracy in data-limited fine-tuning scenarios \cite{yaras2024compressible}.
}

\subsection{Low-Rank \revision{Gradients} for Efficient Training}\label{sec:low-rank-training}

%\laura{Trying to figure out where to put Fig \ref{fig:vitvalue} and Fig \ref{fig:vitfc2}}

More recently, the implicit low-rank structures in learning dynamics \cite{yaras2024compressible} (\Cref{sec:train_dyn}), coupled with the success of LoRA (\Cref{sec:lora}), have inspired researchers to explicitly explore low-rank structures for training large-scale deep networks. Instead of introducing additional low-rank adapters and hence not altering the training dynamics \cite{lialin2024relora,zhao2024galore,jaiswal2024WeLore}, these approaches have the potential to significantly reduce the computational and memory requirements of training large-scale neural networks by approximating the gradient updates of weight matrices with low-rank factorizations.
%\laura{Start by only citing the papers that we discuss in detail, and then we can add this type of ``drive-by citation" only where we feel it's necessary.} %\smk{besides, [23] is done post-training and [24] does  not seem to be about low-rank training} sharma2023truth,
 %\smk{it would be good to refer to back the main figure somewhere here}%This approach leverages the idea that many learned weight matrices in deep neural networks (DNNs) have a significant amount of redundancy and can be expressed with fewer parameters without substantial loss in accuracy.

%\smk{what do we mean by ``not as mathematically elegant''?}
%\begin{figure}%{r}{0.4\textwidth} % 'r' for right, 'l' for left
%    \centering
%    \includegraphics[width=\linewidth]{}
% \caption{\textbf{An illustration of learning through low-rank subspaces using GaLore.}}
%        \label{fig:low-rank-gradient}
%\end{figure}

However, the optimal weights of nonlinear neural networks are often \emph{not} inherently low-rank, as evidenced by the fact that LoRA may not always match the performance of full-rank fine-tuning and often benefits from an initial full-rank training phase to effectively leverage the low-rank subspace \cite{hu2021lora}. Nonetheless, several recent works \cite{yaras2024compressible,zhao2024galore,jaiswal2024WeLore,refael2025adarankgrad} showed that
gradient updates of the weights are approximately low-rank, where the gradients approximately reside in slowly changing low-rank subspace across the learning dynamics. For deep linear models, the low-rank structure of gradient updates follows directly from our training dynamics analysis in \Cref{sec:train_dyn} and the findings in \cite{yaras2024compressible,kwon2023efficient}, where the gradient captures the difference between two consecutive weight updates within the same low-rank subspace.
More recent work \cite{zhao2024galore} justifies the low-rank gradient for reversible nonlinear networks, showing the gradient becomes low-rank during training when it follows certain parametric forms.
%   Although the low-rank structures in nonlinear networks are not as clearly characterized as in deep linear networks discussed in \Cref{sec:train_dyn}, they exhibit several similar characteristics despite the discrepancies from skip connections, normalization layers, and nonlinear activations: (\emph{i}) \emph{low-rank gradients with dynamic subspaces}, where the gradient updates of the weights remain low-rank after the initial burn-in period, and the subspace is dynamically changing across iterations, and (\emph{ii}) \emph{different low-rank structures across layers and components}, where the low-rankness is non-uniform across layers and differs for Attention and MLP components within transformer blocks \cite{jaiswal2024WeLore}.

%where the weights progressively become more low-rank in deeper layers of the network, whereas shallower layers tend to retain a higher rank. 

Specifically, if we consider the weight matrix of any given layer (e.g., $l$th layer) in a neural network, then slow changing low-rank gradient implies that we can approximate the weight $\bm W_{KT}$ at the $KT$-th iteration by
\begin{align}\label{eqn:gradient-update}
    \bm W_{KT} = \bm W_0 +  \sum_{k=1}^K \Delta \bm W_{T_k}, \; \text{where}\; \Delta \bm W_{T_k} = \sum_{t_k=1}^T \bm G_{t_k},
\end{align}
where $\bm W_0$ is the weight initialization, $T$ denotes the window size of each subspace, and $\bm G_{t_k}$ denotes the $t$-th gradient update in the $k$-th subspace $\bm U_k$. In other words, the accumulated gradient $\Delta \bm W_{T_k}$ lies within a low-dimensional subspace $\bm U_k$, where the subspace $\bm U_k$ dynamically changes with respect to $k$ \cite{lialin2024relora,zhao2024galore}, see \Cref{fig:low-rank-gradient} for an illustration. This phenomenon can yield significant benefits: by restricting updates to a subspace of reduced dimension, one can potentially reduce both computational cost and memory usage, since operations in higher-rank dimensions become negligible \cite{lialin2024relora,zhao2024galore}.

 \begin{wrapfigure}{r}{0.36\textwidth}
        \centering
        \includegraphics[width=0.35\textwidth]{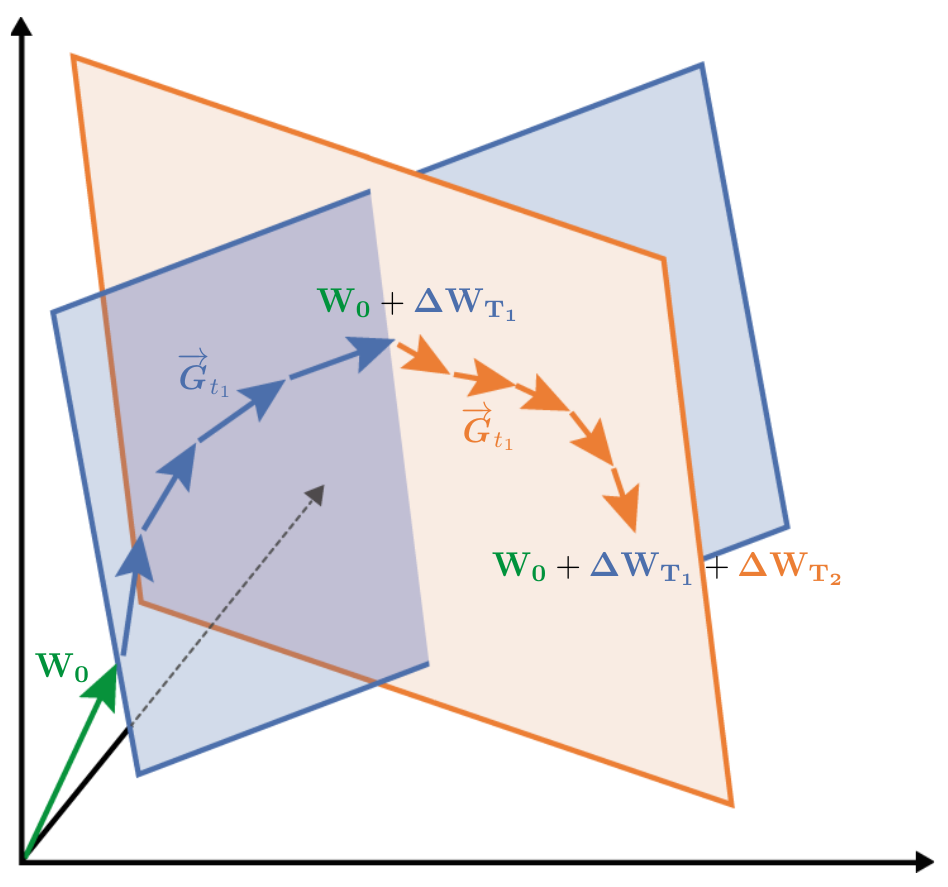}
\caption{\bf An illustration of learning through low-rank subspaces using GaLore \cite{zhao2024galore}.}
    \label{fig:low-rank-gradient}
\end{wrapfigure}
\paragraph{Exploring Low-rank Gradient for Low-Rank Training} In practice, many recent works leverage low-rank structures in the gradient to reduce memory usage. Specifically, based upon \Cref{eqn:gradient-update}, Gradient Low-Rank Projection (GaLore) method \cite{zhao2024galore} that leverages the gradient low-rank structures in \Cref{eqn:gradient-update} through subspace projection 
% \begin{align}
%     \tilde{\bm G}_{t_k} = \bm P_{k} \rho ( \bm P_{k} \bm G_{t_k} \bm Q_{k}^\top)  \bm Q_{k}^\top,
% \end{align}
\revision{
\begin{align}
    \tilde{\bm G}_{t_k} = \bm P_{k} \rho ( \bm P_{k}^\top \bm G_{t_k} \bm Q_{k})  \bm Q_{k}^\top,
\end{align}
}
to better capture the information of the full-rank gradient $\bm G_{t_k}$. Here, $\bm P_k \in \mathbb R^{m \times r}$ and $\bm Q_k \in \mathbb R^{n \times r}$ are the subspace projection matrices of the original gradient $\bm G_{t_k} \in \mathbb R^{m \times n}$, and $\rho(\cdot)$ is an operator on gradient that flexibly encompasses a variety of gradient-based methods such as momentum or Adam. To compensate for the dynamically changing subspaces, GaLore updates $\bm P_k$ periodically by computing the full SVD of $\bm G_{t_k=1}$ at the beginning of every $T$ epoch. %\laura{I think this part could have been a little more ``tutorial" like and explaining what the methods really do.} \smk{could be good to also give an example of this $\rho(\cdot)$ function}
We present the algorithm applying GaLore to Adam in \Cref{alg:low_rank_adam} in the appendices, where $\rho(\cdot)$ is instantiated as the Adam update rule. As shown in \cite{zhao2024galore}, GaLore significantly improved memory efficiency when training LLMs, reducing optimizer state memory usage by up to 65.5\%. Inspired by \Cref{eqn:gradient-update}, there has also been another ReLoRA method \cite{lialin2024relora} that uses randomly initialized LoRA pairs $\bm B_k\bm A_k$ to approximate $\Delta \bm W_{T_k}$ and merge them with the full weights $\bm W_{0}$ every $T$ iterations. Compared to ReLoRA, GaLore achieves better performance with similar memory cost, but at the cost of extra computation time due to extra projection operators and computing the initial full rank gradient $\bm G_{t_k=1}$ for each $k=1,\cdots,K$. Moreover, when ReLoRA uses the full-rank gradient as initializations, it can be shown that it is equivalent to GaLore; more detailed proof is shown in \Cref{app:low-rank-training}.

% Specifically, based upon \eqref{eqn:gradient-update}, \qq{needs to revise here} the ReLoRA method \cite{lialin2024relora} treats the (left) principal singular subspace of gradients $\bm U_k$ stationary
% for a short period (e.g., $T$ iterations per each epoch) and approximate the gradient update $\bm G_{t_k} \approx \bm W_{\bm A}^{t_k} \bm W_{\bm B}^{t_k}$ through low-rank factorization and optimizes $\bm W_{\bm A}^{t_k}$ and $\bm W_{\bm B}^{t_k}$ while keep $\bm W_0$ frozen. After training $T$ iterations within each window, it merges the low-rank updates $\Delta \bm W_{T_k} = \sum_{t_k=1}^{T}  \bm W_{\bm A}^{t_k} \bm W_{\bm B}^{t_k}$ into the weight matrix $\bm W_0$ for $k = 1,2,\cdots$ and resets the low-rank updates to repeat the process. 

\begin{figure}[t]
\includegraphics[width=\linewidth]{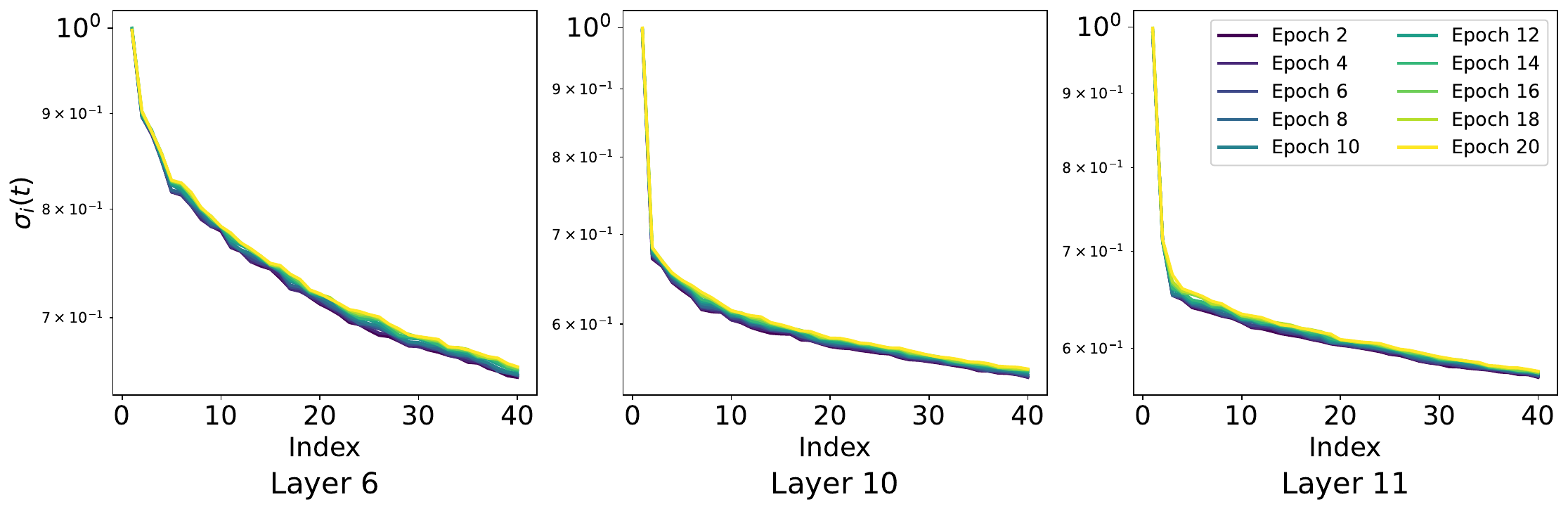}
     \caption{\textbf{Singular values across layers of the ViT of the projection weight matrix.} While the singular values of the shallower layers do not clearly exhibit a low-rank structure, the deeper layers certainly do. }
    \label{fig:vitfc2}
\end{figure}

%\qq{needs to refer to Fig 7}
\paragraph{Improving Low-Rank Training via Adaptive Rank Allocation across Different Layers and Components} While GaLore offers remarkable improvements in memory efficiency, its uniform treatment of different layers and usage of the same training window can be suboptimal for large-scale deep learning models. This phenomenon is illustrated in \Cref{fig:vitfc2}, which shows the singular values of different MLP layers in ViT models throughout training. As observed, deeper layers tend to exhibit more pronounced low-rank structure compared to shallower layers. A similar trend is evident in the value weight matrices of the self-attention components; see \Cref{app:low-rank-training} for details.

Therefore, the follow-up results, such as Weight Low-Rank Projection (WeLore) \cite{jaiswal2024WeLore}, reveal the significant differences in effective ranks across different layers and training epochs in LLMs. AdaRankGrad \cite{refael2025adarankgrad} demonstrates that layer gradient ranks gradually decrease as training converges, asymptotically approaching one. Leveraging this, it adaptively reduces gradient rank during Adam optimization using an efficient online low-rank projection update. The method dynamically adjusts each layer’s projection dimension based on training progress, exploiting the gradual reduction in gradient dimensionality to perform updates in a lower-dimensional space, thereby reducing memory usage. Projection matrices are updated based on a convergence criterion, ensuring timely adjustments and preventing premature or delayed subspace transitions, leading to faster convergence. Compared to GaLore, AdaRankGrad further improves memory efficiency for pretraining of LLMs \cite{refael2025adarankgrad}. 

Moreover, different components of Transformer models exhibit distinct low-rank structures that can be leveraged for compression. For instance, the recent DeepSeek V3 model \cite{liu2024deepseek} achieves significant memory savings during training by only factorizing the query matrix into a low-rank form within the multi-head attention mechanism, all while maintaining strong performance. Finally, in a broader perspective, the low-rank structure is one example of a more general class of compressive structures that have gained recent attention. We refer interested readers to in the appendices for a more detailed discussion.

\subsection{Implications of Low-Rank Structures}

% \pw{Peng: Finish this subsection and add a discussion on an AAAI paper!}

\subsubsection{Low-Rank Compression of Neural Networks}\label{subsec:compress}

\revision{Low-rank structures that emerge during neural network training have long been regarded as an effective tool for compressed network design \cite{idelbayev2020low} or post-training compression \cite{cheng2018model}. A common pipeline is to first train a full model, then approximate its weight matrices using low-rank decomposition, such as SVD and Tucker decomposition, and finally fine-tune the compressed model to recover accuracy \cite{cheng2018model}.  For instance, \cite{denton2014exploiting} demonstrated that convolutional filters in CNNs often exhibit significant redundancy and can be well-approximated by low-rank tensor factorizations. \cite{jaiswal2024WeLore} highlighted the consequential relationship between low-rank weights and low-rank gradients \cite{zhao2024galore}, which can be leveraged for block-adaptive transformer compression without need for fine-tuning. % leading to substantial reductions in computation and storage while maintaining accuracy. %Recently, \cite{CVPR2020LowRank} proposed to learn both the ranks and the weight matrices jointly, enabling adaptive compression across layers and further improving the balance between efficiency and generalization. 
%Recently, \cite{schotthofer2025dynamic} introduced a dynamic low-rank training approach augmented with spectral regularization, which not only compresses the model during training but also improves robustness by controlling the condition number of low-rank cores. 
Complementing these weight-centric approaches, \cite{yu2023compressing} showed that transformer activations, rather than weights, exhibit stronger low-rank structure, motivating feature-based compression strategies. }

\revision{
\subsubsection{Relation to Low‑Rank INRs and PINNs.}
\label{sec:INRPINN}
Beyond using low‑rank updates to reduce the cost of training large networks, an adjacent thread in scientific machine learning, including \emph{implicit neural representations (INRs)} and \emph{ physics‑informed neural networks (PINNs)}, explicitly impose low‑rank structure onto the representation itself. In low‑rank PINNs, 
%(LR‑PINNs), 
a hypernetwork first learns a shared low‑rank basis, after which only a small set of coefficients are adapted per new PDE instance; this meta‑learning approach has been shown to mitigate PINN failure modes and drastically cut adaptation cost~\cite{cho2023hyperlrpinn}. Building on this architecture, \emph{Sparse Physics‑Informed Backpropagation} performs the backward pass on a reduced surrogate 
%(\emph{FastLRNR}) 
derived from the low‑rank representation, yielding substantial backpropagation speedups while retaining accuracy when mapped back to the full model~\cite{cho2025fastlrnr}. These ideas resonate with our gradient‑subspace viewpoint: both restrict learning to a low‑dimensional manifold, but the INR/PINN route enforces the manifold at the architectural level (often connected to classical reduced‑order modeling such as EIM/DEIM‑style sampling) while our discussion emphasizes low‑rank geometry of the update. 
%Recent theory further supports this design by proving that low‑rank neural representations can approximate entropy solutions with bounded rank regardless of shock complexity~\cite{rim2024lrnr_entropy}.% and by connecting to low‑rank decompositions widely used in implicit neural fields (e.g., CP/TT/K‑plane factorizations) and neural operators~\cite{chen2022tensorf,obukhov2022ttnf,fridovichkeil2023kplanes,li2021fno,kossaifi2023mg_tfno}.
}

\section{Low-Rank Structure at the Convergence of Gradient Methods}
\label{sec:LRobj}

As we have seen above, low-rank structure can arise due to the dynamics of a particular optimization algorithm that enforces (or exploits in the case of LoRA methods) the low-rank structure throughout the optimization process (i.e., at every iteration of the optimization algorithm the low-rank structure is present).   However, low-rank structure also arises through other means in a variety of deep learning settings.  
In particular, in this section we explore various conditions and algorithms where %low-rank structure is introduced not necessarily at every iteration but rather 
%\laura{Removed last part, seemed redundant}
the final solution the algorithm converges to has low-rank structure.  If we recall the simple example of overparameterized linear regression from \Cref{sec:imp_struct}, we observed that 
%if we initialized the parameters at the origin $(\bm w^{(0)} = \bm 0)$, then 
of the infinitely many possible choices of parameters that perfectly fit the data, gradient descent would implicitly find the parameters that fit the data with the minimum $l_2$ norm.
%
% \begin{equation}
% \underbrace{\min_{\bm w} \frac{1}{2} \| \bm y - \bm X \bm w\|_F^2}_{\text{Original Overparameterized Problem}} \substack{\Longrightarrow \\ \text{Gradient Descent}} \underbrace{\min_{\bm w} \| \bm w \|_2 \ \ \text{s.t.} \ \ \bm y = \bm X \bm w.}_{\text{Problem Implicitly Solved by Gradient Descent}}
% \end{equation}
%
As a result, $l_2$ regularization has been implicitly added to the problem due to the dynamics of gradient descent, and in this section, we will see a variety of additional settings where 
\revision{the network training process induces implicit $l_2$ regularization on the activations of the network.}
%$l_2$ regularization \todo{HERE DISCUSS ON ACTIVATION} also gets implicitly added as part of the network training process.  
At first, this does not present an immediate link to low-rank structure, but as we discuss in the next section, in many models consisting of sequential application of linear operators, applying $l_2$ regularization to the model parameters is equivalent to adding low-rank regularization to the \revision{output activation of the final layer of the network, e.g., the} overall prediction of the model. \revision{Finally, it is additionally worth noting that forms of regularization beyond the $l_2$ norm on the model parameters can be induced by other optimization algorithms: for instance, $\ell_1$-norm implicit bias can be induced by coordinate descent. We refer the readers to the work of \cite{pmlr-v80-gunasekar18a} and references therein. }

\subsection{\revision{Low-Rank Variational Forms}} %Low-Rank from $l_2$ Regularization 
% \pw{This subsection title seems wired?} \bdh{I agree, but I'm not sure what a better option would be. Suggestions?} \laura{I think you should remove this subsection and merge it with the one below, and keep the below subsection title. I don't really see the point of having them be distinct subsections.} \bdh{I'm a little hesitant to totally remove subsections since they get referrenced multiple times throughout the paper. I proposed another subsection title here.}

\label{sec:var_form}

A key link that arises in the emergence of low-rank structure is the general idea that in many settings adding $l_2$ regularization on the weight parameters can result in adding low-rank inducing regularization on the predictions of the model on the training set.  In particular, recall the deep matrix factorization objective \Cref{eq:deep_mat_factor} in \Cref{sec:train_dyn},
%
%\begin{align}
%    \min_{\bm \Theta}\, \underbrace{\frac{1}{2}\| \bm W_L \cdots \bm W_1 - \bm \Phi\|_F^2}_{l(\bm \Theta)} + \frac{\lambda}{2} \sum_{l=1}^L \|\bm \W_l\|_F^2.
%\end{align}
%
%From Section \ref{sec:train_dyn} 
where we have observed that optimizing this objective with gradient descent will result in low-rank parameter matrices when $l_2$ regularization is used (i.e., $\lambda > 0$), commonly referred to as \emph{weight decay} in the context of deep learning, and a natural question is whether this behavior as a idiosyncratic to the gradient descent dynamics or a more general phenomenon.  As we will see below, low-rank solutions are, in fact, promoted by the use of $l_2$ regularization on the model weights due to variational definitions of Schatten-$p$ matrix norms and quasi-norms.

%\laura{I removed 'recall, that given a matrix' because we never said it before}
Given a matrix $\bm M \in \R^{m \times n}$ and a value of $p>0$ the Schatten-$p$ matrix norm  (for $p \geq 1$) or quasi-norm (for $0 < p < 1$) is notated as $\| \bm M\|_p$ and defined:
\begin{equation}
\| \bm M\|_p = \left( \sum_{i=1}^{\min\{m,n\}} \sigma_i(\bm M)^p \right)^{\tfrac{1}{p}}
\end{equation}
where $\sigma_i(\bm M)$ denotes the \nextrev{$i$-th largest singular value} of $\bm M$. This includes many well-known matrix norms, such as the Frobenius norm $(p=2)$, the $l_2$ induced operator norm (the largest singular value of $\bm M$ for $p=\infty$), and the nuclear norm $(p=1)$.  The nuclear norm in particular has received considerable attention in the literature on low-rank matrix recovery, as it is the tightest convex approximation of the rank of a matrix, which allows it to be used as a convex regularizer to induce low-rank solutions.  Moreover, if one considers Schatten-$p$ quasi-norms with $0 < p <1$ (so called because they no longer satisfy the triangle inequality) one achieves an increasingly tighter approximation of the rank of a matrix in the limit as $p \rightarrow 0$, with the downside being that the quasi-norm is no longer convex for $0 < p < 1$.  

To show the connection between Schatten-$p$ (quasi)norms and the matrix factorization objective in \Cref{eq:deep_mat_factor} we will rely on a variational definition of $\| \bm M \|_p$.  In particular, the following result can be shown:
%
% \begin{corollary} (Adapted from \cite{shang2020unified}) Given a matrix $\bm M \in \R^{m \times n}$ and $L \geq 2$, define matrices $\bm W_L \in \R^{m \times d}$, $\bm W_1 \in \R^{d \times n}$, and $\bm W_l \in \R^{d \times d}$ for $l = 2,\ldots,L-1$.  Then, for any matrix $\bm M$ with $\text{rank}(\bm M) \leq d$  we have the following result:
% Tianjiao: I touched the corrolary a bit to make it fit into two lines since we are short on space; the original is backed up above
\begin{corollary} (Adapted from \cite{shang2020unified}) Given a matrix $\bm M \in \R^{m \times n}$ with $\text{rank}(\bm M) \leq d$ and $L \geq 2$, define matrices $\bm W_L \in \R^{m \times d}$, $\bm W_1 \in \R^{d \times n}$, and $\bm W_l \in \R^{d \times d}$ for $l = 2,\ldots,L-1$.  Then, we have that
\begin{equation}\label{eq:variational-shatten-p}
\frac{2}{L} \left(\| \bm M \|_{\tfrac{2}{L}}\right)^{\tfrac{2}{L}} = \min_{\bm W_L, \ldots, \bm W_1}  \frac{1}{2} \sum_{l=1}^L \| \bm \W_l \|_F^2 \ \ \text{s.t.} \ \ \bm W_L \bm W_{L-1} \cdots \bm W_2 \bm W_1 = \bm M.
\end{equation}   
\label{cor:variational-shatten-p}
\end{corollary}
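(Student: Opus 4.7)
My plan is to prove the identity by establishing matching upper and lower bounds on the constrained minimization on the right. For the upper bound, I will exhibit an explicit factorization of $\bm M$ built from its SVD that attains the claimed value. For the lower bound, I will apply AM-GM to pass from a sum of squared Frobenius norms to a product of Frobenius norms, and then invoke a Hölder-type inequality for Schatten classes to control that product below by $\|\bm M\|_{2/L}$.

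For the construction, I would take the SVD $\bm M = \bm U \bm \Sigma \bm V^\top$ with $\bm \Sigma = \mathrm{diag}(\sigma_1,\ldots,\sigma_d)$, and define
\begin{align*}
\bm W_L = \bm U \bm \Sigma^{1/L}, \quad \bm W_l = \bm \Sigma^{1/L}\ \text{for}\ 2 \leq l \leq L-1, \quad \bm W_1 = \bm \Sigma^{1/L} \bm V^\top.
\end{align*}
Using $\bm U^\top \bm U = \bm V^\top \bm V = \bm I$, the product telescopes to $\bm M$, and by construction every factor has the same Frobenius norm $\|\bm W_l\|_F^2 = \sum_i \sigma_i^{2/L}$, so the objective reduces to an explicit expression in the singular values of $\bm M$ and realizes the claimed value. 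This also shows that a minimizer exists (so "min" is justified) via symmetry of the construction.

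For the lower bound, given any admissible factorization, AM-GM yields $\tfrac{1}{L}\sum_l \|\bm W_l\|_F^2 \geq \bigl(\prod_l \|\bm W_l\|_F\bigr)^{2/L}$, with equality iff all $\|\bm W_l\|_F$ agree. Next, I will apply the multiplicative Hölder inequality for Schatten $p$-(quasi)norms, $\|\bm A \bm B\|_r \leq \|\bm A\|_p \|\bm B\|_q$ with $1/r = 1/p + 1/q$, iteratively with $p = q = 2$ at every split. Since the Frobenius norm coincides with the Schatten-$2$ norm, this chain produces $\|\bm M\|_{2/L} \leq \prod_l \|\bm W_l\|_F$. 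Combined with AM-GM, this gives the desired lower bound on $\tfrac{1}{2}\sum_l \|\bm W_l\|_F^2$, matching the value attained by the construction.

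The main difficulty I anticipate is the Schatten–Hölder step in the quasi-norm regime $p = 2/L < 1$ that occurs when $L > 2$, where the standard Banach-space proof does not directly apply. I would address this by reducing to the scalar Hölder inequality on singular-value sequences via a Ky Fan–type majorization such as $\prod_{i=1}^k \sigma_i(\bm A \bm B) \leq \prod_{i=1}^k \sigma_i(\bm A)\sigma_i(\bm B)$, which carries over into the quasi-norm regime. A secondary, more routine bookkeeping task is checking that the SVD construction simultaneously saturates AM-GM (immediate, since all factors have equal Frobenius norm) and the Schatten–Hölder bound (by direct evaluation on the diagonal factors $\bm \Sigma^{1/L}$), so that the upper and lower bounds meet and pin down the minimum exactly.
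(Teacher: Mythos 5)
Your overall strategy is sound and is essentially the standard argument for this variational characterization (note the paper itself gives no proof of the corollary, deferring to \cite{shang2020unified}, so a self-contained argument is exactly what is needed): an SVD-based factorization for attainability, and AM--GM plus a Schatten--H\"older inequality for the lower bound, with Horn's log-majorization $\prod_{i\le k}\sigma_i(\bm A\bm B)\le\prod_{i\le k}\sigma_i(\bm A)\,\sigma_i(\bm B)$ correctly identified as the way to justify the H\"older step in the quasi-norm regime $2/L<1$. One imprecision: iterating H\"older ``with $p=q=2$ at every split'' is not the right bookkeeping; the peeling you need is $\|\bm A\bm B\|_{2/k}\le\|\bm A\|_{2}\,\|\bm B\|_{2/(k-1)}$ (Frobenius factor off one side, quasi-norm on the rest), which your majorization-plus-scalar-H\"older reduction does deliver, so this is a wording fix rather than a real gap. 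The padding needed when $\rank(\bm M)<d$ to match the prescribed factor dimensions is routine but worth a sentence.

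The substantive problem is the constant, which you assert rather than compute. Your construction has $\|\bm W_l\|_F^2=\sum_i\sigma_i^{2/L}=\|\bm M\|_{2/L}^{2/L}$ for every $l$, so the objective evaluates to $\tfrac{L}{2}\,\|\bm M\|_{2/L}^{2/L}$, and your lower bound (AM--GM combined with the H\"older chain) gives precisely $\tfrac12\sum_l\|\bm W_l\|_F^2\ \ge\ \tfrac{L}{2}\,\|\bm M\|_{2/L}^{2/L}$. Hence your argument proves $\min\ \tfrac12\sum_l\|\bm W_l\|_F^2=\tfrac{L}{2}\bigl(\|\bm M\|_{2/L}\bigr)^{2/L}$, equivalently $\|\bm M\|_{2/L}^{2/L}=\min\ \tfrac1L\sum_l\|\bm W_l\|_F^2$, which agrees with the prefactor $\tfrac{2}{L}$ displayed in \Cref{cor:variational-shatten-p} only when $L=2$. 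Claiming the construction ``realizes the claimed value'' glosses over this: either you mis-evaluated, or you did not notice that the identity as printed cannot hold for $L>2$ (the displayed $\tfrac{2}{L}$ appears to be a typo for $\tfrac{L}{2}$). A complete write-up should compute the attained value explicitly, state the identity with the correct constant, and flag the discrepancy with the statement.
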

The result in \Cref{eq:variational-shatten-p} implies that if we minimize any matrix factorization model with Frobenius regularization on the matrix factors, then we are effectively minimizing a problem with Schatten-$\tfrac{2}{L}$ regularization on the product of the $\bm W$ matrices (potentially also with rank constraints on the product if $d < \min\{m,n\})$.  

\revision{Concretely, consider the following three alternative problems:
\begin{align}
  & \min_{\bm W_L, \ldots, \bm W_1}  f(\bm W_L \bm W_{L-1} \cdots \bm W_2 \bm W_1) + \frac{\lambda}{2}  \sum_{l=1}^L \| \bm W_l \|_F^2 \label{eq:gen_W} \\
     & \min_{\bm M} f(\bm M) + \frac{2 \lambda}{L} \left(\|\bm M\|_{\tfrac{2}{L}} \right)^{\tfrac{2}{L}} \ \ \text{s.t.} \ \ \text{rank}(\bm M) \leq d. \label{eq:gen_M} \\
  & \min_{\bm W_L, \ldots, \bm W_1, \bm M}  f(\bm M) + \frac{\lambda}{2} \sum_{l=1}^L \| \bm W_l \|_F^2  \ \ \text{s.t.} \ \ \bm W_L  \bm W_{L-1} \cdots \bm W_2 \bm W_1 = \bm M \label{eq:gen_WM} 
\end{align} Using the variational identity in \Cref{eq:variational-shatten-p}, the problem in \Cref{eq:gen_W} can be rewritten by introducing \(\bm M=\prod_{\ell} \bm W_\ell\) and minimizing over \(\{\bm W_\ell\}\) for each fixed \(M\):
\[
\min_{\bm W_L, \ldots, \bm W_1}  f(\bm W_L \bm W_{L-1} \cdots \bm W_2 \bm W_1) + \frac{\lambda}{2}  \sum_{l=1}^L \| \bm W_l \|_F^2
\;\equiv\;
\min_{\bm M}\Big\{\,f(\bm M)+\frac{2\lambda}{L}\|\bm M\|_{2/L}^{2/L}\,\Big\},
\]
provided \(d \ge \operatorname{rank}(\bm M)\). This yields problem \Cref{eq:gen_M}; inserting \(\bm M\) as an explicit variable with the constraint \(\bm M=\prod_{\ell}\bm W_\ell\) gives \Cref{eq:gen_WM}. Thus, these three problems have the same optimal value; moreover, if \(\{\bm W_\ell^\star\}\) is a global minimizer of \Cref{eq:gen_W}, then \(\bm M^\star=\prod_{\ell}\bm W_\ell^\star\) is a global minimizer of \Cref{eq:gen_M}, and conversely any global minimizer \(\bm M^\star\) of \Cref{eq:gen_M} admits a balanced 
factorization \(\{\bm W_\ell^\star\}\) that attains the inner minimum in \Cref{cor:variational-shatten-p}, making \((\bm M^\star,\{\bm W_\ell^\star\})\) optimal for \Cref{eq:gen_WM}.}

As an example, if we return to the deep matrix factorization problem in \Cref{eq:deep_mat_factor} we have that the problem is equivalent to finding an approximation of $\bm \Phi$ by a matrix with small singular values (and hence low-rank):
\begin{align}
%\eqref{eq:deep_mat_factor} : 
\min_{\bm W_L, \ldots, \bm W_1} & \ \ \ \frac{1}{2} \| \bm W_L \cdots \bm W_1 - \bm \Phi\|_F^2 + \frac{\lambda}{2} \sum_{l=1}^L \| \bm W_l\|_F^2 \iff \\ 
\nonumber \min_{\bm M : \ \text{rank}(\bm M) \leq d} & \left\{ \frac{1}{2} \| \bm M - \bm \Phi \|_F^2 + \frac{2 \lambda}{L} \left( \| \bm M \|_{\tfrac{2}{L}} \right)^{\tfrac{2}{L}} = \frac{1}{2} \| \bm M - \bm \Phi \|_F^2 + \frac{2 \lambda}{L} \sum_{i=1}^d \sigma_i(\bm M)^{\tfrac{2}{L}} \right\}.
\end{align}
\nextrev{Here, $\iff$ denotes equivalence in the sense that the problems admit corresponding minimizers,
with $\bm M^* = \prod_{\ell=1}^L \bm W_\ell^*$.} If we take the simple example of $L=2$ (and with $d$ sufficiently large to ignore the rank constraint), then the regularization on $\bm M$ becomes the nuclear norm, and the closed form solution for the optimal $\bm M$ is well-known to be the soft thresholding of the singular values of $\bm \Phi$ -- i.e., if we let $\bm U \bm S \bm V^\top = \bm \Phi$ be a singular value decomposition of $\bm \Phi$, then the optimal value is $\bm M = \bm U (\bm S - \lambda \bm I)_+ \bm V^\top$.  As a result, if $\bm \Phi$ has any singular values smaller than $\lambda$, the recovered solution will also be low rank.  Likewise, for deep matrix factorizations $(L > 2)$ the singular values of the final recovered $\bm M$ matrix will also be shrunk more aggressively as the depth of the regularization ($L$) increases.

\subsection{Low-Rank \revision{Activation} from Implicit Regularization}

\label{sec:low_rank_imp_reg}

%\bdh{The problem formulation in \eqref{eq:deep_mat_factor} is closely related to what arises in low-rank from implicit regularization results.  I wrote the rough blob below for concreteness, and we could discuss how to integrate with other material.}

Building on this background we developed above, we now consider the role of the implicit regularization of gradient descent in promoting low-rank structure.  In particular, %the authors of 
\revision{several works \cite{pmlr-v80-gunasekar18a,lyu2020gradient}} analyze classification models (i.e., when $\bm y$ is discrete) trained with gradient descent using 1) exponential family loss functions and 2) models whose predictions are \textit{positively homogeneous} with respect to the model parameters. 
Specifically, if we notate the network predictions as $\hat{\bm y} := f_{\bm \Theta}(\bm x)$, the authors of \cite{lyu2020gradient} consider loss functions that take the form $e^{g(\hat{ \bm y}, \bm y)}$ for functions $g$ that satisfy certain smoothness and monotonicity conditions. Common losses such as the cross-entropy loss and the exponential loss are special cases (see \cite{lyu2020gradient} for details).  
Moreover, the authors assume that the model is positively homogeneous in the model parameters\footnote{Recall that deep linear networks are positively homogeneous along with networks that use many common non-linearities, such as ReLU, leaky ReLU, max-pooling, etc.} -- that is there exists $p > 0$ such that for any model input $\bm x$, choice of model parameters $\bm \Theta$, and $\alpha \geq 0$ we have $f_{\alpha \bm \Theta}(\bm x) = \alpha^p f_{\bm \Theta}(\bm x)$. With the two conditions, the authors of \cite{lyu2020gradient} then show that if the model perfectly classifies the training data, then the particular solution found by gradient descent will be the maximum margin classifier (that is, gradient descent will find a classifier which perfectly classifies the training data with minimal squared norm on the model parameters). \revision{Moreover, in the history of the development of these ideas, the authors of \cite{pmlr-v80-gunasekar18a} showed a similar result for matrix factorization models, which is a special case of the results of \cite{lyu2020gradient}.} % \laura{Two thoughts here: 1. perhaps we should instead list the exponential loss and positively homogeneous models as two assumptions, and then clearly use those assumptions in the result below? 2. We ought to make it clear we are here focusing on a classification problem, i.e. labels y take values from a finite or discrete set. ``max margin classifier" comes out of nowhere here 3. Why is the result a corollary? Maybe call it a proposition, since we're not including the main theorem that it's a corollary of? Or theorem is fine with me.} \bdh{Homogeneity is needed, but their result is more general than just the exponential loss and includes other losses like cross-entropy but the result is more technical to state in that case.}

To better illustrate this general statement, consider the simplified setting of binary classification (i.e., $\bm y \in \{-1, 1\}$) with a deep linear network trained with the exponential loss,
\begin{equation}
\label{eq:exp_loss_linear}
\min_{\bm \Theta} F(\bm \Theta) = \sum_{i=1}^N \exp\{ -(\bm W_L \cdots \bm W_1 \bm x_i \cdot \bm y_i )\}.
\end{equation}
In this setting, we have that the model predictions are given by $f_{\bm \Theta}(\bm x) = \bm W_L \cdots \bm W_1 \bm x$ which is clearly positively homogeneous in the model parameters $\bm \Theta = \{ \bm W_L, \ldots, \bm W_1\}$.  Also, note that the level-sets for the exponential loss are unbounded, so there is no global minimizer of the loss and the magnitude of the parameters will grow unbounded during the optimization, so as a result it is necessary to consider the direction the parameters converge to by analyzing the normalized parameters $\bar {\bm \Theta} = \frac{\bm \Theta}{\| \bm \Theta \|_F}$, about which the following result can be shown.
%
%
%that if the model is trained using gradient descent with a loss in an exponential family and at any point during training the model perfectly classifies the training data, then the model will converge to a max-margin solution:
%
\begin{theorem} (Informal - adapted from \cite{lyu2020gradient}) Consider the following training problem for a deep linear model trained with the exponential loss: %\td{One thing discussed in the meeting that would be great if we could elaborate a bit more: Is this loss used in deep learning? If not, what's its connection with the actual losses used? (e.g., is it a simplification to make analysis easier) }
\begin{equation}
\min_{\bm \Theta} F(\bm \Theta) = \sum_{i=1}^N \exp\{ -(\bm W_L \cdots \bm W_1 \bm x_i \cdot \bm y_i )\}.
\end{equation}
If the model is trained with gradient descent with sufficiently small step-size and at any point during training the model perfectly classifies the data (i.e., $\bm W_L \cdots \bm W_1 \bm x_i \cdot \bm y_i > 0, \ \forall \ i =1, \ldots, n$) then in direction the parameters will converge to a Karush–Kuhn–Tucker (KKT) point of the max-margin classification problem.  Specifically, any limit point of the parameter directions $\overline{\bm \Theta} = \frac{\bm \Theta}{\|\bm \Theta \|_F}$ will be a KKT point of the problem:
\begin{equation}
\label{eq:max_margin}
\min_{\bm \Theta} \frac{1}{2} \sum_{l=1}^L \|\bm \W_l \|_F^2 \ \ \text{s.t.} \ \ \bm W_L \cdots \bm W_1 \bm x_i \cdot \bm y_i \geq 1, \ \forall \ i =1, \ldots, n 
\end{equation}
That is, the parameter directions $\overline{\bm \Theta}$ found be gradient descent on \Cref{eq:exp_loss_linear} will be a KKT point of \Cref{eq:max_margin} to within a non-negative scaling - namely if $\bm \Theta_{KKT}$ is a KKT point of \Cref{eq:max_margin}, then $\exists \alpha > 0$  such that $\alpha \overline{\bm \Theta} = \bm \Theta_{KKT}$.
\label{thm:linexp}
\end{theorem}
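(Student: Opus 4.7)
The plan is to follow a Lyu--Li style smoothed-margin analysis: I would convert the unconstrained minimization of the unbounded exponential loss into an implicit maximization of a normalized margin, from which the KKT conditions of the constrained problem \Cref{eq:max_margin} emerge in the limit. Two structural ingredients drive the argument: (i) the $L$-positive homogeneity of the deep linear predictor $f_{\bm \Theta}(\bm x) = \bm W_L \cdots \bm W_1 \bm x$ in the concatenated parameters $\bm \Theta$, meaning $f_{\alpha \bm \Theta}(\bm x) = \alpha^L f_{\bm \Theta}(\bm x)$ for $\alpha \geq 0$; and (ii) the fact that $F$ has no finite minimizer, so once the data is separated, any further decrease in loss must be accompanied by growth in $\|\bm \Theta\|_F$.

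I would first verify the warm-up facts. Once the iterates achieve perfect classification, gradient descent with a sufficiently small step-size continues to strictly decrease $F$, so $F(\bm \Theta^{(t)}) \to 0$. Combined with $L$-homogeneity (for any direction with strictly positive margins on all samples, $F(\alpha \bm \Theta) \to 0$ only as $\alpha \to \infty$), this forces $\|\bm \Theta^{(t)}\|_F \to \infty$, so the directional iterates $\overline{\bm \Theta}^{(t)} = \bm \Theta^{(t)} / \|\bm \Theta^{(t)}\|_F$ live on the unit sphere and admit convergent subsequences.

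The crux is to introduce the (nonsmooth) margin and its normalized version
\begin{equation*}
\gamma(\bm \Theta) = \min_{i \in [n]} y_i\,\bm W_L \cdots \bm W_1 \bm x_i, \qquad \bar\gamma(\bm \Theta) = \gamma(\bm \Theta) / \|\bm \Theta\|_F^{L},
\end{equation*}
together with the smoothed surrogate
\begin{equation*}
\tilde\gamma(\bm \Theta) = \frac{-\log F(\bm \Theta)}{\|\bm \Theta\|_F^{L}}.
\end{equation*}
The two log-sum-exp inequalities $\gamma - \log n \le -\log F \le \gamma$ show that $\tilde\gamma$ and $\bar\gamma$ differ by a term of order $\|\bm \Theta\|_F^{-L} \to 0$, so they have the same limit. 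By $L$-homogeneity, $\bar\gamma$ depends only on the direction $\overline{\bm \Theta}$. The main technical step is to prove that $\tilde\gamma$ is monotonically non-decreasing along the iterates. In continuous time, this follows from computing $\tfrac{d}{dt}\tilde\gamma$ along the gradient flow and exploiting Euler's identity $\langle \bm \Theta,\nabla F(\bm \Theta)\rangle = -L\sum_i y_i f_{\bm \Theta}(\bm x_i)\exp\{-y_i f_{\bm \Theta}(\bm x_i)\}$ implied by homogeneity; the discrete-time version is then obtained via a descent-lemma bound with a small enough step-size. Since $\tilde\gamma$ is bounded above by the optimal max-margin value on the unit sphere, monotonicity implies convergence, and any limit point $\overline{\bm \Theta}^\infty$ of the directional iterates attains this value.

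To conclude, I would identify such limit points with KKT points of \Cref{eq:max_margin}. Stationarity for \Cref{eq:max_margin} sets the gradient of the regularizer, which is simply $\bm \Theta$, equal to a non-negative combination of the gradients of the active margin constraints. The loss gradient $\nabla F(\bm \Theta)$ is already a weighted combination of per-sample prediction gradients with weights $\exp\{-y_i f_{\bm \Theta}(\bm x_i)\}/F(\bm \Theta)$, and as the margins diverge these softmax-like weights concentrate on the minimum-margin samples. Rescaling by the appropriate power of $\|\bm \Theta\|_F$ and passing to the limit produces non-negative multipliers $\lambda_i \geq 0$ supported on the active set, yielding the KKT system. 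The main obstacle is the discrete-time monotonicity of $\tilde\gamma$: translating the continuous-time calculation requires careful control of the second-order error in a descent lemma for $\log F$, whose local smoothness degrades as $\|\bm \Theta\|_F$ grows, forcing a step-size condition that is uniform only when one tracks the flow carefully along the normalized sphere. A secondary difficulty is that $\gamma$ is only piecewise smooth, so extracting the multipliers at a limit point requires working with the Clarke subdifferential of the $\min$ and verifying that the softmax weights converge, along a subsequence, to a valid element of it.
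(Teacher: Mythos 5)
The paper itself does not prove \Cref{thm:linexp}: it is stated informally and imported wholesale from \cite{lyu2020gradient}, so there is no in-paper argument to compare against. Your sketch is, in substance, a faithful reconstruction of the route taken in that reference: $L$-homogeneity of $\bm W_L\cdots\bm W_1\bm x$ in $\bm \Theta$, divergence of $\|\bm\Theta\|_F$ once the data are separated, the smoothed normalized margin $-\log F(\bm\Theta)/\|\bm\Theta\|_F^L$ sandwiched within $O(\|\bm\Theta\|_F^{-L})$ of the true normalized margin, its (near-)monotonicity along the trajectory, and extraction of nonnegative multipliers concentrating on the minimum-margin samples via approximate-KKT conditions and the Clarke subdifferential. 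Your Euler-identity computation for $\langle\bm\Theta,\nabla F\rangle$ is correct, and your flagged obstacles (discrete-time descent control of $\log F$, nonsmoothness of the $\min$) are exactly where the technical work in \cite{lyu2020gradient} lies.

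One step is overstated and should be repaired: from monotonicity and boundedness of the smoothed margin you may conclude only that it converges to \emph{some} value, not that limit directions ``attain the optimal max-margin value on the unit sphere.'' The problem \Cref{eq:max_margin} is nonconvex for $L\ge 2$, and the theorem (and the original result) guarantees convergence in direction only to a KKT point, which need not be a global margin maximizer. The correct closing move is the one you gesture at afterwards: show the normalized iterates satisfy $(\epsilon,\delta)$-approximate KKT conditions with errors vanishing along the trajectory, verify a constraint qualification (Mangasarian--Fromovitz, which \cite{lyu2020gradient} establishes for homogeneous models), and then pass to the limit to obtain exact KKT at any directional limit point, without ever invoking optimality of the attained margin.
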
 

Here again, one can note that although there is no explicit regularization on the model parameters, the implicit regularization of the gradient descent dynamics finds a direction of the parameters which is a KKT point the problem in \Cref{eq:max_margin} which seeks to find a classifier with minimal $l_2$ norms on the model parameters (the maximum margin classifier)\footnote{The work of \cite{lyu2020gradient} also provides similar results for more general settings, such as alternative loss functions and multi-class classification with general positively-homogeneous models, and we refer the reader to \cite{lyu2020gradient} for the full technical details.}. Note that \Cref{eq:exp_loss_linear} and \Cref{eq:max_margin} are both non-convex so we are not guaranteed that gradient descent will find a global minimum in general, just a KKT point.  

% Now from our discussion above, recall that adding $l_2$ regularization to multi-linear model parameters results in low-rank solutions via the variational form of the Schatten-p (quasi)norms.
% Tianjiao: just nitpicking here, it seems that ``multi-linear'' is not used before or after, so maybe change it to something we have used like ``deep linear model'' or ``deep matrix factorization?''
Now from our discussion above in \Cref{sec:var_form}, recall that adding $l_2$ regularization to multi-linear model parameters results in low-rank solutions via the variational form of the Schatten-p (quasi)norms.  \revision{More specifically, the initial work of \cite{pmlr-v80-gunasekar18a} it is shown that for a matrix factorization model (i.e., $L=2$) then one recovers a nuclear norm regularized solution (see \cite{pmlr-v80-gunasekar18a} for full details). In addition, for a depth $L>2$} linear model, we can directly apply the result from \eqref{eq:variational-shatten-p} to the product of the first $L-1$ layers to see that the problem will be regularized by the Schatten-$\tfrac{2}{L-1}$ quasi-norm.  Namely, if the data can be linearly classified during training, then (in direction) the model parameters will converge to a KKT point of the problem
\begin{equation}
\min_{\bm W_L, \bm M} \frac{1}{2} \|\bm W_L \|_F^2 + \frac{2}{L-1} \left(\| \bm M \|_{\tfrac{2}{L-1}} \right)^{\tfrac{2}{L-1}}  \ \ \text{s.t.} \ \ \bm W_L \bm M \bm x_i \cdot \bm y_i \geq 1, \ \forall \ i =1, \ldots, N.
\end{equation}
Note that this can be interpreted as first finding a low-rank projection of the data $\bm M \bm x$, where $\bm M$ is regularized to be low-rank via the Schatten-$\tfrac{2}{L-1}$ quasinorm and then classifying that low-rank projection via the linear max-margin classifier $\bm W_L$.

\subsection{\revision{Low-Rank Activation from} Masked Training}
% \section*{Low-rank structures in masked training}

\label{sec:mask_train}

Another area where low-rank structure naturally arises in deep learning is when the data representation is \textit{masked} during training. More precisely, one stochastically replaces a portion of the data representation with default values such as zeros or noise, either in the space of the raw data or a latent representation of the data (e.g., in an intermediate layer of a deep network). This paradigm has been widely adopted in modern machine learning, as illustrated in the two examples. 

\begin{example}[\bf Dropout] One widespread use of masking in deep learning is the Dropout algorithm \cite{srivastava_dropout_2014}, where at each iteration of training a random subset of the \textit{neuron activations} is set to zero, while all neurons remain active during inference. This is motivated by the intuition that the induced stochasticity from masking portions of the latent representation would improve robustness and prevent overfitting. The technique has been successfully applied across various neural architectures, including convolutional neural networks (CNNs) for computer vision, recurrent neural networks (RNNs) for sequence modeling, and transformer models for natural language processing. 
%\cite{vaswani2017attention}. 
Dropout has also inspired variants that apply different masking strategies, such as DropConnect, which randomly masks weights rather than activations, and Variational Dropout, which learns per-parameter dropout rates through variational inference. 
\end{example}

\begin{example}[\bf Masked self-supervised learning] Masked self-supervised training refers to the idea that one randomly replaces a subset of the \textit{input} with a default value and trains the model to reconstruct the unobserved input from the rest. It has become a fundamental technique in modern language and image modeling. Indeed, language models are trained by learning to predict unobserved text tokens using either the surrounding (\revision{Bidirectional Encoder Representations from Transformers}) 
% \cite{devlin_bert_2019}
or previous tokens (\revision{Generative Pre-trained Transformers}) 
% \cite{radford_improving_2018}
in the sequence. The success of this paradigm motivates its extension to image modeling, where the models are trained to reconstruct unobserved image patches from observed ones (\revision{Masked Auto-Encoders}). 
% \cite{he2022masked}
Intuitively, the masking forces the network to learn efficient representations that allows inferring the correlation between the observed and unobserved data. 
\end{example}

% Another area where low-rank structure naturally arises in deep learning is when the data representation is `masked' during training, stochastically replacing a portion of the data representation with default values such as zeros or noise, either in the space of the raw data or a latent representation of the data (e.g., in an intermediate layer of a deep network).  One of the first widespread uses of masking was in the Dropout algorithm \cite{srivastava_dropout_2014}, where at each iteration of training a subset of the neuron activations are stochastically set to zero, motivated by the intuition that the induced stochasticity from masking portions of the latent representation would improve robustness and prevent overfitting.

\begin{figure}
\centering
\begin{subfigure}{0.24\textwidth}
    \centering
    \includegraphics[width=\linewidth, trim={0.3cm, 0.3cm, 0.3cm, 0.3cm}, clip]{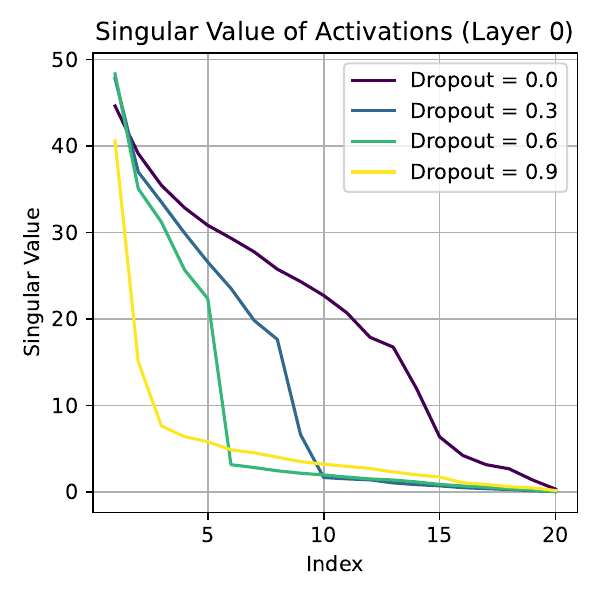}
    \caption{DLN (First Layer)}
    \label{fig:masked-dln-shallow}
\end{subfigure}
\hfill
\begin{subfigure}{0.24\textwidth}
    \centering
    \includegraphics[width=\linewidth, trim={0.3cm, 0.3cm, 0.3cm, 0.3cm}, clip]{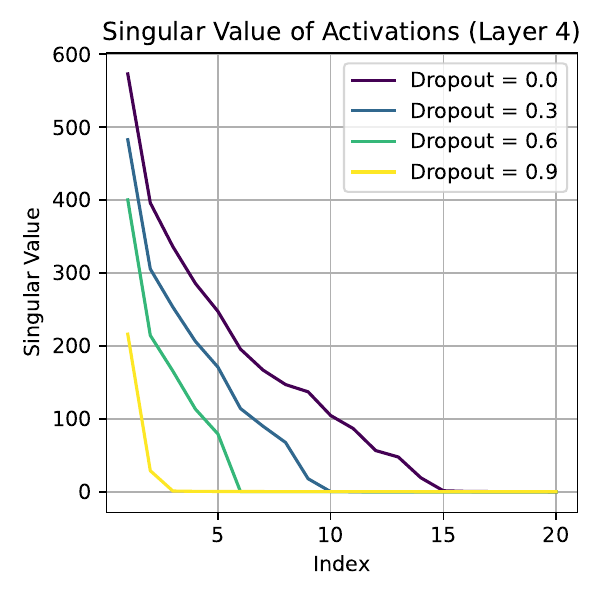}
    \caption{DLN (Last Layer)}
    \label{fig:masked-dln-deep}
\end{subfigure}
% \begin{subfigure}{0.24\textwidth}
%     \centering
%     \includegraphics[width=\linewidth, trim={0.3cm, 0.3cm, 0.3cm, 0.3cm}, clip]{Figures/masked_training/vary_dropout_rate/MLP_network.0_singular_values_by_dropout_linear.pdf}
%     \caption{MLP (Layer 1)}
%     \label{fig:masked-mlp-shallow}
% \end{subfigure}
% \hfill
% \begin{subfigure}{0.24\textwidth}
%     \centering
%     \includegraphics[width=\linewidth, trim={0.3cm, 0.3cm, 0.3cm, 0.3cm}, clip]{Figures/masked_training/vary_dropout_rate/MLP_network.6_singular_values_by_dropout_linear.pdf}
%     \caption{MLP (Layer 7)}
%     \label{fig:masked-mlp-deep}
% \end{subfigure}
\begin{subfigure}{0.24\textwidth}
    \centering
    \includegraphics[width=\linewidth, trim={0.3cm, 0.3cm, 0.3cm, 0.3cm}, clip]{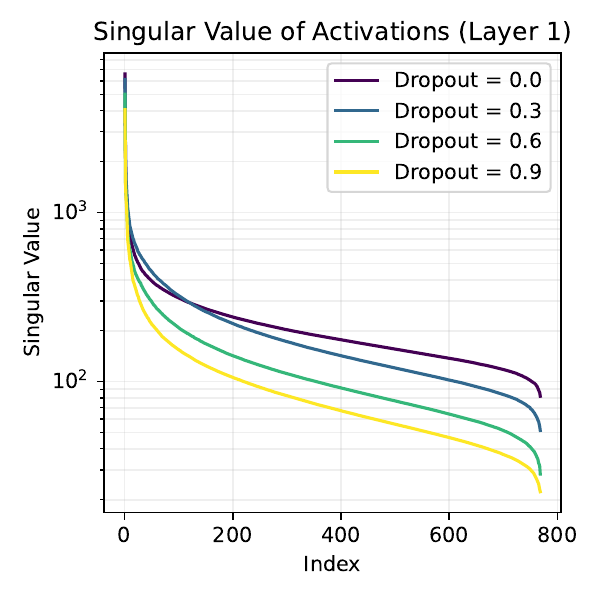}
    \caption{ResNet (First Layer)}
    \label{fig:masked-resnet-shallow}
\end{subfigure}
\begin{subfigure}{0.24\textwidth}
    \centering
    \includegraphics[width=\linewidth, trim={0.3cm, 0.3cm, 0.3cm, 0.3cm}, clip]{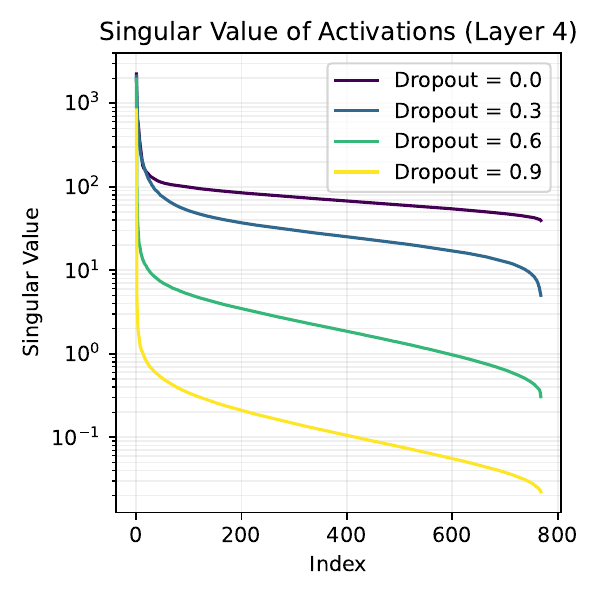}
    \caption{ResNet (Last Layer)}
    \label{fig:masked-resnet-deep}
\end{subfigure}
\caption{\textbf{Singular values of activations of the first and last layers in deep architectures from masked training with varying dropout rates $1-\mu$.} Deep Linear Network (DLN) is trained on synthetic data with MSE loss, while ResNet is trained on CIFAR-10 dataset of natural images with cross-entropy loss. As seen, the activations of deep and shallow layers of the trained network exhibit lower numerical rank when the dropout rate used in training is higher.}
\label{fig:masked-training-vary-dropout}
% \td{DONE: 1) add other architectures such as MLP or ViT, using the activations? 2) make consistent the figure style. 
% DONE: 3) fewer curves, we don't really need that many 
\end{figure}

To understand the effect of masking in training a network, initial works %Dropout training in representation learning models, initial works 
considered Dropout training in the simplified case of training a matrix factorization model (or equivalently a single hidden layer linear neural network) with the squared loss \cite{cavazza_dropout_2018}%mianjy_implicit_2018}. %(or equivalently, training a single hidden layer linear neural network) \laura{I guess you also have to say ``single hidden layer neural network trained with squared $\ell_2$ loss" to be clear here? We could alternatively put this in the low-rank structure background section and/or not worry about it since the reader made it this far ;)}  
In this context, the authors of \cite{cavazza_dropout_2018} observe that given a set of training data $(\X,\Y)$ then Dropout training can be viewed as minimizing the following objective:
%
% \begin{equation}\label{eq:DropoutNN}
% \min_{\U,\V} \E_{\z \stackrel{\text{i.i.d.}}{\sim} \text{Ber}(\theta)} \left \| \Y - \frac{1}{\theta} \U \Diag(\z) \V^\top \X \right\|_F^2 = \min_{\U,\V} \|\Y - \U\V^\top\X\|_F^2 + \frac{1-\theta}{\theta} \sum_{i=1}^d \|\u_i\|_2^2\|\X^\top \v_i\|_2^2.
% \end{equation}
% original equation written in U, V backed up above
\begin{equation}\label{eq:DropoutNN}
\min_{\W_1,\W_2} \E_{\z \stackrel{\text{i.i.d.}}{\sim} \text{Ber}(\mu)} \left \| \Y - \frac{1}{\mu} \W_2 \Diag(\z) \W_1 \X \right\|_F^2.
\end{equation}
Specifically, $\z$ is a vector of i.i.d. Bernoulli random variables with mean $\mu$, so Dropout training can be viewed as performing Stochastic Gradient Descent (SGD) on this objective by stochastically sampling a vector of $\z$ variables at each iteration of the optimization algorithm (i.e., randomly zeroing out the activations of the associated hidden units).  Then, simply expanding the quadratic term and evaluating this expectation leads to an equivalent deterministic regularized problem 
\begin{equation}\label{eq:DropoutNN-deterministic}
\min_{\W_1,\W_2} \|\Y - \W_2\W_1\X\|_F^2 + \frac{1-\mu}{\mu} \sum_{i=1}^d \|(\W_2)_{:,i}\|_2^2\|\X^\top (\W_1)_{i,:}\|_2^2.
\end{equation}
where one solves the original matrix factorization objective plus a regularization on the 
% $d$ columns of the $\U$ and $\V$ matrices
$d$ columns of the $\W_2$ matrix and $d$ rows of the $\W_1$ matrix.
% , with the $i^\text{th}$ columns denoted as $\u_i$ and $\v_i$, respectively.
Moreover, again connecting this with the variational form of the nuclear norm described above, the authors of \cite{cavazza_dropout_2018} and \cite{mianjy_dropout_2019} establish that the above problem is equivalent to a nuclear norm regularized problem.  In particular, the following problems are equivalent provided that $d$, the number of columns/rows in $(\bm W_2, \bm W_1)$ respectively, is sufficiently large:
%
%\begin{equation}
%\min_{\U,\V} \E_{\z \stackrel{\text{i.i.d.}}{\sim} \text{Ber}(\theta)} \left \| \Y - \frac{1}{\theta} \U \Diag(\z) \V^\top \X \right\|_F^2 \iff \min_{\Z} \|\Y - \Z\|_F^2 + \frac{1-\theta}{\theta} \| \Z\|_*^2,
%\end{equation}
\begin{equation}\label{eq:dropout-training-equivalent-nuclear-norm}
\min_{\W_1,\W_2} \E_{\z \stackrel{\text{i.i.d.}}{\sim} \text{Ber}(\mu)} \left \| \Y - \frac{1}{\mu} \W_2 \Diag(\z) \W_1 \X \right\|_F^2 \iff \min_{\Z} \|\Y - \Z\|_F^2 + \frac{1-\mu}{\mu} \| \Z\|_*^2,
\end{equation}
%
% where the equivalence is in the sense that optimal solutions $(\U^*,\V^*)$ and $\Z^*$ will satisfy $\Z^* = \U^* (\V^*)^\top \X$.
where the equivalence is in the sense that optimal solutions $(\W_1^*,\W_2^*)$ and $\Z^*$ will satisfy $\Z^* = \W_2^* \W_1^* \X$.
From this, one can observe that the use of Dropout training induces low-rank solutions due to the low-rank promoting properties of the nuclear norm.  

Moreover, this work was then later extended to provide a similar characterization for Dropout applied to deep linear networks \cite{mianjy_dropout_2019} and with Dropout applied to a final layer of a non-linear network with potentially different statistics (i.e., non-Bernoulli) on the Dropout variables \cite{pal_regularization_2020}.  In all cases, Dropout is shown to induce low-rank solutions, and the authors of \cite{arora_dropout_2021} further show that this also constrains statistical model complexity measures of the network, such as the Rademacher complexity. % \td{We could potentially also add \cite{wei_implicit_2020}. Well this paper does not have much theory related to low-rankedness}

We conduct experiments to visualize the effect of dropout on the rank of the activations. We begin with the regime covered by the theoretical results reviewed above (i.e., last layer of deep linear network trained by MSE loss), and then explore cases not yet modeled by existing theory (i.e., shallower layers, non-linear networks such as multi-layer perceptrons (MLP) and residual network (ResNet), and cross entropy loss). Experiment setups are detailed in \Cref{supp:exp-details} in the supplementary materials. As seen in \Cref{fig:masked-training-vary-dropout}, the activations of both deep and shallow layers of the trained network exhibit lower numerical rank when the dropout rate used in training is higher. \Cref{fig:masked-training-vary-iterations} further shows that the activations of the layers gradually become low-rank as masked training proceeds.

More recently, other variants of masking data or latent representations of data have received considerable attention in applications of Masked Autoencoders (MAEs) and Masked Language Modelling (MLM). In this setting, empirical evidence has suggested that the features learned by MAEs and MLMs are approximately low-rank, suggesting that a potential general property of masking is to induce low-rank structures, although it remains an open direction for future work to theoretically establish this rigorously. For instance, the authors of \cite{zhang_how_2022} observe empirically that the features of MAEs at convergence have lower effective rank than at random initialization. They provide a theoretical connection between MAEs and contrastive learning, albeit how this relates to the low-rank features remains unclear. 
% Interestingly, the work of \cite{zhang_residual_2025} further provides empirical evidence that a lower effective rank on the features learned by a variant of MAEs (achieved by re-weighting the residual connections in the architecture) leads to higher the linear probing accuracy. 
On the other hand, the work of \cite{meng_representation_2024} shows that on a pretrained MLM, inputs with masking empirically lead to features of lower effective rank than the same inputs without masking at all, with the difference of ranks being larger as the layer goes deeper. They prove that the features of the unmasked inputs must be rank-deficient at some layer, though it remains an open question to theoretically guarantee how small the rank can be. 

\section{Open Questions and Future Directions}\label{sec:open}

% Although this paper has reviewed many important research areas and topics regarding the training and adaptation of deep learning models, several other critical issues remain to be addressed. 

In this paper, we explored the role of low-rank structure in the training and adaptation of deep learning models. We reviewed recent advances in understanding low-rank dynamics during training and highlighted how implicit low-rank structures can reduce the number of required data samples for model learning. Additionally, we discussed methods such as LoRA that optimize large-scale models more efficiently by using low-rank approximations of the weight matrices. However, low-rank structures are also relevant to other important challenges in deep learning, such as improving model generalization and interpretability. In this section, we highlight some compelling open questions and propose promising future directions for research, with the aim of motivating more research in these crucial areas. 

One promising direction is to extend the theoretical study of low-rank structures from deep linear networks to deep nonlinear networks. For instance, in \Cref{sec:train_dyn}, we demonstrated that gradient updates in deep linear networks occur within a low-rank subspace. Although the Galore work in \Cref{sec:low-rank-training} provides some insights into the emergence of low-rank gradients in deep nonlinear networks, these findings depend on relatively restrictive assumptions, such as network reversibility, specific loss functions, and small batch sizes.  
Extending the analysis to more practical settings would provide better theoretical justification for employing low-rank training, \revision{yet the analysis in the nonlinear case is challenging for several reasons. For one, the exact low-rank structure in the gradient is lost, requiring greater care to ensure that the nonlinearity perturbs the gradient in such a way to preserve some notion of approximate low-rank structure. Additionally, certain algebraic properties that are taken for granted in the linear case, such as rotational symmetry and alignment between adjacent layers, are lost in the nonlinear setting, suggesting that it may be necessary to forgo these properties when making a more general argument. Finally, the manner in which singular values and subspaces ``pass'' through generally classes of typical nonlinearities used in deep learning is not well understood.}

Another promising direction lies in improving model interpretability through low-rank structures, which could offer simpler and more human-understandable explanations of the learned features or transformations in complex architectures, such as diffusion models and large language models. For example,  the interplay between low-rank structures and phenomena like neural collapse 
%\cite{papyan2020prevalence, % I think we only need one neural collapse citation
\cite{zhu2021geometric} in supervised learning offers fertile ground for advancing our understanding of these models' underlying mechanics. 
Additionally, leveraging low-rank approximations to improve generative AI efficiency has gained traction, particularly in the context of LLMs, where self-attention maps often exhibit low-rank characteristics. This observation has inspired techniques that approximate self-attention matrices to reduce computational complexity while maintaining performance. 

\newpage 

\section*{Acknowledgment}
QQ acknowledges NSF CAREER CCF-2143904, NSF IIS-2312842, NSF IIS-2402950, NSF CCSS-2532643, and Google Research Scholar Award. LB, CY, and SK acknowledge NSF CAREER CCF-1845076, NSF CCF-2331590. LB acknowledges the U-M Crosby Award, Intel Early Career award, and the IAS Charles Simonyi Endowment. PW acknowledges the
University of Macau SRG2025-00043-FST and UMDF-TISF-I/2026/013/FST, and the Macau Science and Technology Development Fund (FDCT) 0091/2025/ITP2. ZW acknowledges NSF CAREER 2145346, NSF DMS-02133861, NSF CCSS-2113904, and the NSF AI Institute for Foundations of Machine Learning (IFML). TD and BDH acknowledge NSF 2212457.

The authors would like to thank the anonymous reviewers for their excellent suggestions and for clarifying the organization and purpose of the paper, as well as Zekai Zhang for his insights into the relationship between ReLoRA and GaLore. We also disclose that we used GPT5 on \Cref{sec:lorarank} when revising our paper in response to the reviewers in order to compress the section, followed by several readings, verifications, edits, and refinements by three of our authors.

\printbibliography
%\clearpage
%\bibliographystyle{unsrt}
%\bibliography{Bibfiles/atlas,Bibfiles/modl,Bibfiles/tianjiao_editable,Bibfiles/can,Bibfiles/bdh}

%\newpage
\appendix
\section{Experiment Details}\label{supp:exp-details}

\begin{table}[H]
\centering
\caption{Summary of theoretical and empirical results in the paper. While the theoretical results are for deep linear networks, one can observe emergent low-rank structures empirically when the networks have non-linear activations. %\td{In the beginning of \Cref{supp:exp-details}, we said  \Cref{fig:svals} is for section \Cref{sec:LRtrain}, but it seems that \Cref{fig:svals} is only discussed in text \Cref{{sec:intro}} (intro) and \Cref{sec:background} (background). Ideas on quick fixes? (08/29)}
}
\label{tab:summary-emp-results}
\begin{tabular}{@{}llll@{}}
\toprule
Section & Non-linear & Theoretical & Empirical Result \\
 & Activation & Result &  \\ \midrule
\S \ref{sec:train_dyn} Low-rank weight matrix & No & \Cref{thm:lop} & \Cref{fig:svals} (left), \Cref{fig:svd} \\  & Yes &  & \Cref{fig:svals} (middle left, middle right, right) \\ \midrule
\S \ref{sec:lora} Low-rank adaptation & Yes & & \Cref{fig:svd_lora,fig:improving_lora,fig:low_rank_subspace} \\
\S \ref{sec:low-rank-training} Low-rank training & Yes & & \Cref{fig:vitfc2,fig:vitvalue} \\ \midrule
\S \ref{sec:mask_train} Low-rank activation & No & \Cref{cor:variational-shatten-p} & \Cref{fig:masked-training-vary-iterations} (a), \Cref{fig:masked-training-vary-dropout} (a, b) \\
& & \Cref{thm:linexp} & \\
 & Yes & & \Cref{fig:masked-training-vary-iterations} (b, c), \Cref{fig:masked-training-vary-dropout} (c, d) \\ \midrule
 &  & 
\end{tabular}
\end{table}

% \noindent\textbf{Experiments for Low-Rank structure at Every Iteration of Gradient Dynamics \Cref{\Cref{sec:LRobj}}}
\noindent\textbf{Low-Rank structure at Every Iteration of Gradient Dynamics ( \Cref{sec:LRtrain})}
For \Cref{fig:svals}, the linear network and MLP were trained on MNIST and the VGG-16 and ViT were trained on CIFAR-10.  We observe the change in singular values of the penultimate weight matrix for DLNs, MLPs, VGG, and ViT-B. We train the DLN starting from orthogonal initialization using SGD with learning rate $0.01$ and batch size $128$. All of the nonlinear networks were initialized using random uniform initialization. For MLP, we train using the same parameters as the DLN. For VGG, we use batch size $128$ with learning rate $0.05$, weight decay parameter $5 \times 10^{-4}$, momentum $0.9$ and step size scheduler with cosine annealing. Lastly, for ViT-B, we train using ADAM with a batch size of $512$ and learning rate $10^{-5}$ with cosine annealing. The ViT-B architecture is the same as the one presented by Dosovitskiy et al.~\cite{DBLP:conf/iclr/DosovitskiyB0WZ21}. \Cref{fig:vitfc2,fig:vitvalue}  follow the same training procedure for the ViT above.

\begin{table}[h]
\centering
\caption{Summary of setups of experiments in \Cref{sec:LRobj}.}
\label{tab:exp-setup-masked}
\begin{tabular}{@{}lcccc@{}}
\toprule
Experiment & Architecture & Dataset & Loss Function & Batch Size \\ \midrule
1 & Deep Linear Network & Synthetic & MSE & 1024 \\
2 & Multi-Layer Perceptron & Synthetic & MSE & 1024 \\
3 & ResNet & CIFAR-10 & Cross-Entropy & 768 \\ \bottomrule
\end{tabular}
\end{table}

For \Cref{fig:low_rank_subspace}, we fine-tune BERT \cite{devlin-etal-2019-bert} with deep overparameterized adaptation on the STS-B dataset \cite{stsb}. The pretrained BERT and T5 models (and tokenizer) are retrieved from the \texttt{transformers} library \cite{wolf2019huggingface} as \texttt{google-bert/bert-base-cased} and \texttt{google-t5/t5-base} respectively. We choose the best learning rate for each method from $\eta \in \{10^{-5}, 10^{-4}, 10^{-3}, 10^{-2}\}$ on STS-B with 1024 samples, and find that $\eta = 10^{-4}$ and $\alpha=8$ works best for vanilla LoRA, while we need two parameters for Deep LoRA's discrepant learning rate (see \cite{yaras2024compressible} for details): $\eta = 10^{-2}$ with $\gamma = 10^{-2}$ works best for Deep LoRA (although $\gamma$ can be chosen relatively freely). Vanilla LoRA is initialized in the same fashion as the original paper (i.e., $\bm W_k^{(2)}$ is initialized to all zeros, $\bm W_k^{(1)}$ is initialized to be Gaussian with standard deviation 1), whereas Deep LoRA is compressed from a full-width 3-layer factorization with orthogonal initialization of scale $\epsilon_l = 10^{-3}$ using the subspace corresponding to the smallest singular values of the gradient at initialization, inspired from \Cref{thm:lop}. We use a train batch size of 16, and train all models until convergence in train loss, and use the final model checkpoint for evaluation. These details and more experiments can be found in \cite{yaras2024compressible}.

\noindent\textbf{Low-Rank structure at the Global Minimum of Objective Functions (\Cref{sec:LRobj})} We design the experiments to investigate the impact of dropout on the singular values (or effective rank) of activations in neural networks, where we progressively extend theoretical predictions to practical scenarios. \Cref{tab:exp-setup-masked} summarizes the architecture, dataset, and loss function of each experiment. \textit{Synthetic Data Generation:} For Experiments 1 and 2, we generate synthetic datasets. Inputs \( x \in \mathbb{R}^{20} \) are drawn from a standard Gaussian distribution. The corresponding outputs \( y \in \mathbb{R}^{20} \) are generated via a linear transformation with a fixed matrix of rank 15. This design ensures outputs lie in a $15$-dimensional subspace of $\mathbb{R}^{20}$. \textit{Network Architecture and Loss Function:}
In Experiment 1, we utilize a deep linear network comprising three linear layers with dimensions 20-20-20, separated by dropout layers, and optimized using mean squared error (MSE) loss. Experiment 2 introduces nonlinearities into the network by inserting ReLU activations after each intermediate linear layer, except for the output layer, thus forming a multilayer perceptron (MLP). Experiment 3 adopts a practical setting using a ResNet architecture with residual blocks arranged in a 2-2-2-2 configuration, channel dimensions starting at 64 and doubling progressively (64, 128, 256, 512). This network is trained on the CIFAR-10 dataset, employing cross-entropy loss to reflect a realistic classification scenario. In all three experiments an SGD optimizer is used with a learning rate of $10^{-2}$ and no momentum whatsoever.

\section{Complexity for Forward and Backward Pass}
\label{app:gencomplexity}
Almost all of the computational cost (and a majority of the memory cost) in training deep networks involves dense linear layers of the form $\bm Y = \bm W \bm X$, where $\bm X \in \mathbb{R}^{d \times n}$ are the input activations (i.e., the batch of hidden feature vectors after passing through a nonlinearity from a previous layer), $\bm Y \in \mathbb{R}^{d \times n}$ are output activations, and $\bm W \in \mathbb{R}^{d \times d}$ is a weight matrix. 
During forward propagation, we receive input activations $\bm X$ from the previous layer, compute $\bm Y$ with $\Theta(nd^2)$ operations, which is passed to the subsequent layer, and store $\bm X$ for backpropagation. 
During backpropagation, we receive the output activation gradient $\nabla_{\bm Y} \mathcal{L}$ with respect to some loss $\mathcal{L}$ from the subsequent layer, which we use to compute the weight matrix gradient $\nabla_{\bm W} \mathcal{L} = \nabla_{\bm Y} \mathcal{L} \cdot \bm X^\top$ with $\Theta(nd^2)$ operations, and also the input activation gradient $\nabla_{\bm X} \mathcal{L} = \bm W^\top \cdot \nabla_{\bm Y}\mathcal{L}$ with $\Theta(nd^2)$ operations, with the latter passed to the preceding layer. 
Overall, for each layer we have $3 \times \Theta(nd^2)$ operations. From a memory standpoint, for each layer we need to store $\Theta(d^2)$ for the weight matrix (and its optimizer state which is $2\times$ the size of weight matrix for Adam), as well as $\Theta(nd)$ for input activations. 

There are some ``tricks'' that can be (and are) employed in practice to save memory, which slightly change the above setting. One example is ``gradient checkpointing'', where we only store input activations at certain layers (called checkpoints), which we can use to recompute a given layer's input activations for backpropagation. We are effectively trading memory for compute -- if we only store every other layer's input activations, we halve the memory requirements for these, while introducing another matrix multiplication for every other layer.

\section{Discussion on Low-Rank Gradient Dynamics for Deep Matrix Factorization}
\subsection{Example for $L=2$}
%\laura{This is great and/but we are working on making it even more clear. A figure or example might help, a little more precision might help.} 
In this section we provide the specific details of the proof of \Cref{thm:lop} for $L=2$. The general proof is found in \cite{yaras2024compressible}. The key is that we identify the $(d-2r)$-dimensional subspace where gradient updates {\em do not} occur. Note we will now repeatedly rely on the properties of our initialization, that weight matrices are orthogonal and scaled by $\epsilon_l$. The gradient with respect to $\bm W_1^{(0)}$ and $\bm W_2^{(0)}$ is thus given by
\begin{align*}
    \nabla_{\bm W_1} \mathcal{L}(\bm \Theta^{(0)}) &= \bm W_2^{(0)\top} (\bm W_2^{(0)} \bm W_1^{(0)} - \bm \Phi) = \epsilon_2^2 \bm W_1^{(0)} - \bm W_2^{(0)\top}\bm \Phi, 
    \; \mbox{and}\\
    \nabla_{\bm W_2} \mathcal{L}(\bm \Theta^{(0)}) &= (\bm W_2^{(0)} \bm W_1^{(0)} - \bm \Phi) \bm W_1^{(0)\top} = \epsilon_1^2 \bm W_2^{(0)} - \bm \Phi \bm W_1^{(0)\top}.
\end{align*}
From here, we can construct $(d-2r)$-dimensional left and right subspaces $\mathcal{U}_1$ and $\mathcal{V}_1$ of $\bm W_1^{(0)}$ that lie in the left and right nullspaces of $\nabla_{\bm W_1} \mathcal{L}(\bm \Theta^{(0)})$ respectively. 
Specifically, let $\mathcal{V}_1 = \mathcal{N}(\bm \Phi) \cap \mathcal{N}(\bm \Phi^\top \bm W_2^{(0)} \bm W_1^{(0)})$, where $\mathcal{N}$ denotes nullspace, and let $\mathcal{U}_1 = \sspan(\{\bm W_1^{(0)}\bm v, \forall \bm v \in \mathcal{V}_1\})$, which we will denote with shorthand $\mathcal{U}_1 = \{\bm W_1^{(0)} \mathcal{V}_1\}$. As a result, $\mbox{dim}(\mathcal{U}_1) = \mbox{dim}(\mathcal{V}_1) \geq d - 2r$. 

This proper choice of subspaces directly results in the decomposition of \Cref{thm:lop}. The way we defined it, there are are at least $d-2r$ pairs of vectors $(\bm u, \bm v) \in \mathcal{U}_1 \times \mathcal{V}_1$ with $\bm W_1^{(0)} \bm v = \epsilon_1 \bm u$, while $\nabla_{\bm W_1} \mathcal{L}(\bm \Theta^{(0)}) \bm v = \epsilon_2^2 \epsilon_1 \bm u$ and 
$$\nabla_{\bm W_1} \mathcal{L}(\bm \Theta^{(0)})^\top \bm u = \epsilon_2^2 \bm W_1^{(0)\top} \bm u - \Phi^\top \bm W_2^{(0)} \bm u = \epsilon_2^2 \epsilon_1 \bm v  - \Phi^\top \bm W_2^{(0)} (\epsilon_1 \bm W_1^{(0)}\bm v) = \epsilon_2^2 \epsilon_1 \bm v$$ 
by using the second intersected null space in the definition of $\mathcal{V}_1$. 
Therefore, \cref{eq:deep_mat_grad} gives us:
\begin{align*}
    \bm W_1^{(1)} \bm v &= \left((1-\eta\lambda) \bm W_1^{(0)} - \eta \nabla_{\bm W_1} \mathcal{L}(\bm \Theta^{(0)}) \right) \bm v = \epsilon_1 (1-\eta\lambda -\eta \epsilon_2^2) \bm u, \\
    \bm W_1^{(1)\top} \bm u &= \left((1-\eta\lambda) \bm W_1^{(0)\top} - \eta \nabla_{\bm W_1} \mathcal{L}(\bm \Theta^{(0)})^\top \right) \bm u = \epsilon_1 (1-\eta \lambda-\eta \epsilon_2^2) \bm v ,
\end{align*}
meaning that $(\bm u, \bm v) \in \mathcal{U}_1 \times \mathcal{V}_1$ remain singular vectors after the gradient step, but their singular value shrinks by a factor of $1-\eta\lambda-\epsilon^2$ (independent of $\bm \Phi$). Similar invariant subspaces emerge in the second layer -- in fact, the invariant subspaces align across the two layers. Taking $\mathcal{V}_2 = \mathcal{U}_1$, so that 
\begin{align*}
    \{\bm W_1^{(0)\top} \mathcal{V}_2\} = \{\bm W_1^{(0)\top} \mathcal{U}_1\} = \mathcal{V}_1 \subset \mathcal{N}(\bm \Phi)
\end{align*}
by construction of $\mathcal{V}_1$, we have that any $\bm v \in \mathcal{V}_2$ satisfies $\nabla_{\bm W_2} \mathcal{L}(\bm \Theta^{(0)}) \bm v = \epsilon_1^2 \bm W_2^{(0)} \bm v = \epsilon_1^2 \epsilon_2 \bm u$ defining $\mathcal{U}_2$ in the analogous way to the first step. While this argument is for $L=2$, the result can be generalized to arbitrary $L$, as can be seen in the supplementary material of \cite{yaras2024compressible}. We emphasize once again that we define the subspaces where training does not occur and subsequently determine the invariant training subspaces as their orthogonal complements.

\subsection{Discussion of Assumptions in \Cref{thm:lop}}

We highlight the key ingredients for the emergence of low-rank structure in the optimization of deep matrix factorizations. First, we require that the target data $\bm \Phi$ is low rank -- when $\bm \Phi = \bm Y \bm X^\top (\bm X \bm X^\top)^{-1}$, this is satisfied due to the columns of $\bm Y$ often lying in a much lower dimensional subspace compared to the input dimensionality/network width $d$, and one can show a similar result to \Cref{thm:lop} when $\bm Y$ has $k \ll d$ rows -- see the next \Cref{app:lop_classification} for more details. The second assumption we make is that the weight matrices are initialized to be scaled orthogonal matrices -- this is approximately satisfied when (1) the entries of $\bm W_l^{(0)}$ are drawn \emph{iid} from a Gaussian distribution, which is utilized in practice, and (2) $d$ is large, corresponding to the wide and overparameterized nature of modern deep networks.
%\laura{So all the papers assume this? All the related papers consider deep linear networks?}
To ensure that the final weights are low-rank, we require some form of regularization, whether that be implicit via small-scale initialization or explicit via weight-decay. Finally, we have assumed that the objective is of the form \Cref{eq:deep_mat_factor}, where we employ the squared error loss with respect to some target. In \Cref{sec:lorarank} this was generalized to the low-rank adaptation setting that with Deep LoRA, but only with algorithmic developments, and not in theory.

\subsection{Low-rank Gradient Dynamics with a Wide Target}\label{app:lop_classification}
In \Cref{sec:train_dyn}, we consider deep matrix factorization where the target matrix $\bm \Phi$ has $d$ rows. In a more standard regression or classification setting for $\bm \Phi = \bm Y \bm X^\top (\bm X \bm X^\top)^{-1}$, we really have $k$ rows, where $k$ is the label dimension (e.g., corresponding to $k$ classes). In most cases, $k \ll d$, i.e., we have many fewer output coordinates than the network width or input dimensionality. To address this case, suppose $\bm \Phi \in \mathbb{R}^{k \times d}$, where $r=k \ll d$ is the rank of $\bm \Phi$. We modify $\bm W_L$ in \Cref{eq:deep_mat_factor} to be $\bm W_L \in \mathbb{R}^{k \times d}$ to match dimensions. Then, a very similar result to \Cref{thm:lop} can be shown, with the only differences being that (1) the decomposition holds for all layers $l \in [L-1]$, i.e., excluding layer $L$, and (2) $\rho_l^{(t)}=\epsilon_l(1-\eta \lambda)^t$ \emph{exactly}. 
This result and its proof are found in \cite{yaras2023law}.

\begin{figure}[t]
\includegraphics[width=\linewidth]{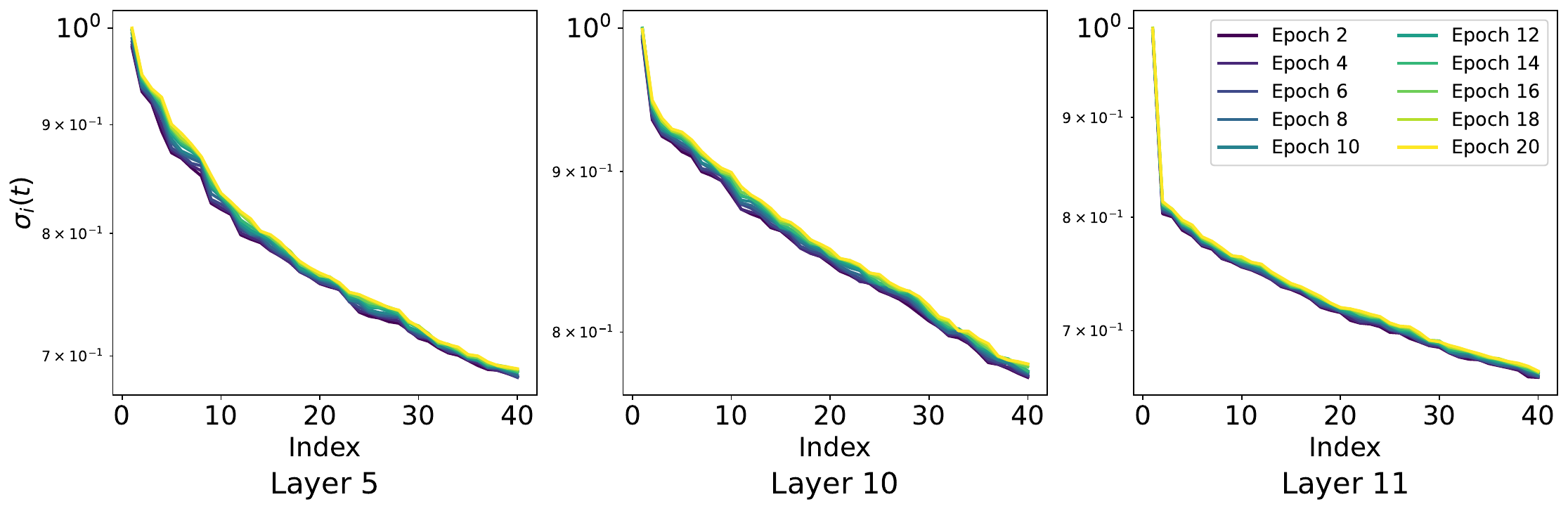}
     \caption{\textbf{Singular values across layers of the ViT of the value weight matrix.} Similar to the observations in \Cref{fig:vitfc2}, the value matrices corresponding to the Transformer of the shallower layers do not have a low-rank structure, and the low-rank structure emerges in the deeper layers.}
    \label{fig:vitvalue}
\end{figure}
\section{Low-Rank Training}\label{app:low-rank-training}
This section of the appendix presents additional results that complement the low-rank training analysis in \Cref{sec:low-rank-training}. Specifically, we include the following:
\begin{itemize}[leftmargin=*]
    \item \emph{An illustration of rank differences in value matrices of ViT.} The difference of low-rank structures across the layers of value matrices in ViT models are shown in \Cref{fig:vitvalue}. 
    \item \emph{Algorithm pipeline of GaLore.} Additionally,
we provide more details of GaLore method based upon Adam optimizer in \Cref{alg:low_rank_adam}. 
    \item \emph{Equivalence of GaLore and ReLoRA with full rank gradient initialization.} Finally, here we include proof of the equivalence of ReLoRA \cite{lialin2024relora} and GaLore \cite{zhao2024galore} detailed in the following.
\end{itemize}

\begin{theorem}[GD-GaLore and GD-ReLoRA are equivalent with full rank gradient initialization]

Initialize GD-GaLore from \cite{zhao2024galore} and GD-ReLoRA from \cite{lialin2024relora} with the same initial point. In particular, initialize the $\bm B$ matrix in ReLoRA with the SVD of full gradients (rather than random Gaussian) and fix it. Then, the weight updates for each algorithm are identical at every iteration. 
\end{theorem}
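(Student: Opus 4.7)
The plan is to write out each algorithm's one-step update in the pure gradient descent setting and verify, by induction on iteration $t$, that their effective weight matrices coincide. Let $\bm W^{(0)}$ be the shared initialization, $\bm G^{(t)} := \nabla_{\bm W} \mathcal{L}(\bm W^{(t)})$, and for concreteness consider the one-sided projection version of GaLore (the relevant variant since ReLoRA with a frozen $\bm B$ parameterizes only one side). Within a subspace window starting at iteration $t_0$, GaLore fixes $\bm P \in \R^{d \times r}$ as the top-$r$ left singular vectors of $\bm G^{(t_0)}$ and iterates
\begin{equation*}
\bm W^{(t+1)} = \bm W^{(t)} - \eta \bm P \bm P^\top \bm G^{(t)}.
\end{equation*}
For GD-ReLoRA with $\bm B = \bm P$ initialized from this same SVD and frozen, and with $\bm A^{(t_0)} = \bm 0$, the effective weight is $\bm W_{\text{eff}}^{(t)} = \bm W^{(t_0)} + \bm P \bm A^{(t)}$. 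By the chain rule, $\nabla_{\bm A} \mathcal{L}(\bm W_{\text{eff}}^{(t)}) = \bm P^\top \bm G^{(t)}$, so the update reads $\bm A^{(t+1)} = \bm A^{(t)} - \eta \bm P^\top \bm G^{(t)}$.

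The next step is the inductive comparison. The base case $t = t_0$ holds by hypothesis: both iterates equal $\bm W^{(t_0)}$ and both use the same $\bm P$. Assuming $\bm W^{(t)} = \bm W_{\text{eff}}^{(t)}$, the full gradients agree at iteration $t$, so the ReLoRA effective update becomes
\begin{equation*}
\bm W_{\text{eff}}^{(t+1)} = \bm W^{(t_0)} + \bm P \bm A^{(t+1)} = \bm W_{\text{eff}}^{(t)} - \eta \bm P \bm P^\top \bm G^{(t)},
\end{equation*}
which matches GaLore's update verbatim. Propagating this equality forward yields identical iterates throughout the window. At the window boundary, GaLore recomputes $\bm P$ from the SVD of the current gradient; ReLoRA merges $\bm B \bm A$ into $\bm W$ and reinitializes $\bm B$ from the SVD of the current gradient with $\bm A$ reset to $\bm 0$. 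Since the pre-boundary iterates agree, the gradients agree, so the newly chosen projections coincide and the inductive argument restarts for the next window.

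The hard part is mostly bookkeeping rather than analysis: one must state precisely that ReLoRA's merge produces an effective weight equal to GaLore's current iterate (so that no drift accumulates across window resets), and one must handle the ambiguity in the SVD when singular values coincide, by insisting that both algorithms use the same choice of top-$r$ left singular subspace. A secondary subtlety is clarifying which variant of GaLore is meant: the equivalence is cleanest for the one-sided version, and a two-sided analogue would require enlarging ReLoRA to a three-factor parameterization $\bm P \bm M \bm Q^\top$ with only the inner factor $\bm M$ trainable, which is not the form stated here.
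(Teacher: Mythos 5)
Your proof is correct and follows essentially the same route as the paper's: induction over iterations, using the chain rule to rewrite the ReLoRA update of $\bm A$ as a projected full-gradient step on the effective weight, matching it to GaLore's $\bm W^{(t+1)} = \bm W^{(t)} - \eta\,\bm P \bm P^\top \bm G^{(t)}$ within each window, and observing that at window boundaries both algorithms rebuild the projector from the SVD of the same gradient since the iterates (and hence gradients) agree. Your added remarks on SVD non-uniqueness and on the one-sided versus two-sided projection variant are sensible clarifications but do not change the argument, which matches the paper's proof.
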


\begin{proof}
We prove this by induction. We use the notation from the GaLore algorithm and denote quantities related to GD-ReLoRA with a prime (e.g., $\bm W^{(t)'}$).

\subsubsection*{Induction Hypothesis}
Assume that GD-GaLore and GD-ReLoRA start with the same initialization and produce the same result at iteration $t-1$:
\begin{align*}
    \begin{cases}
    \bm W^{(t-1)} = \bm W^{(t-1)'}  \\
    \bm P^{(t-1)} = \bm B^{(t-1)}.
    \end{cases}
\end{align*}
where \begin{align*}
    \bm W^{(k)'} &:= \bm W^{(0)}+\bm B^{(0)} \bm A^{(0)}+\bm B^{(T)}\bm A^{(T)}+\dots+\bm B^{([(k-1)/T]T)} \bm A^{([(k-1)/T]T)}+\bm B^{(k)}\bm A^{(k)}\\ 
    &:= \bm W^{([(k-1))/T]T)} + \bm B^{(k)}\bm A^{(k)}
\end{align*} is the $k$-th iterate of GD-ReLoRA.

\subsubsection*{Case 1: \( t \mod T \neq 0 \)}

For GD-GaLore, using the update rule:
\begin{align*}
    \bm G^{(t)} &= \nabla_{\bm W} \phi(\bm W^{(t-1)}), \bm P^{(t)} = \bm P^{(t-1)}, \\
    \bm R^{(t)} &= \bm P^{(t)\top} \bm G^{(t)}, \tilde{\bm G}_t = \bm P^{(t)} \bm G^{(t)}, \\
    \bm W^{(t)} &= \bm W^{(t-1)} - \eta \tilde{\bm G}_t = \bm W^{(t-1)} - \eta \bm P^{(t)}\bm P^{(t)\top} \nabla_W \phi(\bm W^{(t-1)}).
\end{align*}

\noindent For GD-ReLoRA, applying the update rule:
\begin{align*}
    \bm A^{(t)} &= \bm A^{(t-1)} - \nabla_{\bm A} \phi(\bm W^{([(t-2)/T]T)}+\bm B^{(t-1)}\bm A^{(t-1)}), \bm B^{(t)} = \bm B^{(t-1)}, \\
    \bm W^{(t)'} &= \bm W^{(t-1)'} + \bm B^{(t)} \bm A^{(t)} - \bm B^{(t-1)} \bm A^{(t-1)}, \\
    &= \bm W^{(t-1)'} - \eta \bm B^{(t)} \bm B^{(t)\top} \nabla_{\bm W} \phi(\bm W^{(t-1)'}).
\end{align*} where we leverage the chain rule $\nabla_{\bm A} \phi(\bm W^{([(t-2)/T]T)}+\bm B^{(t-1)}\bm A^{(t-1)}) = \bm B^{(t-1)\top}\nabla_{\bm W} \phi(\bm W^{(t-1)'})$.

\noindent By the induction hypothesis:
\begin{align*}
    \bm W^{(t)'} &= \bm W^{(t-1)'} - \eta \bm B^{(t)} \bm B^{(t)\top} \nabla_{\bm W} \phi(\bm W^{(t-1)'}) \\
    &= \bm W^{(t-1)} - \eta \bm P^{(t)}\bm P^{(t)\top} \nabla_{\bm W} \phi(\bm W^{(t-1)}) = \bm W^{(t)}.
\end{align*}

Thus, the claim holds for \( t \mod T \neq 0 \).

\subsubsection*{Case 2: \( t \mod T = 0 \)}

Now we consider the case where we need to reinitialize \( \bm P^{(t)} \) and \( \bm B^{(t)}, \bm A^{(t)} \).

\noindent For GD-GaLore, we use the SVD of the full gradient as projection:
\begin{align*}
    \bm G^{(t)} &= \nabla_{\bm W} \phi(\bm W^{(t-1)}), [\bm U,\bm \Sigma,\bm V] = \text{SVD}(\bm G^{(t)}), \\
    \bm P^{(t)} &= \bm U[:, :r], \bm W^{(t)} = \bm W^{(t-1)} - \eta \bm P^{(t)} \bm P^{(t)\top} \bm G^{(t)}.
\end{align*}

\noindent For GD-ReLoRA, we reinitialize \( \bm B^{(t)} \) using the SVD of the gradient and set \( \bm A^{(t)} \) to zero:
\begin{align*}
    \bm G^{(t)'} &= \nabla_{\bm W} \phi(\bm W^{(t-1)'}), [\bm U', \bm \Sigma', \bm V'] = \text{SVD}(\bm G^{(t)'}), \\
    \bm B^{(t)} &= \bm U'[:, :r], \bm A^{(t)\text{init}} = 0, \bm W^{(t)'\text{init}} = \bm W^{(t-1)'} + \bm B^{(t)} \bm A^{(t)\text{init}}, \\
    \bm A^{(t)} &= \bm A^{(t)\text{init}} - \eta \nabla_{\bm A} \phi(\bm W^{(t);\text{init}}), \\
    \bm W^{(t)'} &= \bm W^{(t)'\text{init}} + \bm B^{(t)} \bm A^{(t)} - \bm B^{(t)} \bm A^{(t)\text{init}}.\\
    & = \bm W^{(t-1)'} - \eta \bm B^{(t)} \bm B^{(t)\top} \nabla_{\bm W} \phi(\bm W^{(t-1)'}).
\end{align*}

Applying the induction hypothesis:
\begin{align*}
    \bm W^{(t)'} &= \bm W^{(t-1)'} - \eta \bm B^{(t)} \bm B^{(t)\top} \nabla_{\bm W} \phi(\bm W^{(t-1)'})\\
    & = \bm W^{(t-1)} - \eta \bm P^{(t)} \bm P^{(t)\top} \nabla_{\bm W} \phi(\bm W^{(t-1)}) = \bm W^{(t)}.
\end{align*}

Since Case 2 also serves as the base case, the proof is complete.
\end{proof}

\subsubsection*{Remarks}
\begin{itemize}
    \item This theorem shows that, due to the chain rule, we do not need to compute the full gradient first; instead, we can update the low-rank approximation directly.
    \item However, we still need to modify GD-ReLoRA by reinitializing \( B_t \) with the SVD of the full gradient every \( T \) iterations. This is significantly cheaper than computing the full gradient at every step.
    \item The proof extends naturally to Adam-GaLore and Adam-ReLoRA, leading to the same conclusion.
\end{itemize}

\begin{algorithm}[tb]
   \caption{Adam with GaLore (reprinted from \cite{zhao2024galore})}
   \label{alg:low_rank_adam}
 \begin{algorithmic}
   \STATE {\bfseries Input:} A layer weight matrix $\bm W \in \mathbb{R}^{m \times n}$ with $m \leq n$. Step size $\eta$, scale factor $\alpha$, decay rates $\beta_1, \beta_2$, rank $r$, subspace change frequency $T$.
   \STATE Initialize first-order moment $\bm M_0 \in \mathbb{R}^{n \times r} \gets 0$
   \STATE Initialize second-order moment $\bm V_0 \in \mathbb{R}^{n \times r} \gets 0$
   \STATE Initialize step $t \gets 0$
   \REPEAT
   \STATE $\bm G_t \in \mathbb{R}^{m \times n} \gets - \nabla_{\bm W} \phi_t(\bm W_t)$  \hfill \COMMENT{Should be $\nabla_{\bm W} \phi_t(\bm W_{t-1})$?}
   \IF{$t \bmod T = 0$}
   \STATE $\bm U, \bm S, \bm V \gets \text{SVD}(\bm G_t)$
   \STATE $\bm P_t \gets \bm U[:, :r]$ \hfill \COMMENT{Initialize left projector as $m \leq n$}
   \ELSE
   \STATE $\bm P_t \gets \bm P_{t-1}$ \hfill \COMMENT{Reuse the previous projector}
   \ENDIF
   \STATE $\bm R_t \gets \bm P_{t}^{\top} \bm G_t$ \hfill \COMMENT{Project gradient into compact space}
   \\\hrulefill
   % Sub-algorithm starts here
   \STATE {\bfseries $\text{update}(\bm R_t)$ by Adam}
%   \bindent
   \hspace{\algorithmicindent} \STATE $\bm M_t \gets \beta_1 \cdot \bm M_{t-1} + (1 - \beta_1) \cdot \bm R_t$ 
   \hspace{\algorithmicindent} \STATE $\bm V_t \gets \beta_2 \cdot \bm V_{t-1} + (1 - \beta_2) \cdot \bm R_t^2$ 
   \hspace{\algorithmicindent} \STATE $\bm M_t \gets \bm M_t / (1 - \beta_1^t)$
   \hspace{\algorithmicindent} \STATE $\bm V_t \gets \bm V_t / (1 - \beta_2^t)$ 
   \hspace{\algorithmicindent} \STATE $\bm N_t \gets \bm M_t / (\sqrt{\bm V_t} + \epsilon)$
 %  \eindent
   % Sub-algorithm ends here
   \\\hrulefill
   \STATE $\tilde{\bm G}_t \gets \alpha \cdot P \bm N_t$ \hfill \COMMENT{Project back to original space}
   \STATE $\bm W_t \gets \bm W_{t-1} + \eta \cdot \tilde{\bm{G}}_t$
   % \STATE $W_{t+1} \gets W_{t} + \eta \cdot \tilde G_t$
   \STATE $t \gets t + 1$
   \UNTIL{convergence criteria met}
   \RETURN $\bm W_t$
 \end{algorithmic}
\end{algorithm}

\section{Additional Figures}\label{app:additional-figures}

Here, we provide 2D versions of some figures from the main body. Specifically, \Cref{fig:svals_2d,fig:svd_2d,fig:svd_lora_2d,fig:low_rank_subspace_2d} correspond to \Cref{fig:svals,fig:svd,fig:svd_lora,fig:low_rank_subspace}, respectively.

\begin{figure}[h]
\includegraphics[width=\linewidth]{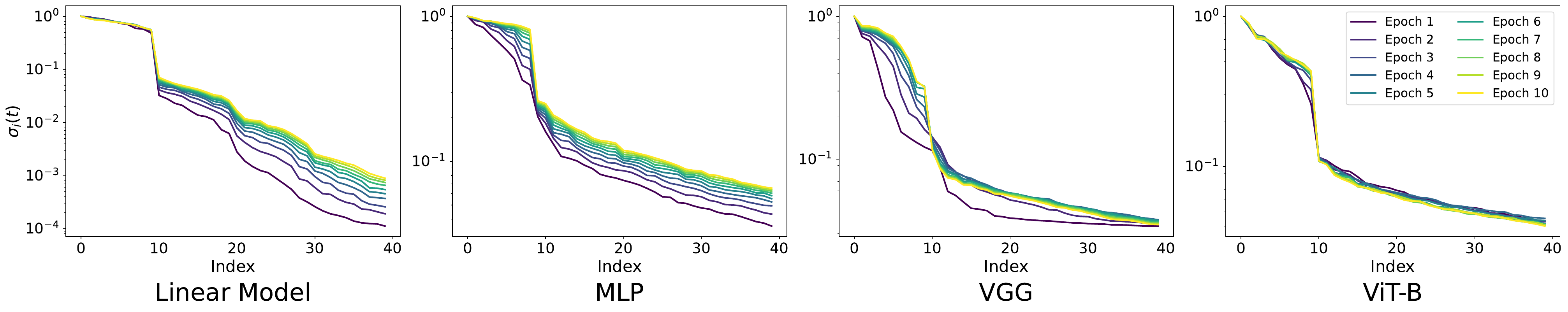}
     \caption{\textbf{Prevalence of low-rank weight updates in various deep networks.} Each plot visualizes the
     singular values of the weight updates from initialization for the penultimate layer weight matrix for different types of network architectures: deep linear network (DLN), multi-layer perception (MLP), VGG, and ViT-B. % The first two networks (i.e., DLN and MLP) are trained on MNIST, while the latter (i.e., VGG and ViT-B) are trained on CIFAR-10. 
     We show them with two views: one in 3D emphasizes the evolution of the singular values over iteration, and the other using color shows the singular values on log-axis giving a clearer picture of the low-rank structure in each setting.
     The linear network is trained on MNIST with mean square error loss. The MLP is trained on MNIST with cross-entropy loss.
     The VGG and ViT-B networks are both trained on CIFAR-10 with cross-entropy loss. 
     The result shows a prevalent phenomenon across linear and nonlinear networks -- gradient descent only updates a small portion of the singular values, while the others remain small and almost unchanged. }
    \label{fig:svals_2d}
\end{figure}

\begin{figure}[h]
    \centering
    \includegraphics[width=\linewidth]{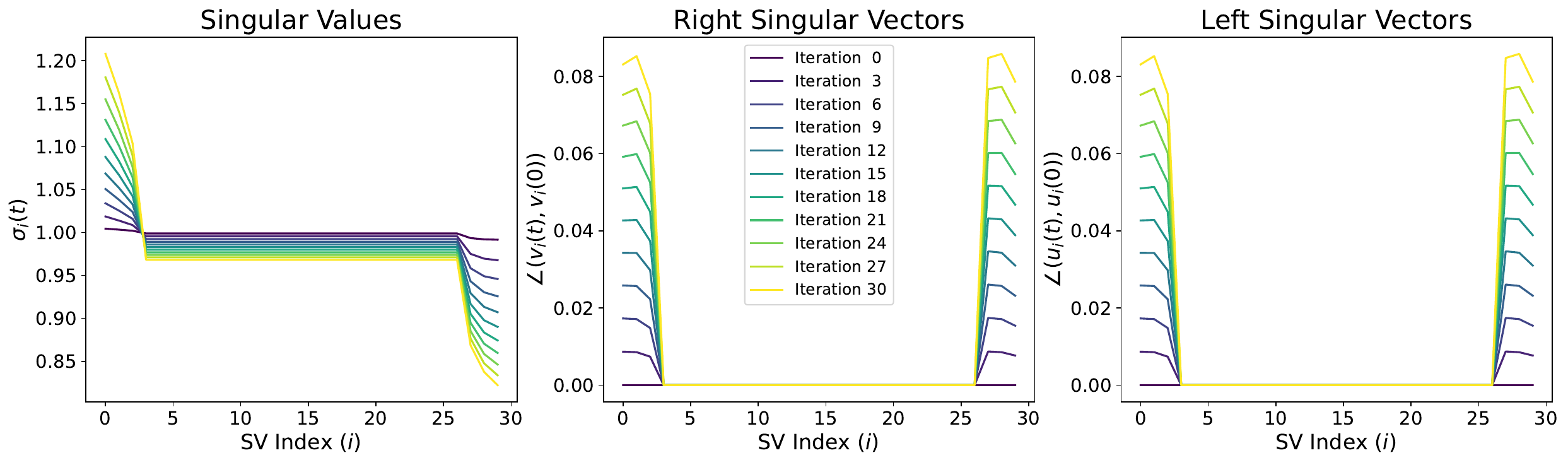}
    \caption{\textbf{Evolution of SVD of weight matrices for deep matrix factorization.}  We visualize the SVD dynamics of the first layer weight matrix of an $L=3$ layer deep matrix factorization for a random matrix with $d = 30$, $r=3$, $\epsilon_l = 1$ throughout GD without weight decay ($\lambda = 0$). \textit{Left}: Magnitude of the $i$-th singular value $\sigma_i(t)$ at iteration $t$. \textit{Middle}: Angle $\angle(\bm v_i(t), \bm v_i(0))$ between the $i$-th right singular vector at iteration $t$ and initialization. \textit{Right}: Angle $\angle(\bm u_i(t), \bm u_i(0))$ between the $i$-th left singular vector at iteration $t$ and initialization.}
    \label{fig:svd_2d}
\end{figure}

\begin{figure}[h]
    \centering
    \includegraphics[width=\linewidth]{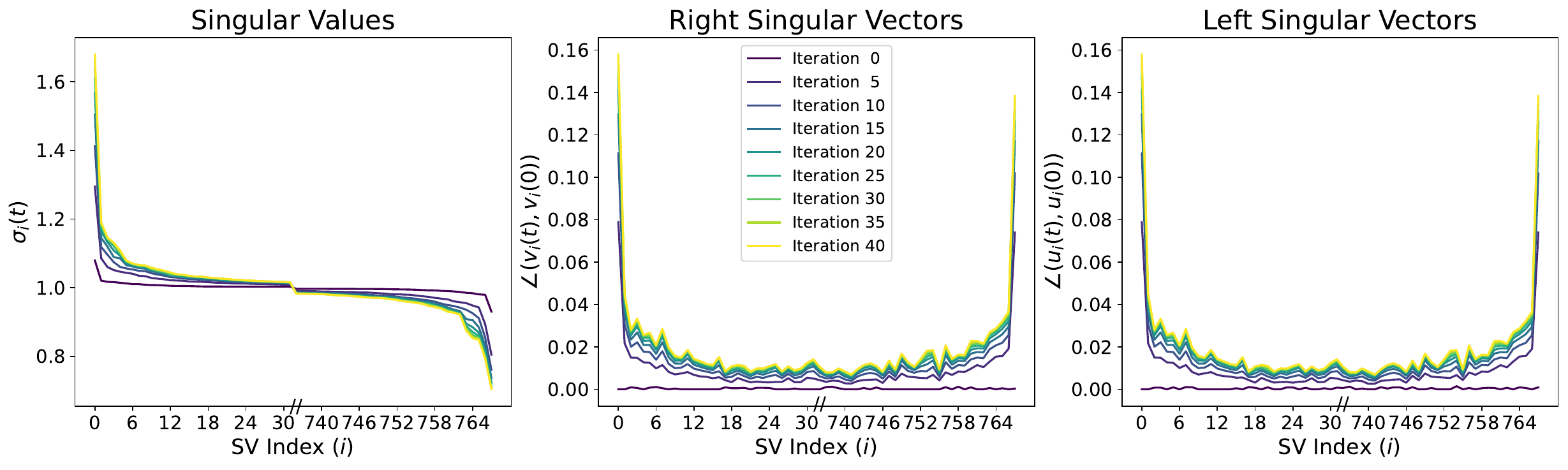}
    \caption{\textbf{Evolution of SVD of weight matrices for deep low-rank adaptation.} We visualize the SVD dynamics of an $L=3$ layer deep matrix factorization's end-to-end product employed for fine-tuning the 11th layer value matrix in BERT, with $d = 768$, $\epsilon_l = 1$ throughout Adam. \textit{Left}: Magnitude of the $i$-th singular value $\sigma_i(t)$ at iteration $t$. \textit{Middle}: Angle $\angle(\bm v_i(t), \bm v_i(0))$ between the $i$-th right singular vector at iteration $t$ and initialization. \textit{Right}: Angle $\angle(\bm u_i(t), \bm u_i(0))$ between the $i$-th left singular vector at iteration $t$ and initialization.}
    \label{fig:svd_lora_2d}
\end{figure}

\begin{figure*}[h]
\begin{center}
\includegraphics[width=\linewidth]{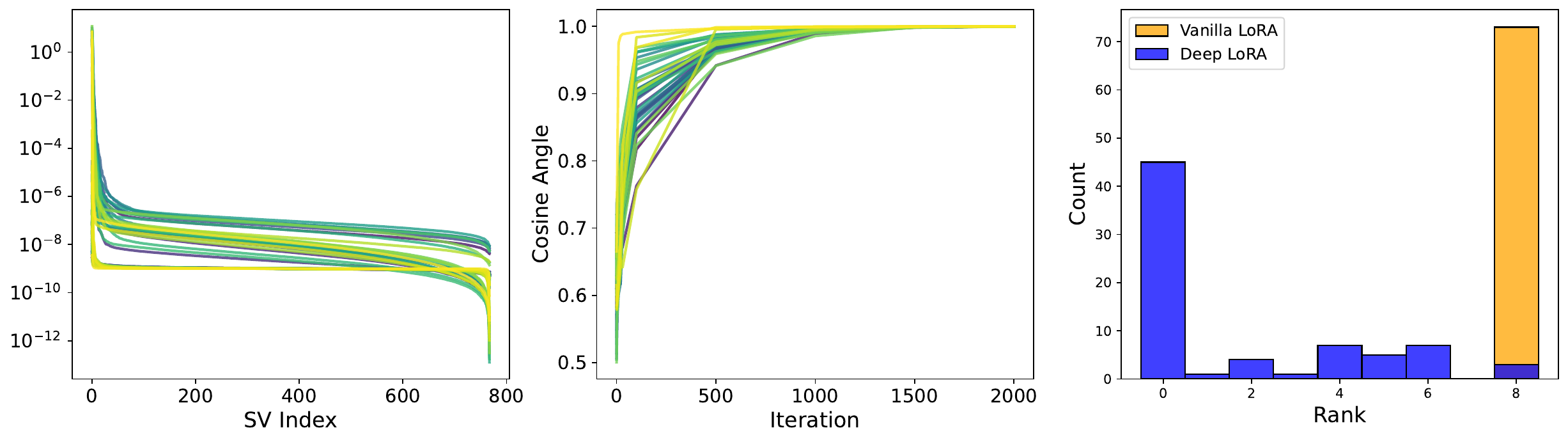}
\end{center}
% \vspace{-0.2in}
\caption{\textbf{Invariant low-dimensional subspaces in deep overparameterized adaptation of language models.} Fine-tuning BERT with deep overparameterized adaptation on the STS-B dataset. \textit{Left}: \textbf{Singular value} spectra across all adapted layers at the end of fine-tuning. \textit{Middle}: \textbf{Alignment of subspaces} formed by the top 8 right singular vectors between the current adapted weights and the final adapted weights throughout training. \textit{Right}: \textbf{Deep LoRA finds lower ranked adapters} as opposed to LoRA, allowing it to have better test loss in the sample-starved regime.}
\label{fig:low_rank_subspace_2d}
\end{figure*}

\end{document}